\documentclass[conference]{IEEEtran}
\IEEEoverridecommandlockouts

\usepackage{times}
\usepackage{enumitem}
\usepackage[hidelinks]{hyperref}

\usepackage{amsmath}
\usepackage{graphicx}
\usepackage{booktabs}

\usepackage{caption}
\usepackage{subcaption}
\usepackage[dvipsnames]{xcolor}
\usepackage{siunitx}
\usepackage{adjustbox}

\usepackage{listings}
\usepackage{float}
\usepackage{placeins}
\usepackage{amssymb}
\usepackage{gensymb}
\usepackage{amsthm}
\usepackage{makecell}

\newtheorem{theorem}{Theorem}
\newtheorem{lemma}{Lemma}
\newtheorem{definition}{Definition}

\definecolor{CodeBlue}{rgb}{0.25,0.5,0.85}
\definecolor{CodeGreen}{rgb}{0.15,0.6,0.25}
\definecolor{CodePurple}{rgb}{0.5,0,0.35}
\definecolor{CodeString}{rgb}{0.6,0.1,0.1}
\definecolor{CodeGray}{rgb}{0.5,0.5,0.5}
\definecolor{CodeBackground}{rgb}{0.98,0.98,0.98}

\lstdefinestyle{HighImpactPython}{
    language=Python,
    backgroundcolor=\color{CodeBackground},
    commentstyle=\itshape\color{CodeGreen},
    keywordstyle=\bfseries\color{CodeBlue},
    numberstyle=\tiny\color{CodeGray},
    stringstyle=\color{CodeString},
    basicstyle=\ttfamily\footnotesize,
    breakatwhitespace=false,
    breaklines=true,
    captionpos=b,
    keepspaces=true,
    numbers=left,
    numbersep=8pt,
    showspaces=false,
    showstringspaces=false,
    showtabs=false,
    tabsize=4,
    frame=single,
    rulecolor=\color{CodeGray!50},
    emph={MarsHiRISE, DataLoader, HiRISEGeoSampler, pathlib, logger},
    emphstyle=\color{CodePurple}\bfseries
}

\setlist[itemize]{leftmargin=1.15em,itemsep=0.05em,topsep=0em}

\def\BibTeX{{\rm B\kern-.05em{\sc i\kern-.025em b}\kern-.08em
    T\kern-.1667em\lower.7ex\hbox{E}\kern-.125emX}}
\begin{document}

\title{MarsFM: Shading-Regularized Flow Matching for Martian Relief Estimation
}

\author{\IEEEauthorblockN{Marius~F.~R.~Juston}
\IEEEauthorblockA{\textit{Department of Industrial and Enterprise Systems Engineering} \\
\textit{The Grainger College of Engineering} \\
\textit{University of Illinois Urbana-Champaign, Urbana}\\
IL 61801-3080, USA \\
mjuston2@illinois.edu}
}

\maketitle
\begin{abstract}
We present \textbf{MarsFM}, an image-conditioned latent flow-matching model for local Martian relief estimation from single-band HiRISE RED orthoimagery. The method combines a pretrained generative prior with stereo-derived geometric supervision and a differentiable Lunar--Lambert shading objective. Relief, normal, gradient, curvature, and ordinal terms constrain complementary aspects of terrain structure, while a positive-affine-invariant image comparison constrains rendered appearance. An evaluation comprising 2024 gathered patch records per integration-step count yields mean affine-aligned RMSE between 0.0935 and 0.0957 in normalized signed-log relief space for one to twenty Euler steps. These scores measure agreement with VAE-reconstructed references on positive-reference support. Their narrow range supports low-step inference under this protocol. Spatial, differential, and spectral diagnostics show broad terrain correspondence alongside smoothing, amplitude compression, and boundary mismatch. MarsFM provides a framework for combining learned terrain priors with image-based constraints; establishing improved physical terrain resolution requires matched baselines and independent high-resolution reference data. Data: \url{https://huggingface.co/datasets/SuperComputer/mars_hirise_dtm_processed-6aa9b66ba461e07f}; code: \url{https://github.com/Marius-Juston/MarsRecon}.

\end{abstract}
\begin{IEEEkeywords}
Digital Elevation Models (DEM), Flow Matching, HiRISE, Mars Reconnaissance Orbiter, Photoclinometry.
\end{IEEEkeywords}
\section{Introduction}

Quantitative analysis of the Martian surface---from the migration of transverse aeolian ridges (TARs)~\cite{Athmania2015AutomatedData} to slope activity, polar deformation, and candidate mud-volcanic landforms---requires topography at scales appropriate to the features under study. The Mars Orbiter Laser Altimeter (MOLA)~\cite{Smith2001MarsMars} provides global elevation context, but its global grids do not resolve many local landforms visible in orbital imagery. Higher-resolution topography comes from orbital photogrammetry~\cite{Jiang2017FusionImagery}, including the High-Resolution Imaging Science Experiment (HiRISE) and Context Camera (CTX) onboard the Mars Reconnaissance Orbiter (MRO)~\cite{MROScience,Malin2007ContextOrbiter,McEwen2007MarsHiRISE}.

HiRISE acquires RED, Blue--Green, and Near-Infrared pushbroom imagery, with RED products commonly posted at 25--50\,cm/pixel~\cite{McEwen2007MarsHiRISE}. Suitable off-nadir acquisitions permit stereo reconstruction through pipelines such as the NASA Ames Stereo Pipeline and SOCET Set~\cite{Beyer2018TheData,Hepburn2019CreatingPipeline}. Reconstruction requires overlapping views, suitable acquisition geometry, and reliable image correspondence. A single-image estimator offers a complementary source of local terrain information where a usable stereo terrain product is unavailable. Appendices~\ref{sec:dataset} and~\ref{sec:dtm_dataset} characterize the imagery and terrain catalogs separately.

Machine learning offers a complementary route, although single-image elevation prediction remains an ill-posed inverse problem~\cite{Chung2024DiffusionProblems,Rout2023SolvingModels}. Mars3DNet and MADNet demonstrate learned planetary reconstruction~\cite{Chen2021Mars3DNet:Images,Tao2021MADNetLearning}. Generative approaches repurpose image priors for depth prediction~\cite{Saxena2023MonocularModels,Ke2025Marigold:Analysis}, and flow matching enables short integration paths at inference~\cite{Lipman2022FlowModeling,Liu2022FlowFlow,Gui2024DepthFM:Matching}. Diffusion-based Martian terrain super-resolution has also been investigated~\cite{Cao2026MartianDiffusion}. Our focus is the combination of a few-step, image-conditioned flow model with terrain supervision and differentiable shading regularization; improved sharpness, accuracy, and uncertainty calibration must each be evaluated separately.

A central difficulty is the difference between image sampling and terrain posting: a representative HiRISE pair provides a $\sim0.25$\,m image grid and a $\sim1$\,m DTM grid~\cite{McEwen2007MarsHiRISE,Kirk2009UltrahighSites}. Shading can supply an additional constraint, although albedo and shadows also generate image structure. Our contributions are: \textbf{(i)} an adaptation of DepthFM~\cite{Gui2024DepthFM:Matching} to local Martian relief, using a frozen Stable-Diffusion 1.5 VAE~\cite{Rombach2021High-ResolutionModels}; \textbf{(ii)} a training objective coupling stereo-derived relief and differential geometry with Lunar--Lambert shading; and \textbf{(iii)} a preprocessing and diagnostic framework covering patch sampling, relief normalization, illumination fitting, and exposure invariance. Evaluation across six integration-step counts characterizes predictive agreement and inference sensitivity. The appendices develop the mathematics and examine the method's spatial, geometric, and photometric behavior.

\section{Method}
\label{sec:method}

\subsection{Latent Flow Matching}
\label{sec:method_fm}

Conditional Flow Matching (CFM)~\cite{Lipman2022FlowModeling,Liu2022FlowFlow,Tong2023ImprovingTransport} learns a velocity field that transports a source representation toward a target along prescribed training paths. Following DepthFM~\cite{Gui2024DepthFM:Matching}, a frozen Stable-Diffusion 1.5 VAE~\cite{Rombach2021High-ResolutionModels} encodes the image and normalized relief. HiRISE RED is replicated across three image channels. Let $\mathbf z_{\mathrm{img}}=\mathcal E(\mathbf I)$ and $\mathbf z_1=\mathcal E(\mathbf d)$. The source $\mathbf z_0$ is the image latent, optionally perturbed using the checkpoint's noising schedule. Training samples the path
\begin{equation}
\mathbf z_t=(1-t)\mathbf z_0+t\mathbf z_1+\sigma_{\min}\boldsymbol\epsilon,
\qquad \boldsymbol\epsilon\sim\mathcal N(0,I),
\end{equation}
and supervises the velocity with $\mathbf z_1-\mathbf z_0$. Time is sampled from a logit-normal distribution and $\sigma_{\min}=10^{-4}$. The UNet receives the current state and clean image latent by channel concatenation ($4+4=8$ channels), with frozen null-text conditioning. At inference, Euler integration advances the learned velocity field and the VAE decodes its endpoint into relief. Appendix~\ref{sec:fm_loss} gives the source perturbation and endpoint formulation. The straight conditional training path does not require learned inference trajectories to be straight.

\subsection{Multi-Objective Loss}
\label{sec:method_loss}

The stereo reference (denoted GT for consistency with the figures) has coarser posting than the source image. Matching an interpolated reference alone supplies no independent evidence for finer-scale geometry. We combine three classes of supervision, with full formulations and diagnostic panels in Appendix~\ref{sec:losses}. Their intended roles motivate the design; the contribution of each term requires a controlled ablation.

\textbf{Relief and differential-geometry supervision.} Four terms constrain the reference geometry: a Huber relief loss $\mathcal{L}_{\text{huber}}$~\cite{Huber1964RobustParameter} with $\delta=0.1$; a cosine loss $\mathcal{L}_{\text{norm}}$ on Sobel-derived normals~\cite{Fan2021Three-Filters-to-Normal:Estimator}; a multi-scale $\ell_1$ gradient loss $\mathcal{L}_{\text{grad}}$~\cite{Li2018MegaDepth:Photos} at $s\in\{1,2,4\}$; and a five-point Laplacian loss $\mathcal{L}_{\text{lap}}$. Huber and latent velocity matching constrain relief values, while the differential terms constrain orientation and spatial variation. Constant offsets vanish under differentiation; arbitrary rescaling does not.

\textbf{Robust and ordinal supervision.} Stereo references can contain registration errors, shadow-related gaps, and triangulation seams~\cite{Beyer2018TheData,Kirk2009UltrahighSites}. Alongside Huber regression, we use an ordinal loss $\mathcal{L}_{\text{ord}}$ inspired by relative-depth supervision~\cite{ChenSingle-ImageWild,Xian2018MonocularSupervision}. The reported configuration samples $10^4$ confidence-weighted pixel pairs per image and excludes reference differences below $\tau=0.02$. This threshold selects pairs; it is not a positive prediction margin. Ordering can tolerate errors that preserve the sign of a height difference, but it remains vulnerable to artifacts that reverse that sign.

\textbf{Image-based shading supervision.} The photoclinometric loss $\mathcal{L}_{\text{photo}}$ compares a render of the predicted relief with the co-registered orthoimage on the processed image grid. A Lunar--Lambert surrogate~\cite{McEwen1991PhotometricApplications,mcewen1996reflectance,Kirk2003High-resolutionImages} blends Lambertian and Lommel--Seeliger responses using a learned scalar. This compact model is motivated by planetary photometry~\cite{Hapke2012TheorySpectroscopy}, but does not model spatially varying albedo, cast shadows, or the complete bidirectional reflectance function. Its contribution is an additional image-derived constraint, whose ability to improve physical terrain resolution remains to be established.


The total objective is $\mathcal{L}_{\text{total}}=\sum_k\lambda_k\mathcal{L}_k$. All nonzero losses are active from the first optimization step. Learning-rate warmup lasts 1000 steps and does not delay their activation. The focal-frequency term has zero weight in this experiment; its formulation is provided for completeness in Appendix~\ref{sec:ffl_loss}.

\section{Implementation}
\label{sec:impl}

\paragraph{Dataset} A TorchGeo-compatible pipeline reads PDS metadata, applies radiometric scale and offset, and indexes imagery and terrain in a Mars IAU 2000 geographic CRS. The reported corpus characterization comprises approximately 400 stereo-derived products within $-120^{\circ}\le\lambda\le150^{\circ}$ and $|\varphi|\le30^{\circ}$, with approximately $1.6\times10^9$ valid pixel pairs. Training, validation, and testing use an 80/10/10 geographic split along longitude. Patches must overlap at least 50\% of the product's convex footprint approximation; observed-pixel validity is tracked separately. The experimental preprocessing removes a fitted plane, applies a signed logarithm with a residual-magnitude 98th-percentile scale, and clips to $[-1,1]$. GMRF infilling supplies finite VAE inputs while the observation mask distinguishes measured and imputed values. Appendix~\ref{sec:dtm_dataset} provides the full preprocessing and scale-selection analysis.

\paragraph{Training} The UNet velocity head is initialized from the public DepthFM checkpoint~\cite{Gui2024DepthFM:Matching} and fine-tuned for 10K steps with AdamW (learning rate $2\times10^{-4}$, weight decay 0.01, cosine decay, and 1000 warmup steps), bf16 precision, gradient checkpointing, and EMA decay 0.999. Training uses two NVIDIA RTX 6000 Ada GPUs, four samples per device, and four accumulation steps, giving an effective batch of 32. The run takes approximately 27 hours including validation. Each $0.018^{\circ}$ geographic window is resized to $512\times512$; its equatorial width is approximately 1.067\,km, corresponding to 2.08\,m/pixel, with longitude spacing varying by latitude. Loss weights are $\lambda_{\mathrm{FM}}=\lambda_{\mathrm{huber}}=\lambda_{\mathrm{norm}}=\lambda_{\mathrm{lap}}=\lambda_{\mathrm{ord}}=1$, $\lambda_{\mathrm{grad}}=\lambda_{\mathrm{photo}}=2$, and $\lambda_{\mathrm{FFL}}=0$.

\section{Experimental Results}
\label{sec:results}

\paragraph{Protocol} The primary evaluation uses checkpoint 9476, selected by validation photo-consistency, and 2024 gathered patch records at each of $\{1,2,4,8,10,20\}$ Euler steps. Each prediction uses one realization. Predictions and references are the first channels of the decoded predicted and encoded-target latents, respectively; the reference therefore includes VAE reconstruction. Scores follow unconstrained least-squares scale/shift alignment in normalized signed-log space~\cite{Eigen2014DepthNetwork,Ke2025Marigold:Analysis}. The evaluator selects finite pixels with positive reference values and does not receive the observation-confidence mask. It can therefore include positive infilled values and excludes nonpositive relief. These scores characterize agreement on that support, rather than error over all observed terrain. The 2024 entries are distributed evaluation records; a site-level manifest is needed to verify unique geographic support. A separate 1012-record, 16-step evaluation against the normalized input reference is retained in Table~\ref{tab:metrics_legacy}.
\begin{table*}[t]
\centering
\caption{Primary evaluation: mean $\pm$ sample standard deviation over 2024 gathered records per Euler-step count, checkpoint 9476. Predictions are aligned to VAE-reconstructed references in normalized signed-log space on positive-reference support. Angular and curvature scores describe the represented grid, not physical terrain geometry. Ratio and log scores are legacy diagnostics with the restricted domains described in the text. Photo-consistency is not computed in this sweep.}
\label{tab:metrics_main}
\scriptsize
\setlength{\tabcolsep}{3pt}
\begin{tabular}{@{}lrrrrrr@{}}
\toprule
Euler steps & 1 & 2 & 4 & 8 & 10 & 20 \\
\midrule
RMSE & $0.0935 \pm 0.0819$ & $0.0944 \pm 0.0834$ & $0.0938 \pm 0.0819$ & $0.0948 \pm 0.0834$ & $0.0957 \pm 0.0849$ & $0.0951 \pm 0.0837$ \\
Normal error ($^{\circ}$) & $0.363 \pm 0.187$ & $0.364 \pm 0.187$ & $0.366 \pm 0.189$ & $0.368 \pm 0.191$ & $0.368 \pm 0.191$ & $0.369 \pm 0.192$ \\
Slope RMSE ($^{\circ}$) & $0.480 \pm 0.293$ & $0.485 \pm 0.302$ & $0.482 \pm 0.296$ & $0.483 \pm 0.294$ & $0.477 \pm 0.288$ & $0.478 \pm 0.283$ \\
Curvature RMSE ($\times10^3$) & $2.76 \pm 1.80$ & $2.80 \pm 1.87$ & $2.79 \pm 1.84$ & $2.78 \pm 1.80$ & $2.75 \pm 1.75$ & $2.75 \pm 1.69$ \\
Boundary F-score & $0.072 \pm 0.135$ & $0.083 \pm 0.153$ & $0.085 \pm 0.154$ & $0.079 \pm 0.148$ & $0.086 \pm 0.157$ & $0.085 \pm 0.160$ \\
Topographic MS-SSIM & $0.552 \pm 0.163$ & $0.547 \pm 0.163$ & $0.546 \pm 0.164$ & $0.542 \pm 0.159$ & $0.540 \pm 0.160$ & $0.538 \pm 0.160$ \\
PSD slope ratio & $1.080 \pm 0.118$ & $1.064 \pm 0.107$ & $1.060 \pm 0.103$ & $1.074 \pm 0.119$ & $1.073 \pm 0.120$ & $1.076 \pm 0.124$ \\
Patch SWD & $0.028 \pm 0.061$ & $0.030 \pm 0.065$ & $0.028 \pm 0.061$ & $0.031 \pm 0.071$ & $0.030 \pm 0.066$ & $0.031 \pm 0.065$ \\
Absolute relative error & $7.214 \pm 4.261$ & $7.338 \pm 4.489$ & $7.315 \pm 4.575$ & $7.335 \pm 4.524$ & $7.393 \pm 4.639$ & $7.418 \pm 4.536$ \\
Squared relative error & $0.837 \pm 0.950$ & $0.891 \pm 1.401$ & $0.846 \pm 0.974$ & $0.864 \pm 0.973$ & $0.891 \pm 1.061$ & $0.887 \pm 1.038$ \\
Log-RMSE & $1.163 \pm 0.236$ & $1.169 \pm 0.240$ & $1.168 \pm 0.240$ & $1.171 \pm 0.234$ & $1.174 \pm 0.235$ & $1.176 \pm 0.240$ \\
Scale-invariant log score & $108.78 \pm 18.30$ & $109.19 \pm 18.54$ & $109.16 \pm 18.50$ & $109.40 \pm 18.04$ & $109.67 \pm 18.05$ & $109.87 \pm 18.62$ \\
$\delta_1$ (\%) & $23.56 \pm 9.08$ & $23.18 \pm 9.16$ & $23.39 \pm 9.26$ & $23.12 \pm 8.58$ & $22.92 \pm 8.62$ & $22.90 \pm 8.77$ \\
$\delta_2$ (\%) & $44.04 \pm 12.94$ & $43.71 \pm 13.01$ & $43.77 \pm 12.95$ & $43.71 \pm 12.65$ & $43.33 \pm 12.59$ & $43.38 \pm 12.74$ \\
$\delta_3$ (\%) & $59.42 \pm 13.11$ & $59.25 \pm 13.23$ & $59.29 \pm 13.19$ & $59.13 \pm 12.81$ & $58.94 \pm 12.93$ & $58.94 \pm 13.18$ \\
\bottomrule
\end{tabular}
\end{table*}

\paragraph{Quantitative results} Table~\ref{tab:metrics_main} reports all computed diagnostics across the six step counts. Mean RMSE spans 0.0935--0.0957, indicating limited sensitivity to integration count under this protocol. At one step, topographic MS-SSIM is 0.552 and the boundary F-score is 0.072: broad structural agreement coexists with weak edge agreement. Low grid-coordinate angular errors do not establish accurate physical slopes, because the evaluator uses normalized relief and unit-pixel differences. Spectral slopes and patch statistics complement these spatial diagnostics but cannot independently establish feature localization or resolved terrain bandwidth. The sweep does not compute photo-consistency; its stored zero is a default value and is not a measured result.

\paragraph{Metric interpretation} Errors are dimensionless unless explicitly identified as grid-coordinate angles. Ratio inputs are clipped below $10^{-6}$, and log-RMSE uses pixels where both fields are positive. These conventions are poorly suited to general signed relief and are reported as implementation diagnostics, not as conventional absolute-depth accuracy. Affine alignment can admit a negative scale and removes calibration information. The supporting 16-step export (Table~\ref{tab:metrics_legacy}) has RMSE $0.098\pm0.085$, topographic MS-SSIM 0.580, boundary F-score 0.075, and photo-consistency 0.220 under its different reference and aggregation protocol.

\paragraph{Euler-step sensitivity.} Table~\ref{tab:steps_ablation} summarizes the primary evaluation's compute--accuracy tradeoff. Across one to twenty steps, the rounded RMSE range is 0.0022, approximately 2.35\% of the lowest value. One step attains the lowest observed RMSE. This supports low-step inference for the evaluated model, but does not establish trajectory straightness or statistical equivalence. Fresh source noise is drawn at each step count, so the observed differences combine integration effects with realization variability. A paired-seed experiment would be needed to isolate the integration effect.
\begin{table}[t]
\centering
\caption{Compact summary of the primary 2024-record evaluation in Table~\ref{tab:metrics_main}. One step gives the lowest observed RMSE. Fresh noise is sampled at each step count; these values do not constitute a paired integration-error experiment.}
\label{tab:steps_ablation}
\small
\setlength{\tabcolsep}{2pt} 
\begin{tabular}{@{}ccccccc@{}} 
\toprule
Steps        & 1     & 2     & 4     & 8     & 10    & 20    \\
\midrule
RMSE         & 0.0935 & 0.0944 & 0.0938 & 0.0948 & 0.0957 & 0.0951 \\
$\delta_1$\,(\%) & 23.56  & 23.18  & 23.39  & 23.12  & 22.92  & 22.90 \\
\bottomrule
\end{tabular}
\end{table}

\paragraph{Qualitative results} Figure~\ref{fig:qualitative_main} compares three archived scenes using imagery, predicted relief, reference relief, and associated diagnostics. The predictions capture portions of the broad terrain structure while retaining visible amplitude, spatial, and shading discrepancies. These examples complement the aggregate metrics by showing where their agreement breaks down. An improvement attributable specifically to photoclinometric supervision would require the same scenes from a matched run without that loss.
\begin{figure}[htbp]
\centering
\includegraphics[width=\linewidth,height=0.80\textheight,keepaspectratio]{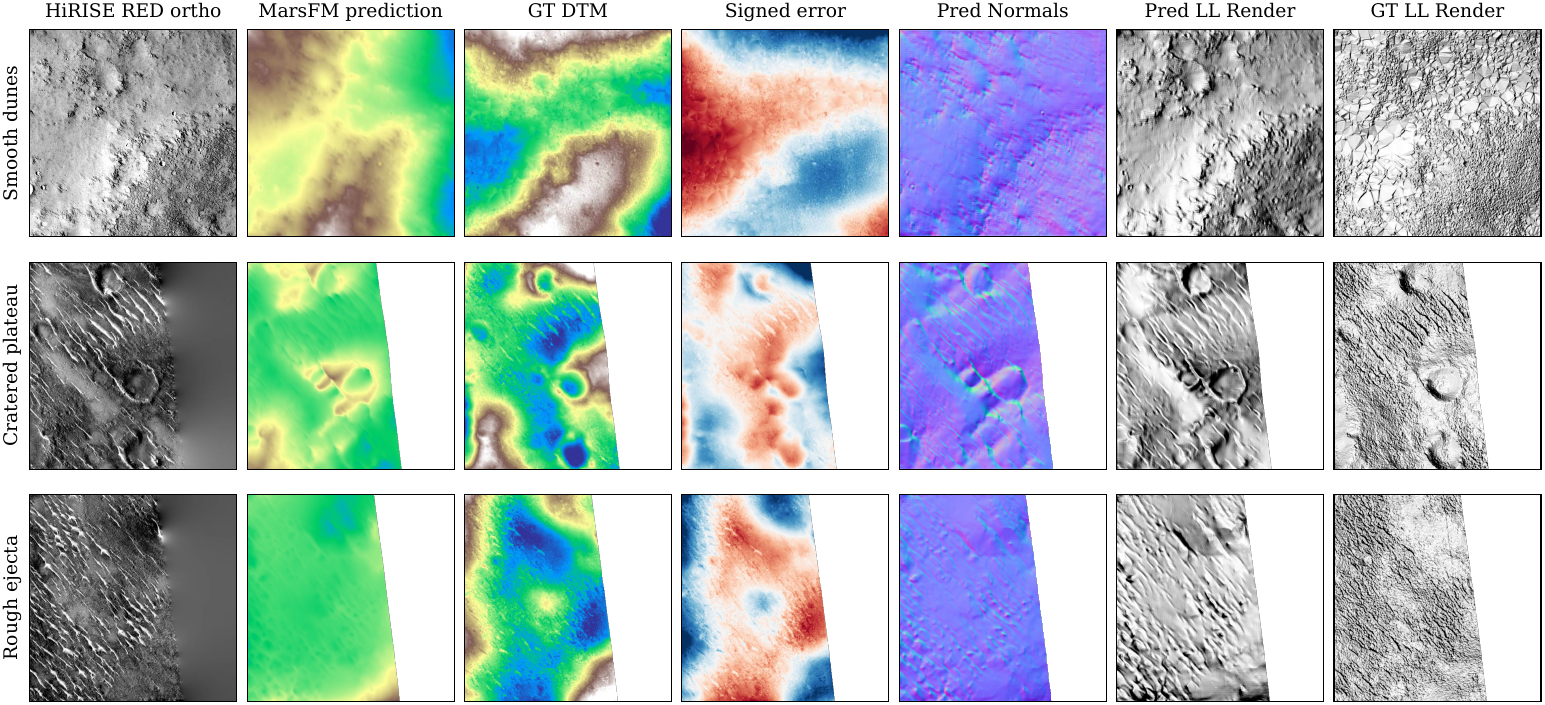}
\caption{\textbf{Qualitative reconstruction diagnostics for three scenes.} The complete archived panel is retained to compare imagery, predicted and reference relief, and their diagnostic responses. Broad shape correspondence coexists with smoothing and local discrepancies; visual texture alone does not verify additional terrain resolution.}
\label{fig:qualitative_main}
\end{figure}

\paragraph{Limitations} The experiments characterize a single trained system and do not isolate the contribution of shading or establish superiority to a matched baseline. Evaluation uses VAE-reconstructed normalized references, positive-reference support, and reference-derived affine alignment; it therefore does not establish physical height accuracy or performance over all observed signed relief. Geographic independence requires a product-level split manifest. Plane detrending removes regional elevation, and clipping loses extreme residuals. Reference-conditioned illumination and approximate reflectance can confound geometry with albedo and shadows; the implemented illumination update also limits claims of robust fitting (Appendix~\ref{sec:sun_irls}). Only RED imagery is used. Independent high-resolution references and controlled comparisons are required to establish improved terrain resolution and uncertainty calibration. The catalog and rationale-text analyses document possible extensions, rather than additional model inputs.

\section{Conclusion}
\label{sec:conclusion}

MarsFM combines an image-conditioned flow model with stereo-derived geometric supervision and differentiable shading for local Martian relief estimation. Its central contribution is a formulation in which learned terrain priors, differential structure, and rendered image agreement can be optimized together. Evaluation on 2024 gathered records per step count yields normalized RMSE of 0.0935--0.0957 across one to twenty Euler steps, supporting low-step prediction under the stated protocol. The full diagnostics show broad terrain correspondence together with smoothing, amplitude compression, and boundary and rendering errors. The next empirical requirement is to establish whether shading improves reconstruction relative to matched baselines under physically calibrated evaluation. BG/IR conditioning, CTX transfer, and consistency across overlapping patches offer further directions.

\bibliographystyle{ieeetr}
\bibliography{references.bib}

\onecolumn
\appendices
\FloatBarrier
\section{RDR Dataset} \label{sec:dataset}

The data pipeline supports the broader HiRISE RDR catalog as well as the paired DTM experiments. This appendix preserves the RDR infrastructure, multispectral statistics, metadata analysis, and implementation examples because they document the corpus and its selection biases. The trained model uses RED imagery; BG/IR channels and rationale text are available for future extensions. A Mars-specific \emph{coordinate reference system} (CRS), product-label handling, and patch sampler adapt the Earth-oriented TorchGeo workflow to planetary data.

We use a Mars IAU 2000 geographic CRS as a common indexing system and retain each raster's product-defined projection. The reference ellipsoid parameters documented for Mars mapping~\cite{DSMAP} are:
\begin{align}
    \text{Equatorial radius (a)} = 3,396,190 m, \quad \text{Polar radius (b)} = 3,376,200 m
\end{align}
For a spherical product projection using the local ellipsoid radius at planetocentric latitude $\varphi$, the geometric radius is:
\begin{align}
R(\varphi) = \frac{ab}{\sqrt{b^2\cos^2\varphi+a^2\sin^2\varphi}}
\end{align}
Product metadata determines the appropriate projection, center latitude, radii, and longitude convention. Many nonpolar products use an equirectangular projection with observation-specific parameters; a single projected CRS should not be imposed on every file. The loader reads those parameters before transforming query bounds to the common geographic CRS. Physical distances and gradients must still be evaluated in metric coordinates, not directly in degrees.

At the equator, $\pi a/180\approx59{,}274.70$\,m per degree. A 0.25\,m pixel therefore corresponds to approximately 237,099 pixels per degree, and a 0.5\,m pixel to 118,549 pixels per degree. The earlier catalog-derived value of 118,502.26 pixels per degree is consistent with an approximately 0.5\,m grid using a slightly different local radius; it does not describe 0.25\,m sampling. Product-specific map scales take precedence over these approximations.

\FloatBarrier
\subsection{Radiometric calibration}

For products that encode calibrated radiance factor as scaled digital numbers (DN), physical I/F is recovered using the scaling factor and offset in that product's label. DN is an encoding whose interpretation depends on the product; applying another observation's coefficients can introduce a radiometric mismatch. The conversion is:
\begin{align}
    \mathrm{I/F} = \mathrm{DN} \times \mathrm{SCALING\_FACTOR} + \mathrm{OFFSET}
\end{align}
Each image is normalized using this radiometric calibration before proceeding through the rest of the processing pipeline.

\FloatBarrier
\subsection{Instrument characteristics and spectral bandpasses}

HiRISE uses a 0.5-meter primary mirror and a pushbroom arrangement of 14 charge-coupled devices (CCDs)~\cite{McEwen2007MarsHiRISE,Sutton2022RevealingModels}. The three spectral channels support complementary geological observations, summarized below.
\begin{table}[htbp]
    \centering
    \caption{Spectral band characteristics and primary geological applications}
    \label{tab:sensor_characteristics}
    \begin{tabular}{@{} l c p{8.5cm} @{}}
        \toprule
        \textbf{Spectral Band} & \textbf{Wavelength Range (nm)} & \textbf{Primary Geological Applications} \\
        \midrule
        Blue--Green (BG) & 400--600 & Detection of frost, ice, and atmospheric scattering phenomena \\[2pt]
        Panchromatic (RED) & 550--850 & High-resolution topographic mapping, structural geomorphology analysis, and digital elevation model (DEM) extraction \\[2pt]
        Near-Infrared (IR) & 800--1000 & Mineralogical differentiation and sub-pixel spectral unmixing (e.g., integration with CRISM datasets) \\
        \bottomrule
    \end{tabular}
\end{table}
After performing the Radiometric calibration for each image in the dataset, we calculate the statistical per-pixel information for the entire dataset, as illustrated in the histograms in Figure \ref{fig:pixel_histograms}. With this statistical information, the three channels are tabulated in Table \ref{tab:dataset_stats_extended}.
\begin{figure}[htbp]
     \centering
     \begin{subfigure}[b]{0.32\textwidth}
         \centering
         \includegraphics[width=\textwidth,height=0.80\textheight,keepaspectratio]{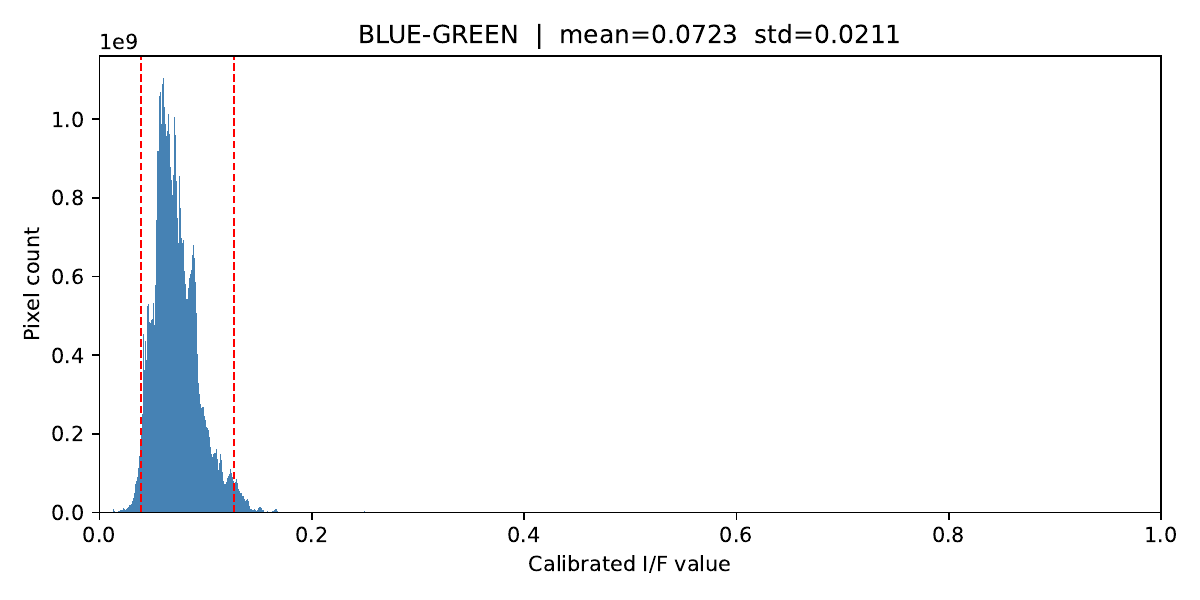}
         \caption{Blue-green spectral band}
         \label{fig:histogram_BLUE-GREEN}
     \end{subfigure}
     \hfill
     \begin{subfigure}[b]{0.32\textwidth}
         \centering
         \includegraphics[width=\textwidth,height=0.80\textheight,keepaspectratio]{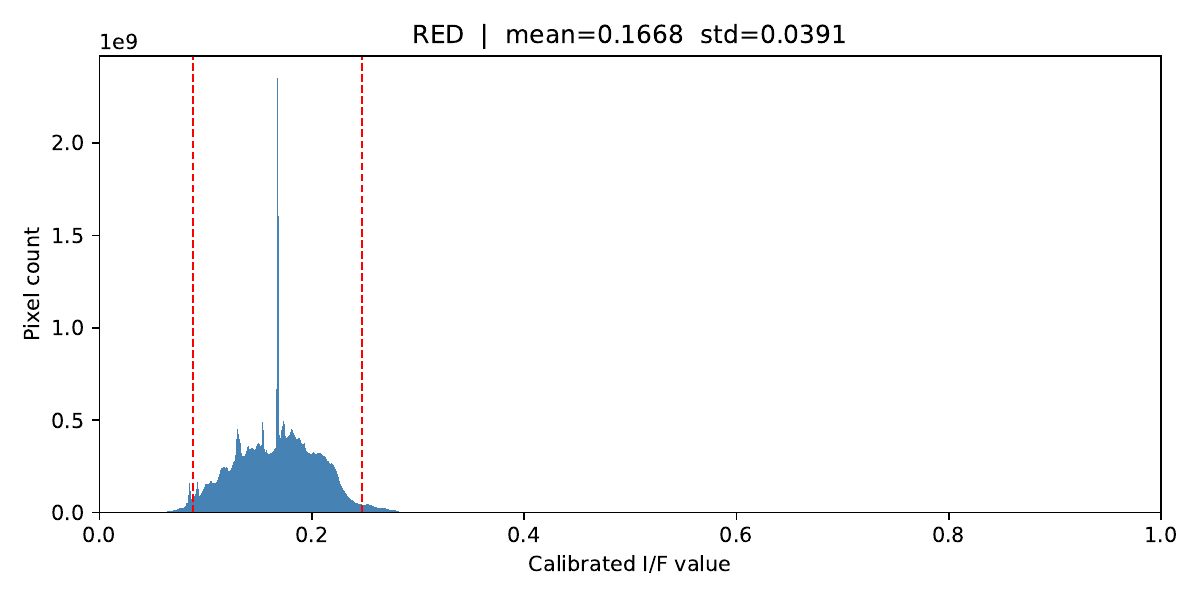}
         \caption{Red spectral band}
         \label{fig:histogram_RED}
     \end{subfigure}
     \hfill
     \begin{subfigure}[b]{0.32\textwidth}
         \centering
         \includegraphics[width=\textwidth,height=0.80\textheight,keepaspectratio]{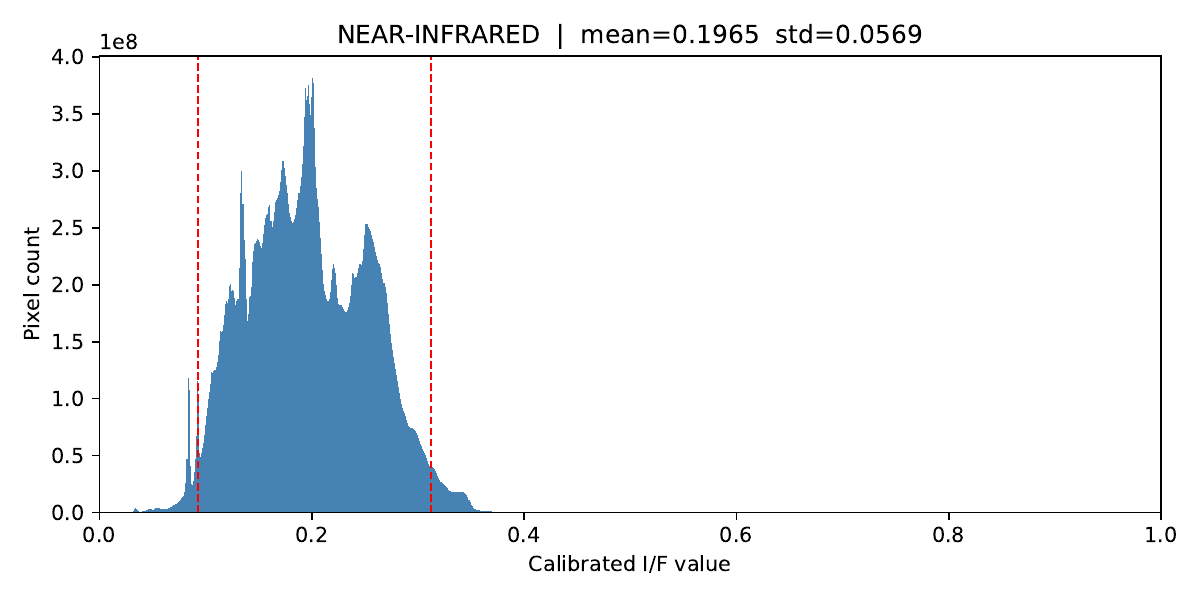}
         \caption{Infrared spectral band}
         \label{fig:histogram_NEAR-INFRARED}
     \end{subfigure}
        \caption{Per-pixel radiometrically calibrated distribution of 149,921 valid patches with a patch size of 0.005 degrees over 422 HiRISE measurement readings. These patches are derived from the HiRISE dataset and subsampled using the target-region keyword "Olympus". The histograms are unnormalized frequency histograms that provide aggregated information on approximately 46 billion pixels per channel. The red lines in the histograms represent the 2nd and 98th percentiles of the data.}
        \label{fig:pixel_histograms}
\end{figure}
\begin{table}[htbp]
\centering
\caption{Summary statistics of the dataset across spectral channels. P2 and P98 represent the 2\% and 98\% percentiles. CV denotes the coefficient of variation ($\sigma/\mu$). DRCR denotes dynamic range compression ratio ($(P_{98}-P_{2})/\mu$).}
\label{tab:dataset_stats_extended}

\begin{tabular}{
l
S[table-format=1.4e2]
S[table-format=1.4]
S[table-format=1.4]
}
\toprule
\textbf{Channel} & {\textbf{$N_{\text{pixels}}$}} & {\textbf{Mean}} & {\textbf{Std}} \\
\midrule
Near-infrared & 4.5949e+10 & 0.1965 & 0.0569 \\
Red & 4.9983e+10 & 0.1668 & 0.0391 \\
Blue-green & 4.6060e+10 & 0.0723 & 0.0211 \\
\bottomrule
\end{tabular}
\vspace{1.5em}

\begin{tabular}{
l
S[table-format=1.4]
S[table-format=1.4]
S[table-format=1.4]
S[table-format=1.4]
S[table-format=1.3]
S[table-format=1.3]
}
\toprule
\textbf{Channel} & {\textbf{Min}} & {\textbf{Max}} & {\textbf{P2}} & {\textbf{P98}} & {\textbf{CV}} & {\textbf{DRCR}} \\
\midrule
Near-infrared & 0.0097 & 0.7700 & 0.0928 & 0.3125 & 0.290 & 1.118 \\
Red & 0.0097 & 0.3600 & 0.0879 & 0.2471 & 0.235 & 0.954 \\
Blue-green & 0.0097 & 0.3184 & 0.0389 & 0.1270 & 0.292 & 1.218 \\
\bottomrule
\end{tabular}
\end{table}
\FloatBarrier
\subsection{Dataset subset}

The archived catalog snapshot contains 76 releases and 198,460 RDR records. These counts refer to catalog entries, which can include multiple products from an observation, rather than necessarily distinct image strips. Figure~\ref{fig:toptographic_map} shows catalog locations and the selected Olympus-related subset. At a reported mean JPEG2000 size of 652.495\,MB per record, downloading one file per entry would require approximately 130\,TB; this is a storage estimate for that snapshot, not a measured unique-observation total. The loader supports selected subsets so that processing need not require a complete mirror.
\begin{figure}[htbp]
    \centering
    \includegraphics[width=1\linewidth,height=0.80\textheight,keepaspectratio]{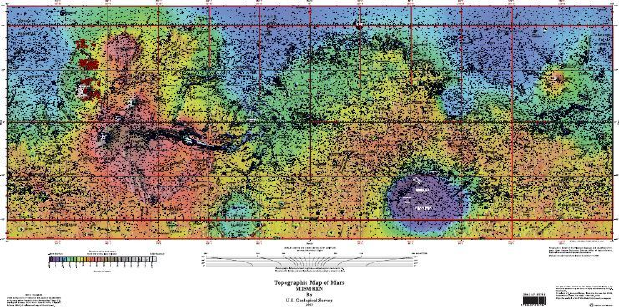}
    \caption{Topographic Map of Mars \cite{Topographic2782} through the view of Mercator Projection with longitude range of $-180^{\circ}$ to $180^{\circ}$ and latitude range of approximately $-57^{\circ}$ to $57^{\circ}$. The map was generated by the U.S. Geological Survey, and on the map were overlaid all the HiRISE sample locations in \textcolor{blue}{blue} within that topographical range, and in \textcolor{red}{red} is the subsampled dataset used as the scope of the project. The \textcolor{red}{sub-sampled} dataset was selected by filtering samples with the rationale description "Olympus", rather than a bounding-box selection approach.}
    \label{fig:toptographic_map}
\end{figure}
Instead of handling the full 198,460 HiRISE instrument measurements, we handle only a small subset. To do this, our pipeline can filter regions of interest in two ways: \textbf{bounding-box} filtering based on specified longitude and latitude, or \textbf{text filtering}.

For each HiRISE instrument measurement, there is an associated metadata file that contains the minimum and maximum latitudes and longitudes, along with other information, and a rationale description text field. This rationale description text field explains why the HiRISE measurement was taken or the name of the region of interest. Such descriptions could be "Becquerel Crater stratigraphy", "Ancient cratered surface northeast of Martz Crater, "Polar escarpment with prominent marker layer", among others. The second type of filter iterates over the rationale description and maintains only the list of entries that contain the target text.

\FloatBarrier
\subsection{Metadata}

As previously stated, each HiRISE instrument comes with a metadata file. This file contains additional useful information, such as the map scale, center location, viewing angle parameters, radiometry calibration parameters, and more. Some of the important parameters are listed and explained in Table \ref{tab:pds_metadata_complete}.
\begin{table}[htbp]
\centering
\tiny
\caption{Comprehensive PDS Metadata and Viewing Parameters for HiRISE Observation}
\label{tab:pds_metadata_complete}
\renewcommand{\arraystretch}{1.2}
\begin{tabular}{@{}llp{8cm}@{}}
\toprule
\textbf{Parameter} & \textbf{Example Value} & \textbf{Scientific Representation / Description} \\ \midrule
\textbf{OBSERVATION\_ID} & TRA\_000873\_2015 & Unique identifier for this orbital capture sequence, documenting a transition-phase observation of Becquerel Crater stratigraphy. \\
\textbf{PRODUCT\_ID} & TRA\_000873\_2015\_COLOR & Specific identifier for this data product, indicating it is the synthesized color composite image. \\
\textbf{OBSERVATION\_START\_TIME} & 2006-10-03T12:51:33.015 & Precise Universal Time Coordinated (UTC) timestamp when the instrument began recording the observation. \\
\textbf{MAP\_PROJECTION\_TYPE} & EQUIRECTANGULAR & Cartographic mapping method applied. Utilizes a spherical approximation (local radius $R = 3393.83$ km) where latitude and longitude grid lines intersect at right angles. \\
\textbf{MAP\_SCALE} & 0.25 meters/pixel & Spatial resolution of the projected map. Each pixel represents a 0.25 m $\times$ 0.25 m area on the Martian surface at the center latitude. \\
\textbf{CENTER\_LATITUDE / LONGITUDE} & 20.000$^\circ$, 180.000$^\circ$ & Planetocentric coordinates defining the center of the map projection. \\
\textbf{INCIDENCE\_ANGLE} & 47.099670$^\circ$ & Solar zenith angle relative to the reference-surface normal at the reported location; this is distinct from incidence on each resolved terrain facet. \\
\textbf{EMISSION\_ANGLE} & 0.282987$^\circ$ & Angle between the reference-surface normal and the instrument line-of-sight at the reported location. A value near 0$^\circ$ indicates a nearly perfect nadir viewing geometry. \\
\textbf{PHASE\_ANGLE} & 47.047614$^\circ$ & Angle formed between the light source (Sun), the target surface, and the observer (MRO). Used to define the bidirectional reflectance distribution function (BRDF). \\
\textbf{LOCAL\_TIME} & 15.40208 & The true solar local time at the target center, expressed in decimal hours (approximately 15:24). Essential for diurnal thermal inertia modeling. \\
\textbf{SOLAR\_LONGITUDE} & 115.346344$^\circ$ & Areocentric longitude of the Sun ($L_s$). A value of $\sim$115$^\circ$ falls in northern Martian summer. \\
\textbf{SUB\_SOLAR\_AZIMUTH} & 191.465135$^\circ$ & The clockwise angle from the reference vector to the sub-solar point as viewed from the target center. Requires the stated image/map reference convention before use as a local illumination direction. \\
\textbf{NORTH\_AZIMUTH} & 270.000000$^\circ$ & The azimuth angle indicating the direction of the Martian North Pole relative to the center of the image. A value of 270$^\circ$ indicates that North is to the left of the unrotated image frame. \\
\textbf{SCALING\_FACTOR / OFFSET} & $2.1639 \times 10^{-4}$ / 0.0461 & Linear coefficients required to convert raw 16-bit integer Digital Numbers (DN) into physical reflectance values ($I/F$). \\ \bottomrule
\end{tabular}
\end{table}
\FloatBarrier
\subsection{Pipeline workflow}

The full pipeline workflow showing how the dataset reader and sampler work together is visualized in Figure \ref{fig:dataset_architecture}.
\begin{figure}[htbp]
    \centering
    \includegraphics[height=0.80\textheight,width=\linewidth,keepaspectratio]{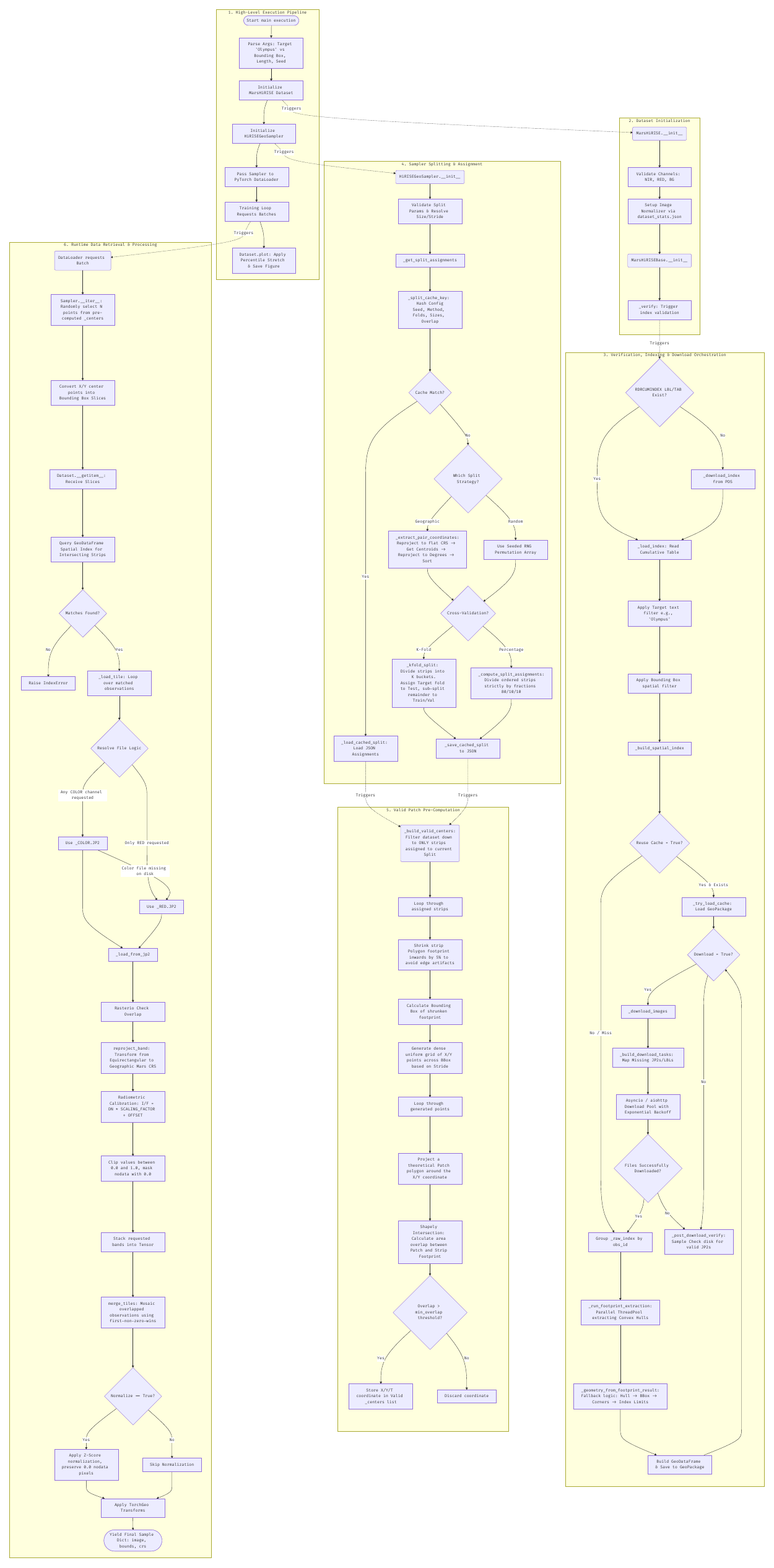}
    \caption{Full pipeline architecture overview of the Mars HiRISE dataset loader and sampler}
    \label{fig:dataset_architecture}
\end{figure}
JPEG2000 stores a wavelet-coded representation whose random-window cost depends on tiling, precincts, resolution levels, caching, and decoder implementation. Repeated patch reads can therefore incur substantial decode overhead. The original preprocessing workflow is retained in Figures~\ref{fig:dataset_architecture} and~\ref{fig:dataset_preprocessing}; a numerical speedup requires a benchmark on the actual product and storage configuration.

We convert the selected JPEG2000 products to Cloud-Optimized GeoTIFFs with 512$\times$512 internal tiles and overviews. Rasterio can then request the tiles intersecting a patch. This makes the access pattern predictable and supports repeated training reads; the cost scales with the requested tiles, decompression, and I/O, rather than a universal asymptotic distinction between the two formats.
\begin{figure}[htbp]
    \centering
    \includegraphics[height=0.80\textheight,width=\linewidth,keepaspectratio]{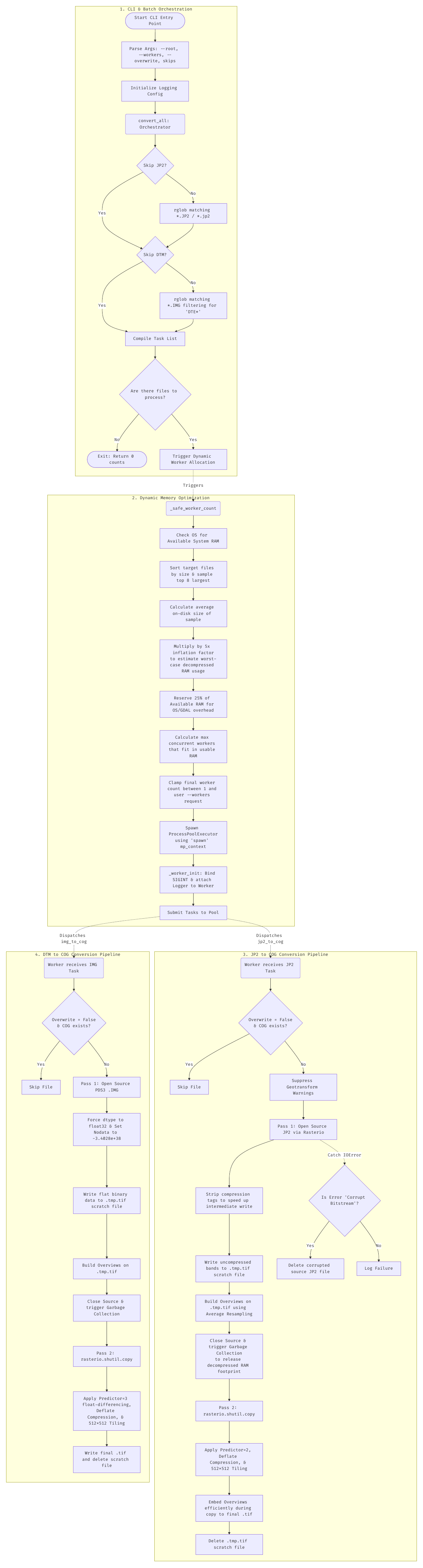}
    \caption{Full pipeline architecture overview of the Mars HiRISE dataset preprocessing to convert the JPEG2000 images into Cloud-Optimized GeoTIFF}
    \label{fig:dataset_preprocessing}
\end{figure}
\FloatBarrier
\subsection{Code example}

Listing~\ref{lst:hirise_pipeline} illustrates the dataset and sampler interface. It assumes the project classes, plotting modules, logger, and TorchGeo units have been imported. On platforms using spawned workers, the executable entry point must also be protected by the usual Python main guard.
\begin{figure}[htbp]
\begin{lstlisting}
# optimal normalization statistics based on the radio-calibrated samples
normalization_path = "dataset_stats/image/dataset_stats.json"

olympus = False
global_coverage = True

dataset = MarsHiRISE(
    target="Olympus" if olympus else None,
    bbox=None if olympus else (-136, 12, -124, 24),
    channels=["NEAR-INFRARED", "RED", "BLUE-GREEN"],
    download=True,
    reuse_cache=True,
    normalize=True,
    normalization_path=normalization_path,
)

output_path = pathlib.Path("Figures")
output_path.mkdir(parents=True, exist_ok=True)

if global_coverage:
    dataset.plot_global_coverage(output_path / "global_coverage.pdf")
    logger.info("Saved global coverage fig")

sampler = HiRISEGeoSampler(
    dataset,
    size=0.005,
    length=None,  # use all valid sample locations
    units=Units.CRS,
)

logger.info("Number of samples: %d", len(sampler))
dataloader = DataLoader(
    dataset, sampler=sampler,
    num_workers=10, multiprocessing_context="spawn", prefetch_factor=4,
)
logger.info("Number of data-loader: %d", len(dataloader))

for i, sample in enumerate(dataloader):
    output_path.mkdir(parents=True, exist_ok=True)
    fig = dataset.plot(sample)
    fig.savefig(output_path / f"output{i}.png")
    logger.info("Saved fig output%d.png", i)
    plt.close(fig)
\end{lstlisting}
\caption{Geospatial dataloading and patch extraction pipeline for the Mars HiRISE dataset. Note the use of the specialized coordinate reference system (CRS) sampler to efficiently map the continuous spatial bounds of Olympus Mons to discrete, normalizable tensors.}
\label{lst:hirise_pipeline}
\end{figure}
\FloatBarrier
\subsection{Rationale Text Analysis} \label{sec:ratioanl_text}

The HiRISE index contains a \texttt{RATIONALE\_DESC} field explaining why each observation was collected. The analysis script parses the official PDS fixed-width index, with a maximum of 75 characters, deduplicates repeated RED/COLOR products by observation ID, computes observation centers from latitude/longitude bounds, tokenizes the rationale text, removes non-scientific stopwords, preserves hyphenated geological terms, and assigns science themes through keyword sets tuned to HiRISE vocabulary.

\begin{figure}[htbp]
    \centering
    \includegraphics[width=\linewidth,height=0.80\textheight,keepaspectratio]{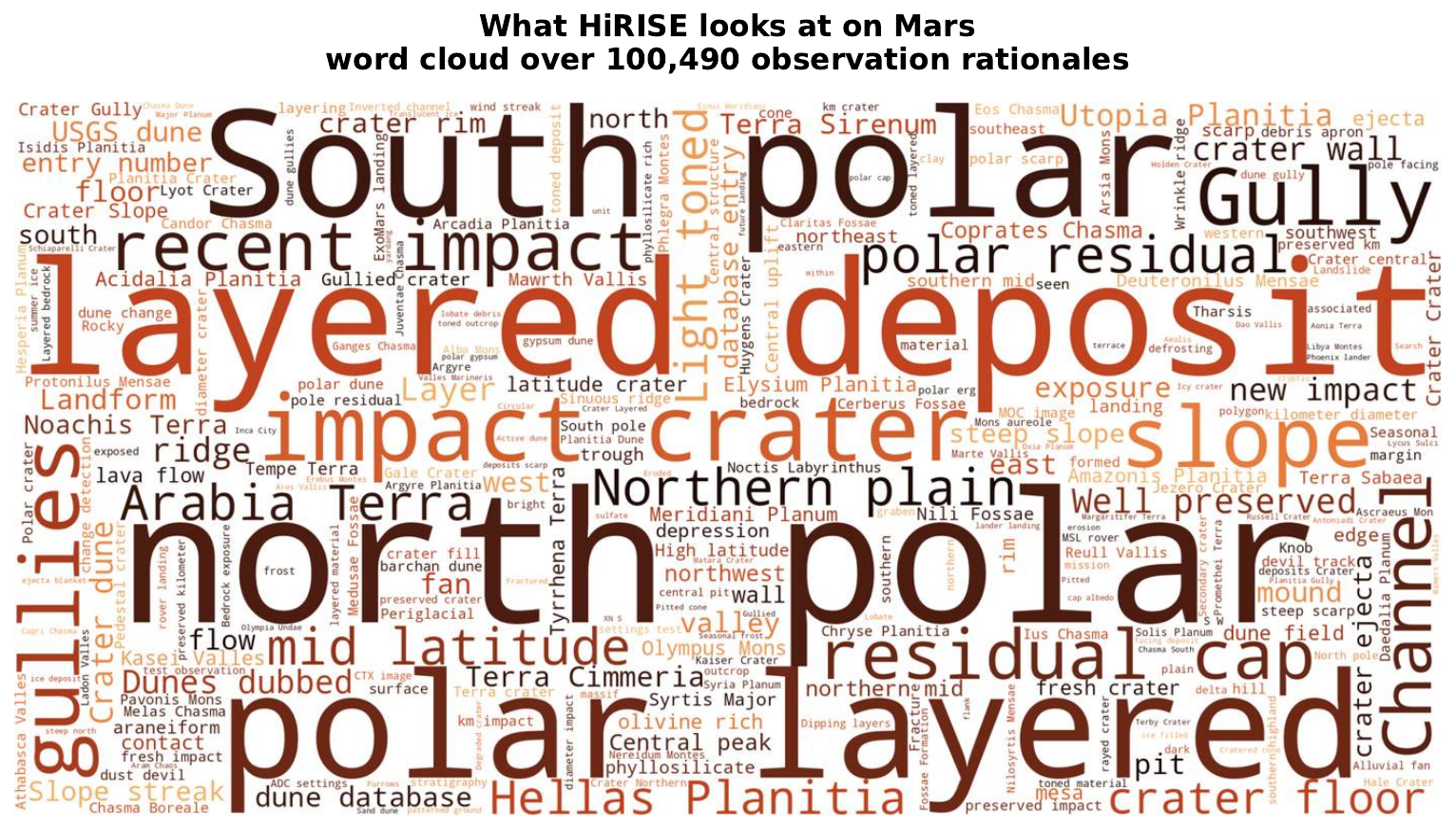}
    \caption{\textbf{HiRISE targeting vocabulary.} The word clouds summarize the most prominent terms in observation rationales after stopword filtering. They show that HiRISE text is strongly geologic rather than generic metadata, with repeated concepts such as craters, dunes, layers, polar processes, slopes, deposits, and channels.}
    \label{fig:rationale_wordclouds_appendix}
\end{figure}

\begin{figure}[htbp]
    \centering
    \includegraphics[width=\linewidth,height=0.80\textheight,keepaspectratio]{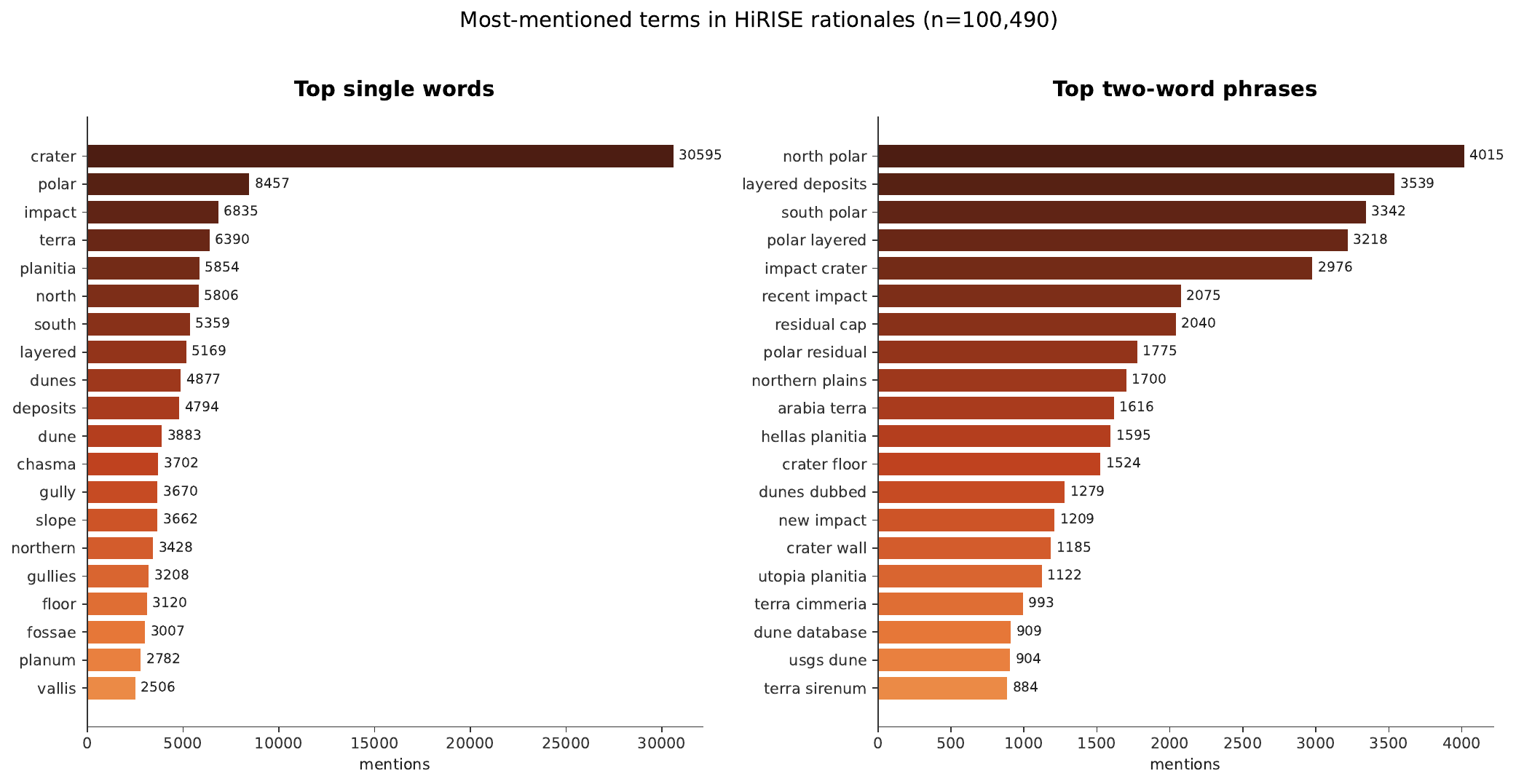}
    \caption{\textbf{Top rationale words and phrases.} The script counts filtered unigrams and adjacent token bigrams, making the text distribution auditable rather than relying only on the word-cloud visualization.}
    \label{fig:rationale_top_words_appendix}
\end{figure}

\begin{figure}[htbp]
    \centering
    \begin{subfigure}[b]{0.49\linewidth}
        \centering
        \includegraphics[width=\linewidth,height=0.80\textheight,keepaspectratio]{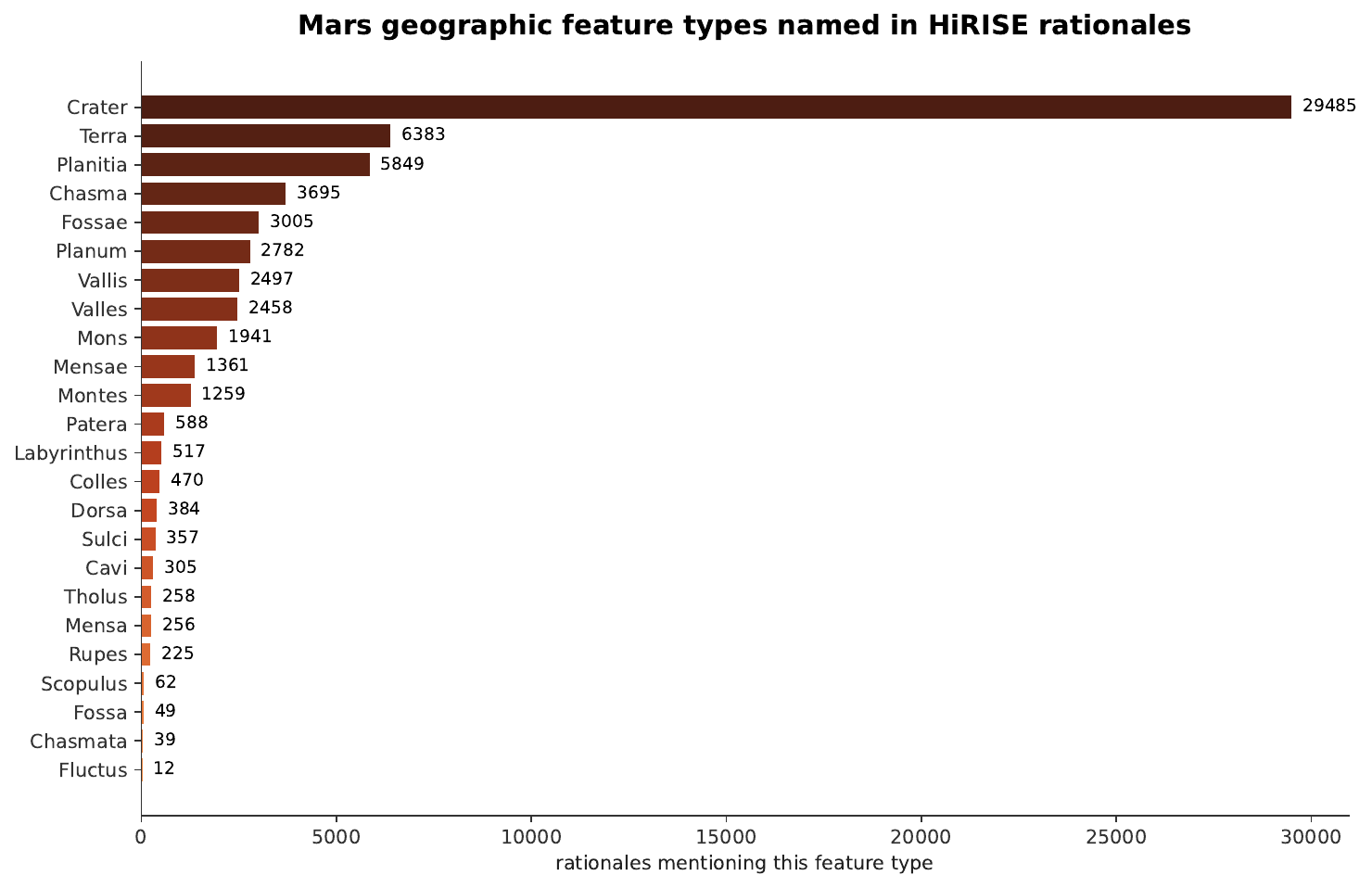}
        \caption{Named geographic feature types}
    \end{subfigure}
    \hfill
    \begin{subfigure}[b]{0.49\linewidth}
        \centering
        \includegraphics[width=\linewidth,height=0.80\textheight,keepaspectratio]{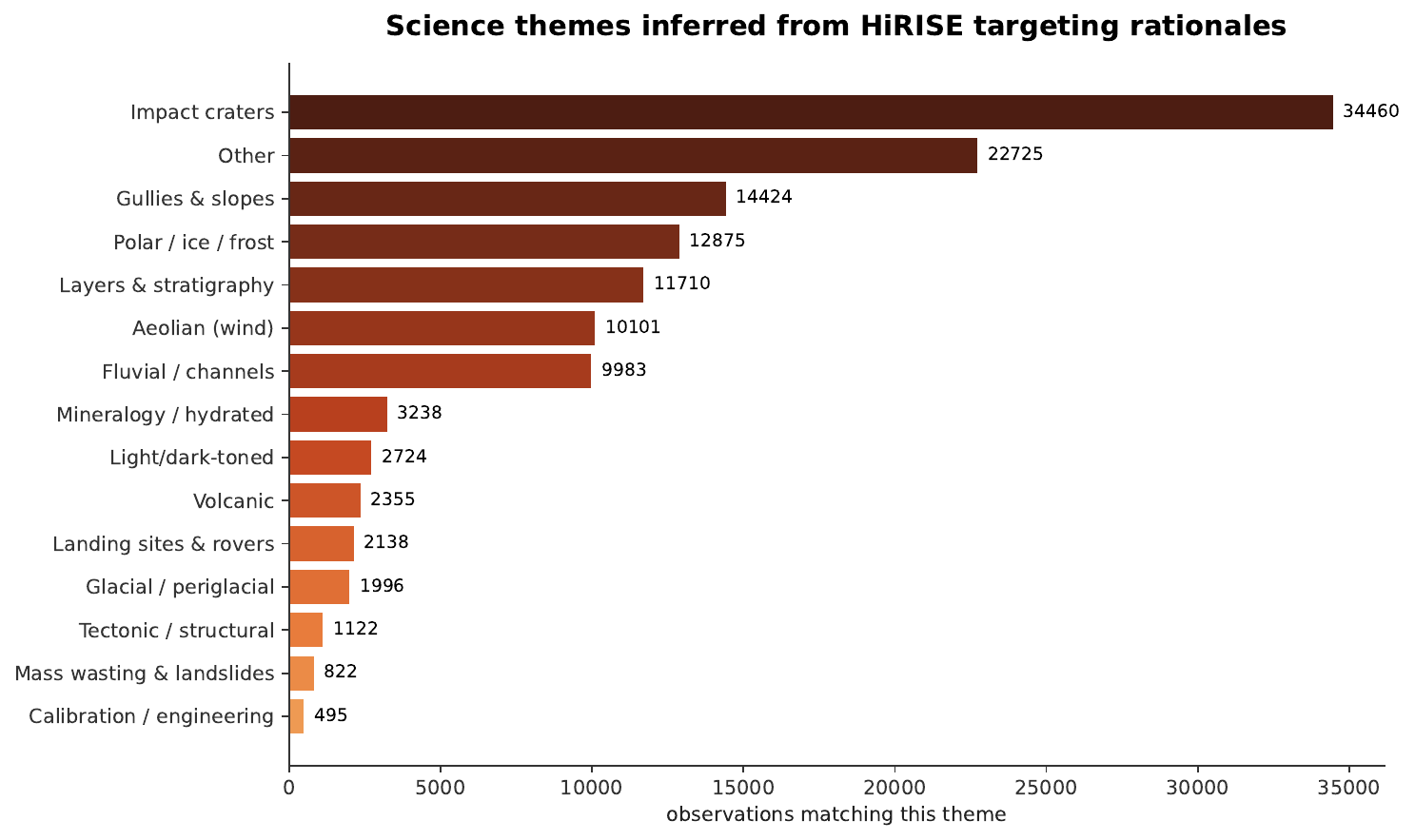}
        \caption{Keyword-derived science themes}
    \end{subfigure}
    \caption{\textbf{From words to geological categories.} Geographic-feature counts track named Mars feature types such as craters, chasmata, valles, fossae, and montes. Theme counts group rationales into science categories including impact craters, gullies/slopes, aeolian processes, polar/ice/frost, layers/stratigraphy, mineralogy, volcanic processes, tectonics, and landing-site/engineering observations.}
    \label{fig:rationale_theme_counts_appendix}
\end{figure}

\begin{figure}[htbp]
    \centering
    \includegraphics[width=\linewidth,height=0.80\textheight,keepaspectratio]{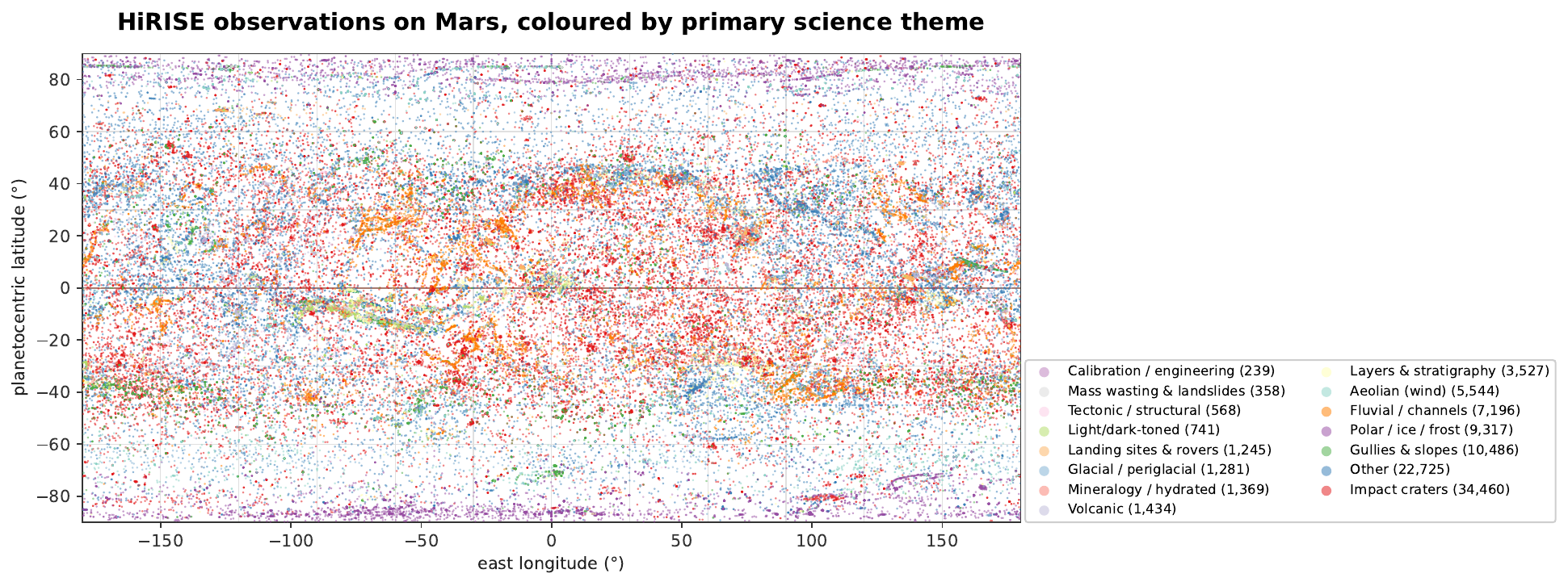}
    \caption{\textbf{Observation map colored by primary science theme.} Each observation is assigned a primary theme from its matched keyword list and plotted by longitude/latitude. This links the keyword-derived catalog characterization to geography; text is not a conditioning input in the reported model.}
    \label{fig:rationale_themed_map_appendix}
\end{figure}

\begin{figure}[htbp]
    \centering
    \includegraphics[width=\linewidth,height=0.80\textheight,keepaspectratio]{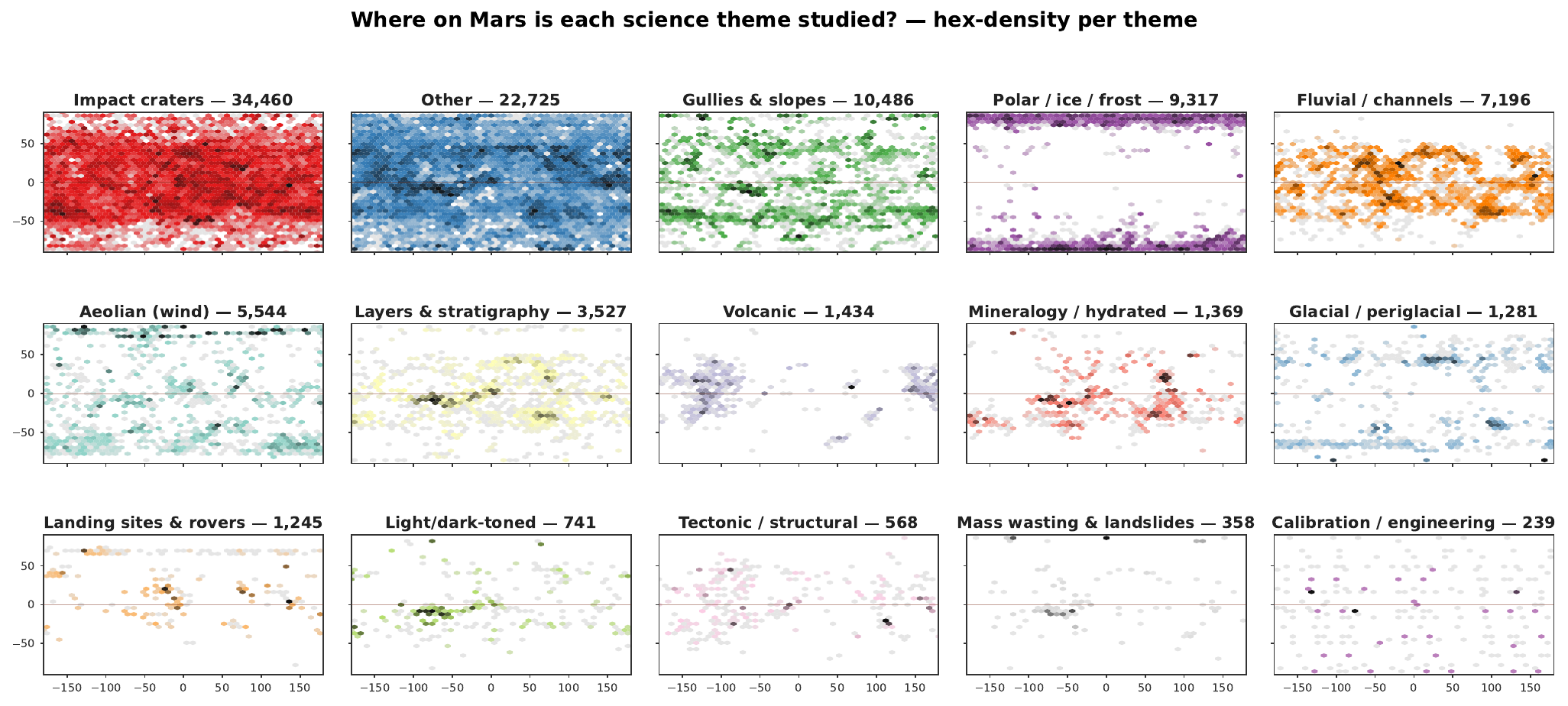}
    \caption{\textbf{Theme-specific Mars maps.} Small multiples reduce overplotting by drawing one hex-density map per science theme, revealing where different rationale classes cluster spatially.}
    \label{fig:rationale_smallmultiples_appendix}
\end{figure}

\begin{figure}[htbp]
    \centering
    \includegraphics[width=\linewidth,height=0.80\textheight,keepaspectratio]{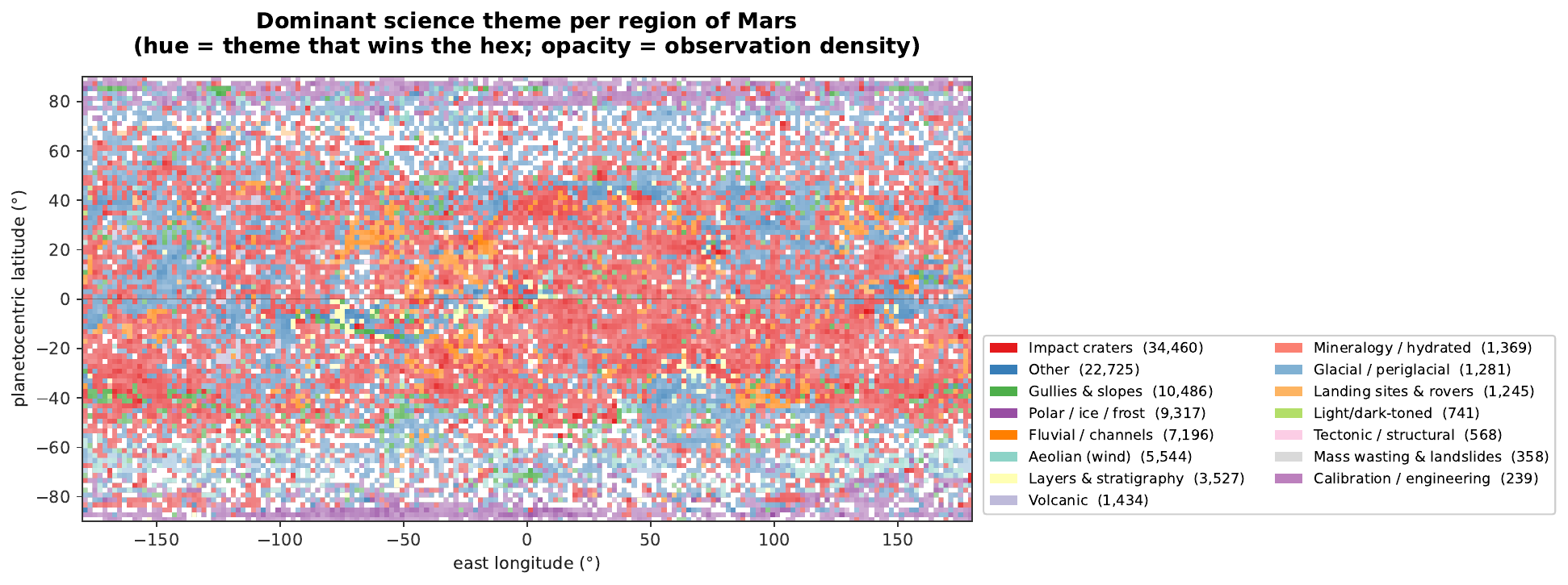}
    \caption{\textbf{Dominant theme per Mars region.} Each spatial bin is colored by the locally dominant primary theme, with opacity scaled by log observation count. This preserves a single-map view while reducing the overplotting problem of point-wise thematic maps.}
    \label{fig:rationale_dominant_map_appendix}
\end{figure}

\begin{figure}[htbp]
    \centering
    \begin{subfigure}[b]{0.46\linewidth}
        \centering
        \includegraphics[width=\linewidth,height=0.80\textheight,keepaspectratio]{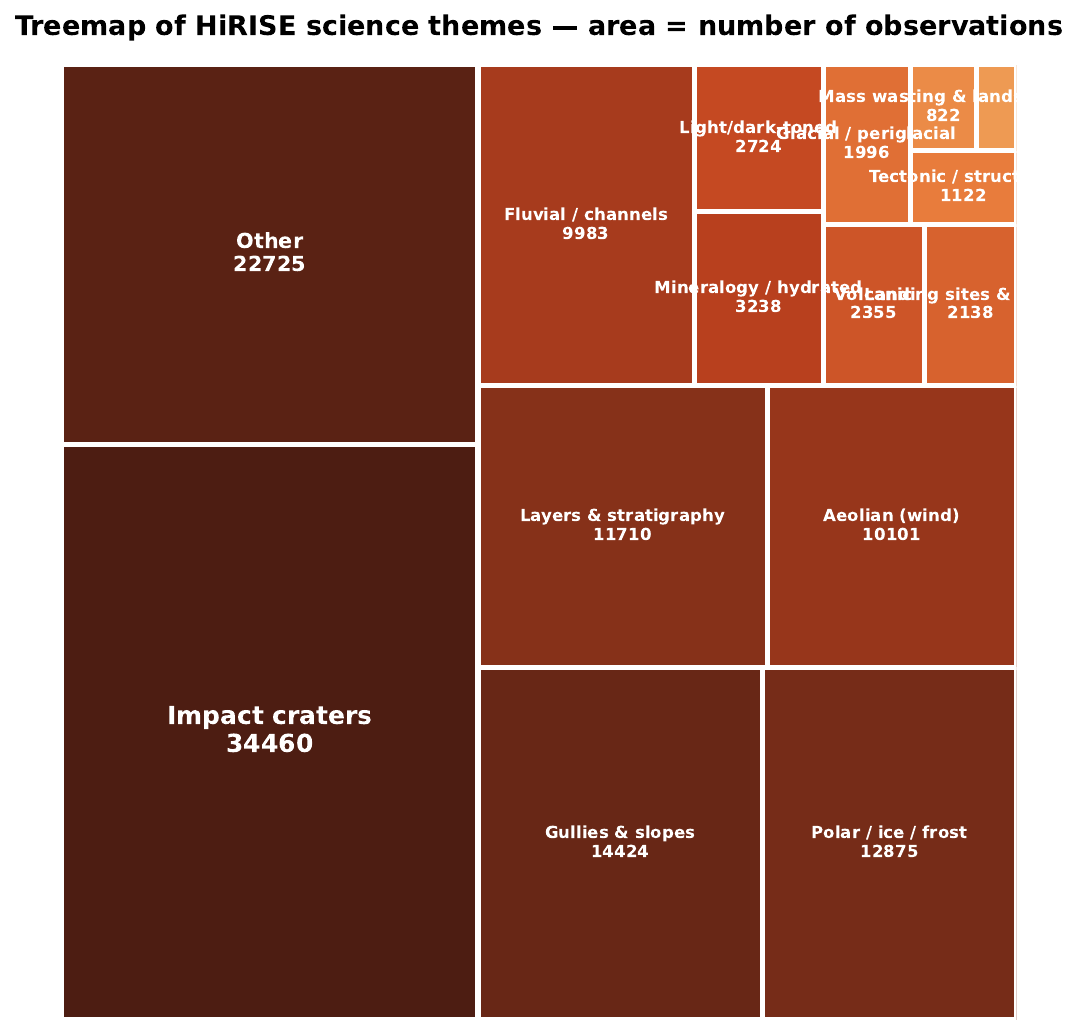}
        \caption{Theme proportions}
    \end{subfigure}
    \hfill
    \begin{subfigure}[b]{0.50\linewidth}
        \centering
        \includegraphics[width=\linewidth,height=0.80\textheight,keepaspectratio]{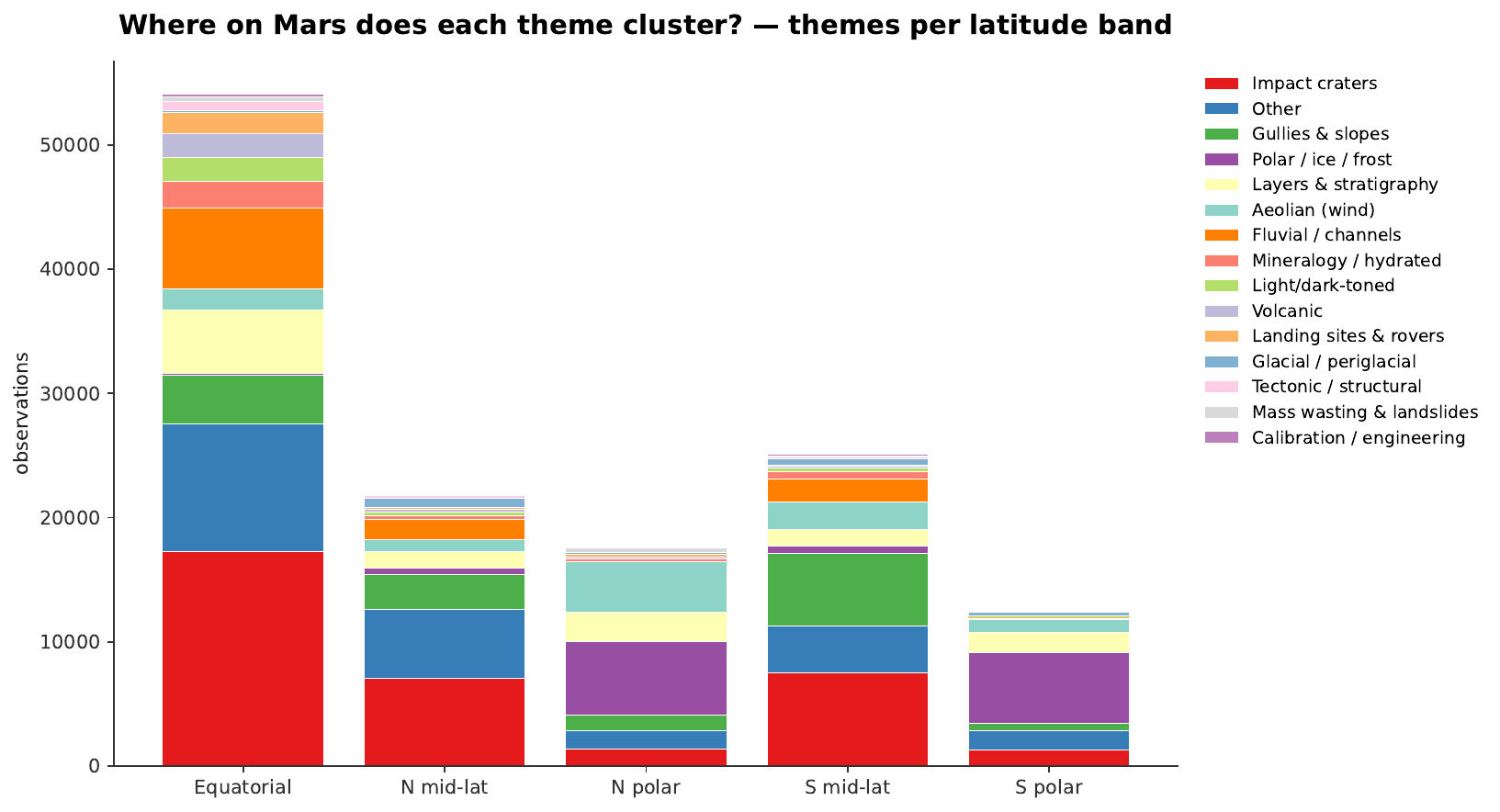}
        \caption{Themes by latitude band}
    \end{subfigure}
    \caption{\textbf{Theme imbalance and latitude dependence.} The treemap shows the global class imbalance in rationale-derived science themes, while the stacked latitude-band plot shows that theme frequencies vary by equatorial, mid-latitude, and polar regions. These distributions motivate geographically grouped evaluation and careful sampling for any future text-conditioned extension. Keyword themes describe targeting rationales, not verified geological labels for every pixel.}
    \label{fig:rationale_theme_distribution_appendix}
\end{figure}

\newpage
\FloatBarrier
\section{Dataset patch validation and sampling} \label{sec:sampling}

The catalog bounding box encloses a product's footprint and can contain substantial nodata around an angled strip. We approximate the outer valid footprint by its convex hull and accept candidate patches according to their intersection with that footprint. A separate pixel mask is needed because the hull can bridge internal holes and concavities. The default 50\% threshold refers to patch--footprint overlap; it is distinct from overlap between neighboring patches. HiRISE observations may overlap spatially, so footprint-aware sampling also needs observation grouping when constructing train/test splits. Figure~\ref{fig:strip_validation} shows accepted and rejected candidates.
\begin{figure}[htbp]
    \centering
    \includegraphics[width=1\linewidth,height=0.80\textheight,keepaspectratio]{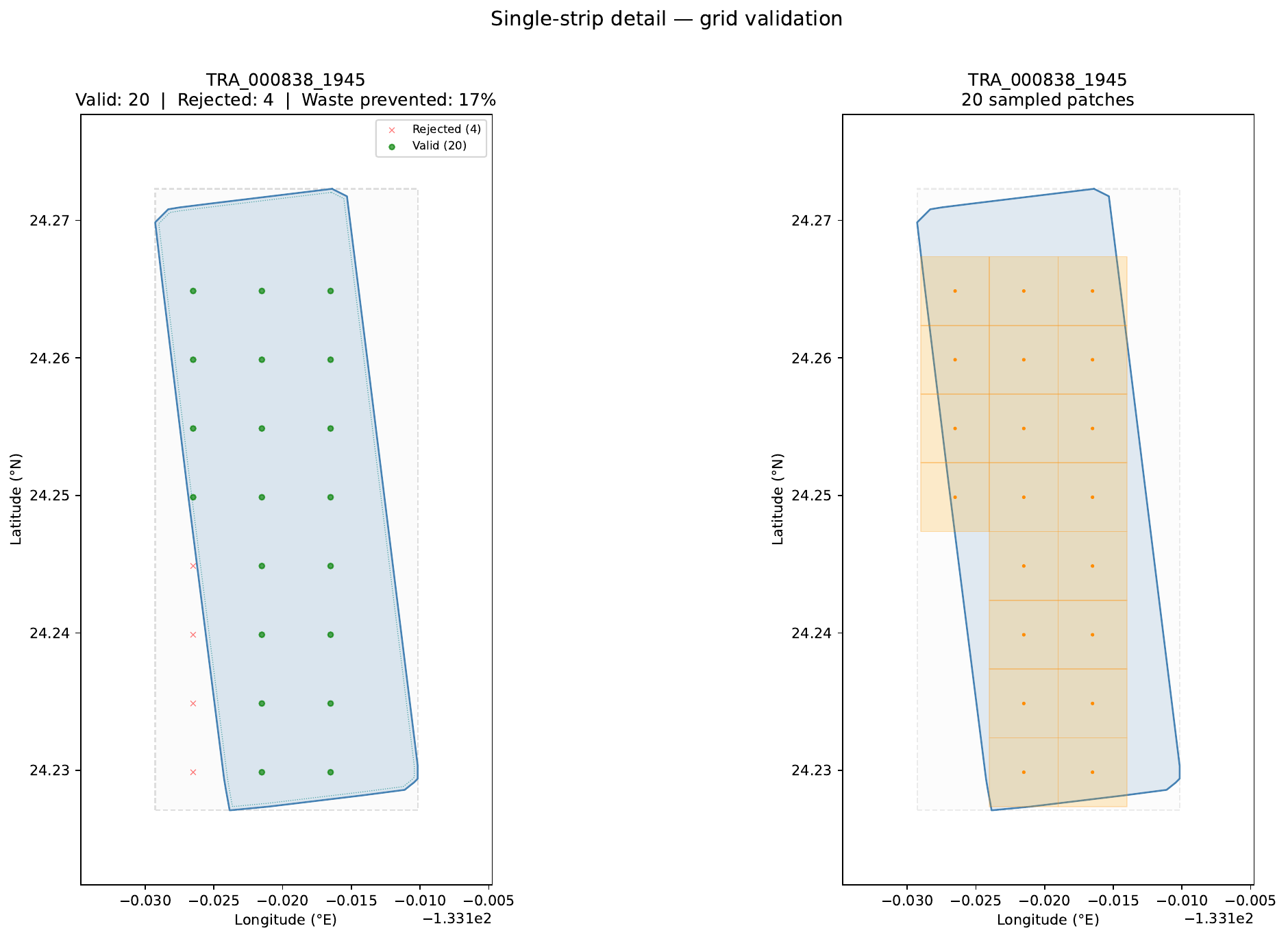}
    \caption{Demonstration of the patch sampling for a single strip inside the HiRISE dataset, the gray box represents the max bounding box that is provided from the metadata, the \textcolor{blue}{contour plot} represents the region where data is available.}
    \label{fig:strip_validation}
\end{figure}
This sampling could be further optimized by using sampling centers based on explicitly estimating the oriented envelope (minimum rotated rectangle \cite{Shapely.minimum_rotated_rectangleDocumentation}) that encloses the input orbital trip, ensuring the resulting rectangle has the minimum area possible to provide better coverage. From this custom sampler, we provide example region outputs in Figure \ref{fig:thumnail_examples}.
\begin{figure}[htbp]
    \centering
    \includegraphics[width=1\linewidth,height=0.80\textheight,keepaspectratio]{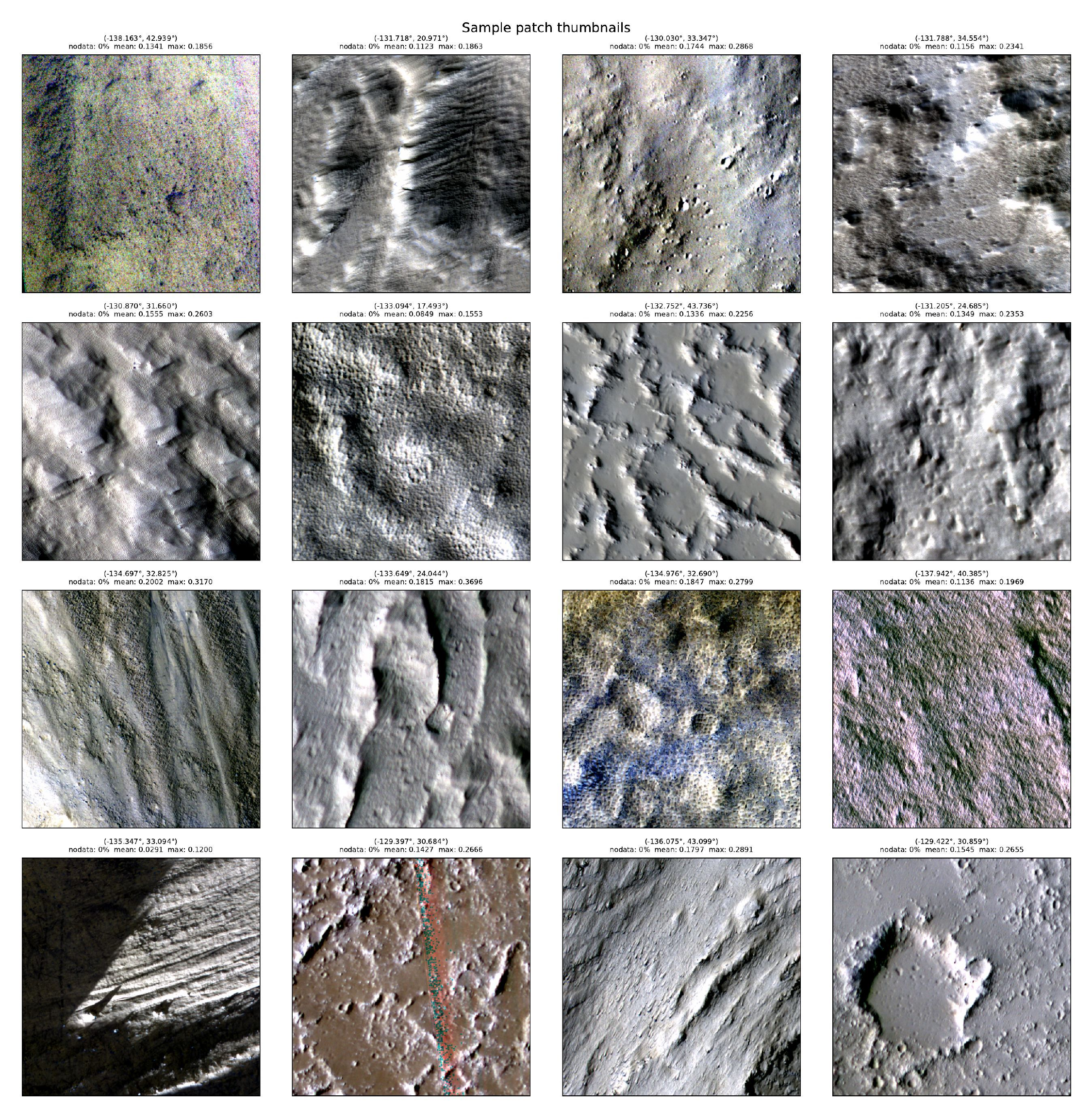}
    \caption{Thumbnail plot grid of 16 HiRISE measurements from the "Olympus" dataset target with angular resolution of 0.005 degrees. Due to the radiometry calibration, the image scale is 0-0.2; to improve visualization, we normalize per-patch pixel intensities to the 2nd and 98th percentiles. From these randomly selected plots, we can clearly see the range in random features available in the currently selected subset of the HiRISE dataset.}
    \label{fig:thumnail_examples}
\end{figure}
\FloatBarrier
\subsection{Bin-packing sampling}

Consider the problem of tiling a convex polygon $P \subset \mathbb{R}^2$ with a set of axis-aligned squares of uniform side length $L$. The packing must satisfy two constraints:
\begin{enumerate}
    \item \textbf{Overlap Guarantee:} Each square must overlap with $P$ by at least a fraction $p \in (0, 1]$ of its total area.
    \item \textbf{Stride:} A parameter $\delta\in[0,1)$ defines $\sigma=L(1-\delta)$ between adjacent centers within a row and between rows. For a purely horizontal or vertical shift of $\sigma$, the area-overlap fraction is $\delta$; for arbitrary offsets $(\Delta x,\Delta y)$ it is $(1-|\Delta x|/L)_+(1-|\Delta y|/L)_+$.
\end{enumerate}
General two-dimensional packing problems can be computationally difficult~\cite{Garey1979ComputersNP-completeness,Lodi2002Two-dimensionalSurvey}. Our restricted problem first constructs a feasible center region~\cite{Lozano-Perez1983SpatialApproach} and then maximizes the number of centers independently on fixed rows. The result below concerns convex footprints, exact overlap tests, and a fixed orientation/row system. It does not inherit a global optimality guarantee for arbitrary two-dimensional packing.

\FloatBarrier
\subsubsection{Validity of the Configuration Space}
\begin{definition}[Valid Center Region]
Let $S(c)$ be an axis-aligned square of side length $L$ centered at $c = (x,y)$. The valid center region $R$ is defined as:
\[
R = \{ c \in \mathbb{R}^2 \mid \text{Area}(P \cap S(c)) \ge p L^2 \}
\]
\end{definition}
\begin{theorem}[Convexity of the Valid Region]
If $P$ is a convex polygon, the valid center region $R$ is convex.
\end{theorem}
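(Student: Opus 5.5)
The plan is to write the overlap area as a function of the center, $f(c)=\text{Area}(P\cap S(c))$, and show that $f^{1/2}$ is concave wherever $f$ is positive. Then $R=\{c : f(c)^{1/2}\ge L\sqrt{p}\}$ is a superlevel set of a concave function, and since $p>0$ it lies inside the positivity region, so it is convex. The tool is the Brunn--Minkowski inequality in $\mathbb{R}^2$: for nonempty compact convex sets $A,B$ and $\lambda\in[0,1]$,
\[
\text{Area}\bigl((1-\lambda)A+\lambda B\bigr)^{1/2}\ge(1-\lambda)\,\text{Area}(A)^{1/2}+\lambda\,\text{Area}(B)^{1/2}.
\]
We may take $P$ and the squares to be closed, since boundaries have zero area; this makes every set involved compact and convex.

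\textbf{Step 1 (set inclusion).} Fix $c_0,c_1\in R$ and $\lambda\in[0,1]$, and put $c_\lambda=(1-\lambda)c_0+\lambda c_1$. Write $A_i=P\cap S(c_i)$; both are nonempty because $f(c_i)\ge pL^2>0$. I will show
\[
(1-\lambda)A_0+\lambda A_1\subseteq P\cap S(c_\lambda).
\]
Take $a_i\in A_i$. The point $(1-\lambda)a_0+\lambda a_1$ lies in $P$ because $P$ is convex. It lies in $S(c_\lambda)$ because $S(c)=c+S(0)$ with $S(0)$ convex, so
\[
(1-\lambda)S(c_0)+\lambda S(c_1)=c_\lambda+\bigl((1-\lambda)S(0)+\lambda S(0)\bigr)=c_\lambda+S(0)=S(c_\lambda).
\]
The middle equality uses convexity of $S(0)$.

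\textbf{Step 2 (concavity and conclusion).} Apply monotonicity of area to the inclusion, then Brunn--Minkowski:
\[
f(c_\lambda)^{1/2}\ge(1-\lambda)f(c_0)^{1/2}+\lambda f(c_1)^{1/2}\ge L\sqrt{p}.
\]
Hence $c_\lambda\in R$, so $R$ is convex. If $R$ is empty the statement holds trivially.

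The only delicate point is the non-emptiness hypothesis in Brunn--Minkowski, and it is handled by $p>0$. It is also worth noting why the argument goes through $f^{1/2}$: $f$ itself is not concave, since it vanishes outside $P\oplus S(0)$, and only its square root is concave on its support. A reader may reasonably ask for that clarification.
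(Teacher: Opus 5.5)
Your proposal is correct and follows essentially the same route as the paper's proof. Both use the inclusion $(1-t)K_0+tK_1\subseteq P\cap S(c_t)$ from convexity of $P$ and the translated square, then apply the planar Brunn--Minkowski inequality to get $\sqrt{\operatorname{Area}(P\cap S(c_t))}\ge\sqrt{pL^2}$; your points about non-emptiness via $p>0$, closedness, and $(1-\lambda)S(0)+\lambda S(0)=S(0)$ simply make explicit details the paper leaves implicit.
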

\begin{proof}
Let $K_i=P\cap S(c_i)$ for two feasible centers $c_0,c_1\in R$ and let $c_t=(1-t)c_0+tc_1$, $t\in[0,1]$. Convexity of $P$ and the translated square gives
\[
(1-t)K_0+tK_1\subseteq P\cap S(c_t).
\]
The Brunn--Minkowski inequality~\cite{Schneider2013ConvexTheory} therefore yields
\[
\sqrt{\operatorname{Area}(P\cap S(c_t))}
\ge (1-t)\sqrt{\operatorname{Area}(K_0)}+t\sqrt{\operatorname{Area}(K_1)}
\ge\sqrt{pL^2}.
\]
Thus $c_t\in R$. The result also allows $R$ to be empty; strict log-concavity is not required.

\end{proof}
\begin{lemma}[Feasible interval along a ray]
Let $c_0\in R$ be a verified feasible center. For a unit vector $u$, the set $\{r\ge0:c_0+ru\in R\}$ is an interval containing zero.
\end{lemma}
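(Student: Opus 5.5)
The plan is to deduce the lemma directly from the preceding theorem on convexity of the valid region. Fix $c_0\in R$ and a unit vector $u$, and set $I=\{r\ge 0: c_0+ru\in R\}$. The map $\gamma(r)=c_0+ru$ is affine from $[0,\infty)$ into $\mathbb{R}^2$. Hence $I=\gamma^{-1}(R)\cap[0,\infty)$ is the intersection of the preimage of a convex set under an affine map with a convex subset of $\mathbb{R}$. It is therefore a convex subset of $\mathbb{R}$, which is exactly an interval. It contains $0$ because $\gamma(0)=c_0\in R$ by hypothesis.

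Concretely, I will check convexity by hand. Take $r_0,r_1\in I$ and $t\in[0,1]$. Then
\[
c_0+((1-t)r_0+tr_1)u=(1-t)(c_0+r_0u)+t(c_0+r_1u),
\]
which is a convex combination of two points of $R$. It lies in $R$ by the preceding theorem, and the parameter $(1-t)r_0+tr_1$ is nonnegative. So $I$ is convex, hence an interval, and $0\in I$.

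For completeness I will also note the shape of this interval, since the sampler uses it to find the boundary along the ray. The function $c\mapsto \operatorname{Area}(P\cap S(c))$ is continuous in $c$: translating the square by a small amount changes the intersection by at most the symmetric difference of two squares, whose area is $O(L\,\|\Delta c\|)$. It follows that $R$ is closed and $I$ is closed. Moreover $R$ is bounded, because feasibility forces $S(c)$ to meet the bounded polygon $P$ and $p>0$. So $I=[0,r^\ast]$ for some finite $r^\ast\ge 0$.

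No step here is a genuine obstacle; the content is inherited from the convexity theorem. The only points needing care are the degenerate situations. If $R$ reduces to $\{c_0\}$ along the ray, the interval is the singleton $\{0\}$, which is still an interval, so no strict-feasibility assumption is needed. The closedness claim, if included, relies on the continuity estimate above rather than on convexity.
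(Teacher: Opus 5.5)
Your proposal is correct and matches the paper's proof. Both intersect the convex region $R$ from the preceding theorem with the ray to get an interval containing zero, then use boundedness (from bounded $P$, $L>0$, $p>0$) and closedness (from continuity of the intersection area) to conclude it has the form $[0,r_{\max}]$, possibly with $r_{\max}=0$; your symmetric-difference estimate simply makes explicit the continuity step the paper asserts.
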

\begin{proof}
Intersecting a convex set with a ray gives a convex subset of that ray. Since $P$ is bounded, $L>0$, and $p>0$, the feasible set is bounded; continuity of the intersection area makes it closed. Consequently the interval has the form $[0,r_{\max}]$, possibly with $r_{\max}=0$.
\end{proof}
This result justifies searching for the exit from a feasible interval, but does not imply that intersection area decreases immediately from the polygon centroid. The centroid must first be checked for feasibility. A bracketed root finder such as Brent's method~\cite{Brent1972AlgorithmsDerivatives} additionally needs an inside point with positive residual and an outside point with negative residual. A threshold plateau may contain multiple roots, so an implementation must define the outer feasible endpoint and its numerical tolerance. Sampling finitely many directions produces an approximation to $R$; candidate centers should be checked against the original overlap constraint.


As demonstrated in Figure \ref{fig:min_overlap_polygon}, the minimum overlap polygon restricts the valid-center region.
\begin{figure}[htbp]
    \centering
    \includegraphics[width=1\linewidth,height=0.80\textheight,keepaspectratio]{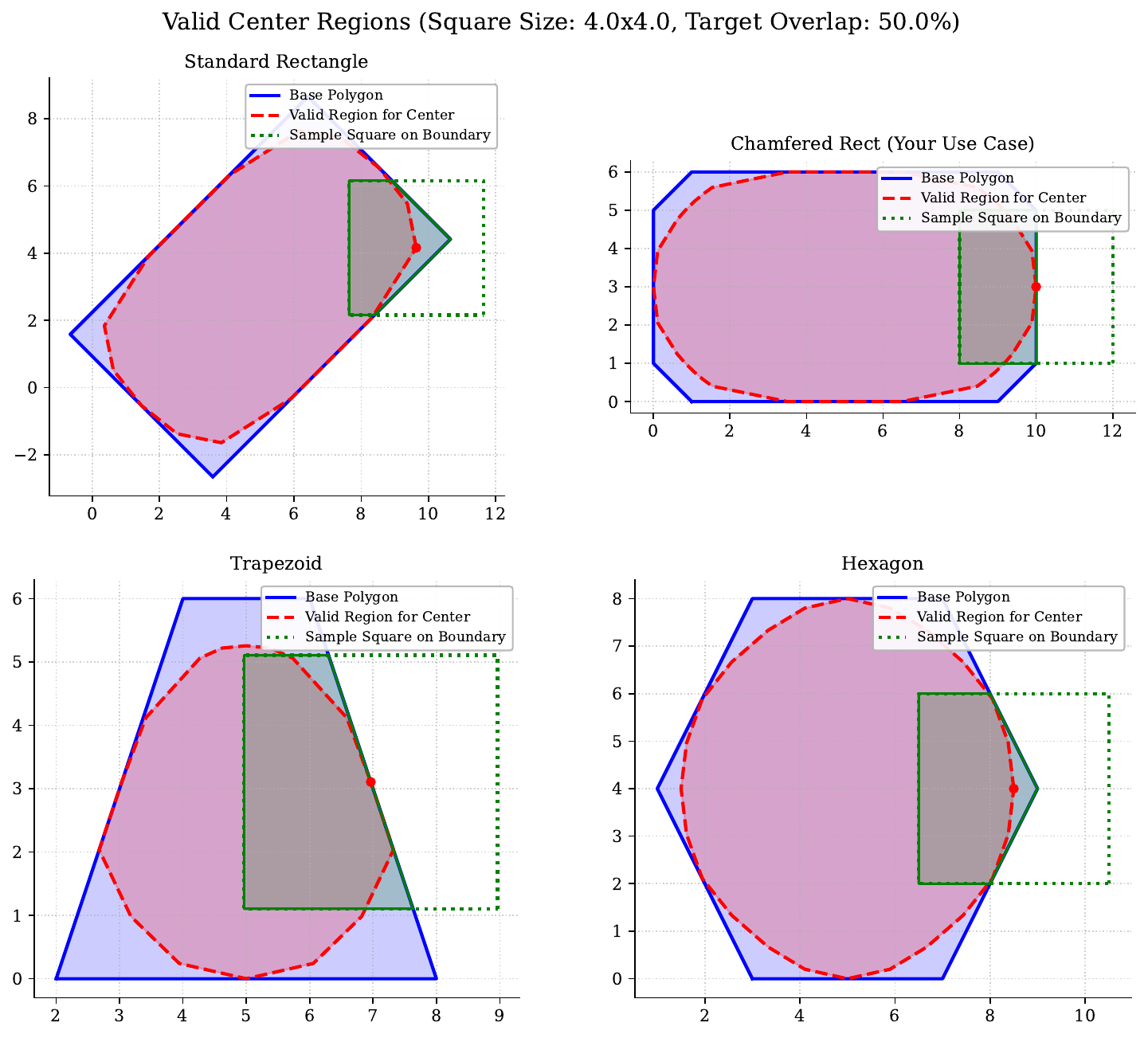}
    \caption{Feasible center regions for four convex footprints at $p=0.5$. Blue shows the footprint, red the sampled feasible boundary, and green a test square. The exact region is convex but need not be polygonal; a sampled polygon approximates it.}
    \label{fig:min_overlap_polygon}
\end{figure}
\FloatBarrier
\subsubsection{Optimality of Independent Strip Packing}

Having established the convex bounding geometry $R$, the algorithm restricts all square centers to $c \in R$. The packing algorithm discretizes the space into independent horizontal (or vertical) strips.
\begin{definition}[Strip Domain]
For a given vertical phase shift $y_0$ and stride $\sigma = L(1-\delta)$, the $k$-th horizontal strip is evaluated at $y_k = y_0 + k\sigma$. The available domain for placing centers is the 1D intersection segment $I_k = R \cap \{ (x, y_k) \mid x \in \mathbb{R} \}$, illustrated in Figures \ref{fig:hero_single_strip_po00} and \ref{fig:hero_single_strip_po50}.
\end{definition}
\begin{figure}[hbtp]
    \centering
    \includegraphics[width=1\linewidth,height=0.80\textheight,keepaspectratio]{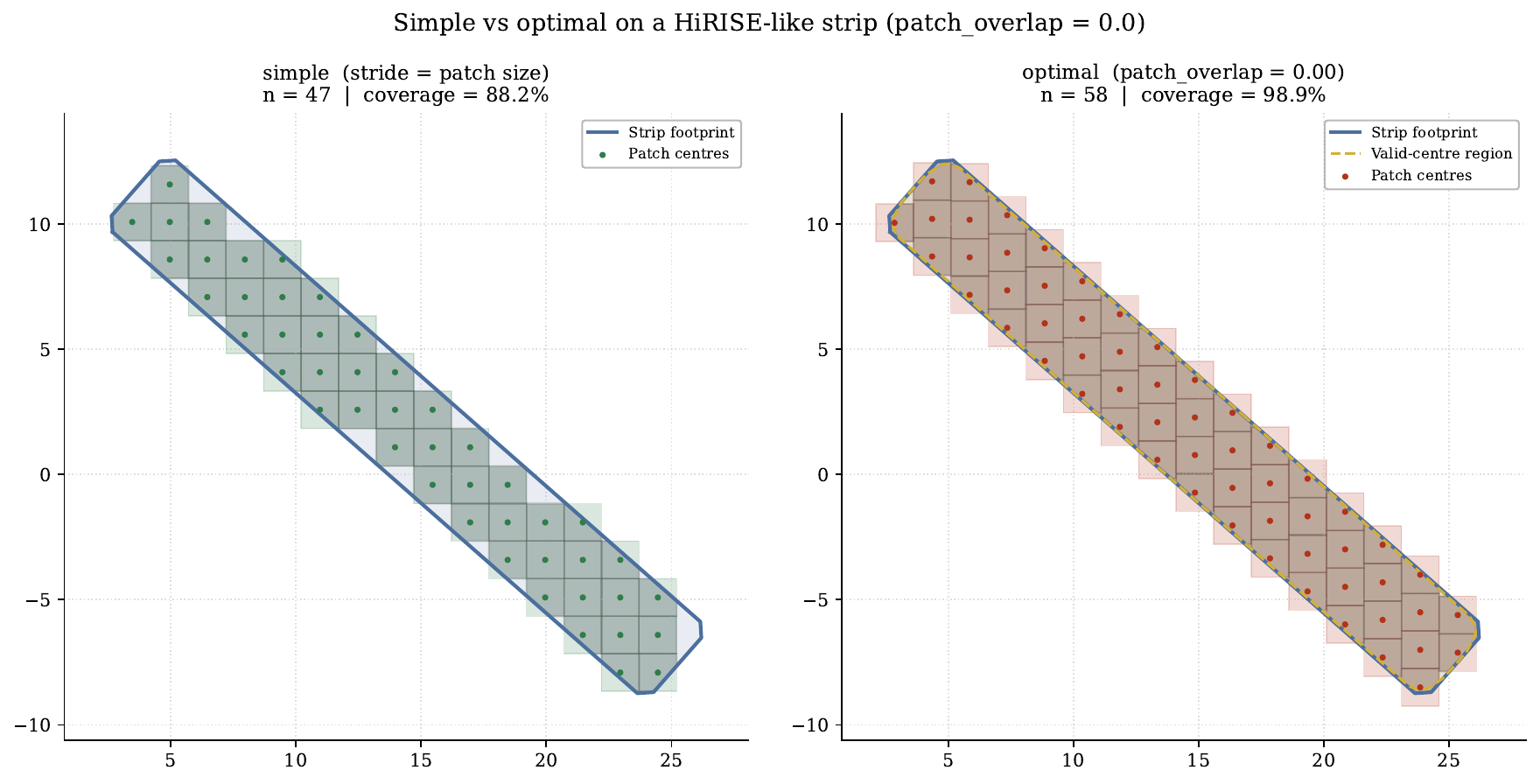}
    \caption{Column-locked grid and independent-row placement on one synthetic strip at $\delta=0$. The exported counts are 47 and 58 valid patches, with reported coverage 88.2\% and 98.9\%. These illustrate the benefit on this geometry; the theorem guarantees maximal count within each fixed row, not maximal covered area.}
    \label{fig:hero_single_strip_po00}
\end{figure}
\begin{figure}[hbtp]
    \centering
    \includegraphics[width=1\linewidth,height=0.80\textheight,keepaspectratio]{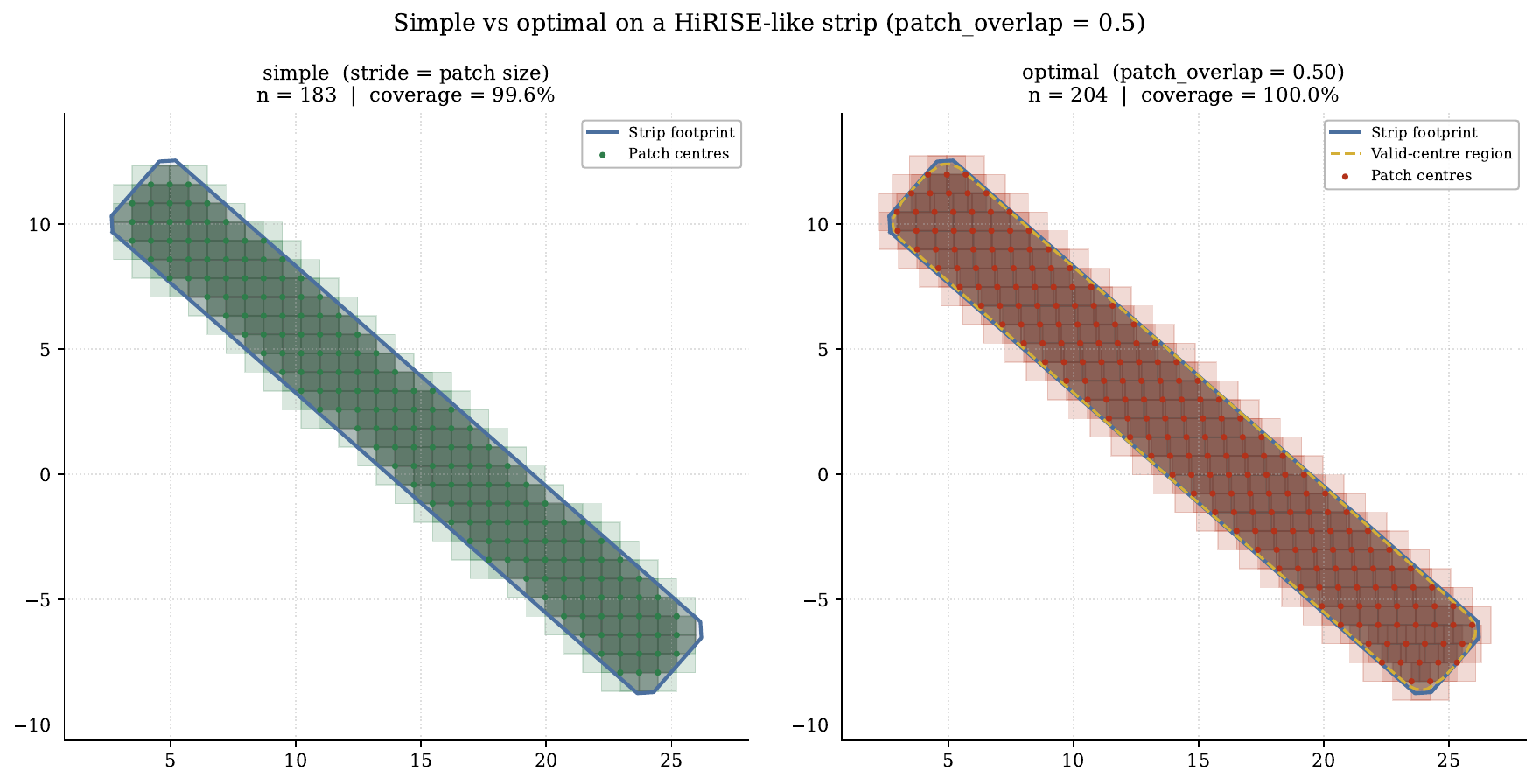}
    \caption{The same comparison at $\delta=0.5$. The export reports 183 versus 204 valid patches and 99.6\% versus 100\% footprint coverage. ``Optimal'' in the retained plot refers to independent-row placement under the stated restrictions.}
    \label{fig:hero_single_strip_po50}
\end{figure}
Since $R$ is convex, each nonempty $I_k$ is a line segment (possibly a single point) $[x_{min}, x_{max}]$ with available width $W = x_{max} - x_{min}$.
\begin{theorem}[1D Strip Optimality]
Given a continuous segment $I_k$ of width $W$, and a minimum required stride $\sigma > 0$ between points, the maximum number of square centers $N$ that can be packed is exactly $N_{max} = \lfloor W / \sigma \rfloor + 1$. Independent placement within that row attains this bound. Empty rows contribute zero centers.
\end{theorem}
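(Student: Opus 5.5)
The argument is a one-dimensional packing bound, split into an upper bound and a matching construction. By the convexity theorem for $R$, each nonempty $I_k$ is a closed segment $[x_{\min},x_{\max}]$ with $W=x_{\max}-x_{\min}\ge 0$. Closedness follows from continuity of the intersection area, exactly as in the feasible-interval lemma. The constraint is that any two centers placed in the row are at distance at least $\sigma$. For an empty row there are no admissible centers, so the count is zero.

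\textbf{Upper bound.} Take any admissible configuration in $I_k$ and sort its centers as $x_1<x_2<\dots<x_N$. The spacing constraint gives $x_{i+1}-x_i\ge\sigma$ for each consecutive pair. Telescoping these inequalities yields
\[
(N-1)\sigma\le x_N-x_1\le W.
\]
Hence $N-1\le W/\sigma$. Since $N-1$ is an integer, $N-1\le\lfloor W/\sigma\rfloor$.

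\textbf{Attainment.} Place the centers greedily at $x_i=x_{\min}+(i-1)\sigma$ for $i=1,\dots,\lfloor W/\sigma\rfloor+1$. The last center satisfies $x_{\min}+\lfloor W/\sigma\rfloor\sigma\le x_{\min}+W=x_{\max}$, so every center lies in $I_k$. Because $I_k\subseteq R$, each center is feasible. Consecutive centers are separated by exactly $\sigma$, so the configuration is admissible. The same greedy construction is run on each row independently, since the row domains are fixed and the theorem makes no claim about placements across rows.

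\textbf{Main obstacle.} The arithmetic is trivial. The care lies in the hypotheses. First, $I_k$ must be closed; otherwise the greedy endpoint $x_{\min}$ or the last center $x_{\max}$ may not be attainable, and the count can drop by one when $W/\sigma$ is an integer. I would invoke the closedness argument from the feasible-interval lemma here. Second, the statement should be read as a bound on the number of centers only, not on area coverage. Third, in a numerical implementation that approximates $R$, the endpoints carry a tolerance, so the floor must be taken on the certified width.
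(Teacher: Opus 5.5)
Your proposal is correct and matches the paper's proof. Both bound the count by telescoping $x_{i+1}-x_i\ge\sigma$ to obtain $(N-1)\sigma\le W$, and both attain $\lfloor W/\sigma\rfloor+1$ with an arithmetic progression of spacing $\sigma$ inside $I_k$; the only differences are that the paper centers the progression, starting at $x_{min}+\tfrac12\bigl(W-(N_{max}-1)\sigma\bigr)$, rather than left-aligning it at $x_{min}$, and that your explicit closedness check makes precise what the paper assumes when it writes $I_k=[x_{min},x_{max}]$.
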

\begin{proof}
Let a set of points $\{x_1, x_2, \dots, x_N\}$ be placed on the interval $[x_{min}, x_{max}]$ such that $x_{i+1} - x_i \ge \sigma$.
The interval width bounds the total distance spanned by $N$ points:
\[
\sum_{i=1}^{N-1} (x_{i+1} - x_i) \le x_{max} - x_{min} = W
\]
Since $x_{i+1} - x_i \ge \sigma$, we have $(N-1)\sigma \le W$.
Solving for $N$ yields $N \le \frac{W}{\sigma} + 1$.
Since $N$ must be an integer, $N_{max} = \lfloor W/\sigma \rfloor + 1$.

The algorithm calculates the required count directly via this formula. It then centers the points on $I_k$ by placing the first point at $x_1 = x_{min} + \frac{1}{2}(W - (N_{max}-1)\sigma)$. The centering offset $\tfrac12[W-(N_{max}-1)\sigma]$ is nonnegative, hence $x_1\ge x_{min}$; the coordinate $x_1$ itself need not be nonnegative. The last point is $x_{N_{max}} = x_1 + (N_{max}-1)\sigma = x_{max} - \frac{1}{2}(W - (N_{max}-1)\sigma) \le x_{max}$. Thus, all points are perfectly contained in $I_k$, proving that the algorithm achieves the row-wise upper bound on the number of centers. Demonstrating this optimal placement using a variety of polygons (Figure \ref{fig:gallery_strips}), this figure illustrates that the optimal square-center placement algorithm provides a consistent improvement over the standard axis-aligned box placement.
\end{proof}
\begin{figure}[htbp]
    \centering
    \includegraphics[width=0.8\linewidth,height=0.80\textheight,keepaspectratio]{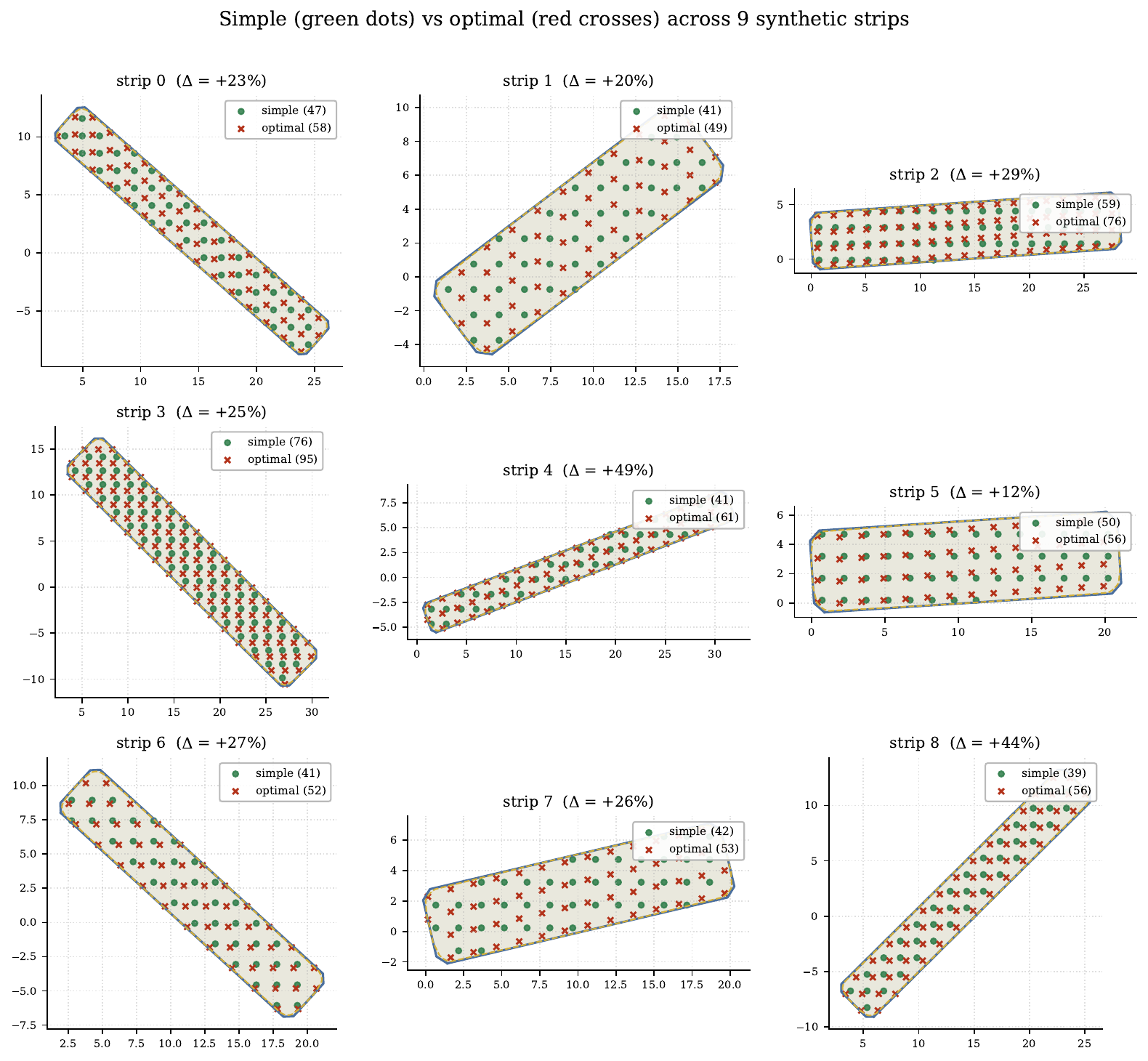}
    \caption{Nine synthetic footprints comparing column-locked centers (green) and independent-row centers (red). Each panel retains the original counts and relative gain. The examples support a descriptive yield comparison under the sampled geometries; they do not establish a universal coverage improvement.}
    \label{fig:gallery_strips}
\end{figure}
\begin{lemma}[Superiority over Orthogonal Grids]
Fix the patch orientation, stride, and row coordinates $y_k$. Let $N_{grid}$ count valid centers from any column-locked orthogonal grid on these rows, and let $N_{strip}$ be the sum of the row-wise maxima. Then $N_{strip}\ge N_{grid}$. This is a cardinality result for the same row system, not a proof of globally optimal two-dimensional coverage.
\end{lemma}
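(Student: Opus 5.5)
The argument reduces the two-dimensional count to a row-by-row comparison. Both configurations place centers only on the same horizontal lines $y_k=y_0+k\sigma$, and a center counts as valid precisely when it lies in $R$. Write the grid count as $N_{grid}=\sum_k n_k$, where $n_k$ is the number of valid grid centers on row $y_k$, and the strip count as $N_{strip}=\sum_k N_k$, where $N_k$ is the row-wise maximum supplied by the 1D Strip Optimality theorem ($N_k=\lfloor W_k/\sigma\rfloor+1$ for a nonempty $I_k$, and $N_k=0$ otherwise). It then suffices to prove $n_k\le N_k$ for every $k$ and sum over the rows.

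Fix a row $k$. If $I_k$ is empty, then no point of row $y_k$ lies in $R$, so $n_k=0=N_k$. Otherwise, by the Convexity of the Valid Region theorem, $I_k=[x_{min},x_{max}]$ is a segment. Every valid grid center on this row lies in $I_k$. The grid is column-locked with column spacing equal to the stride, so its valid centers on row $k$ form a set of points of $I_k$ with pairwise gaps at least $\sigma$. (If the grid is allowed a column spacing $\sigma'\ge\sigma$, the gaps are still at least $\sigma$.) This set is therefore one of the configurations covered by the upper bound in the 1D Strip Optimality theorem, which gives $n_k\le\lfloor W_k/\sigma\rfloor+1=N_k$. The same theorem states that independent per-row placement attains $N_k$, so $N_{strip}$ is realized by an actual configuration rather than being only a bound. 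Summing over $k$ yields $N_{strip}\ge N_{grid}$.

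The main obstacle is pinning down exactly which configurations count as a column-locked grid. The inequality needs only two properties. First, the grid's rows are among the fixed $y_k$; the statement fixes this. Second, consecutive valid centers within a row are at least $\sigma$ apart. The proof should say explicitly that validity is membership in $R$, so that nonconvex effects play no role and valid grid points on a row need not be contiguous. Non-contiguity is harmless because the 1D bound applies to any $\sigma$-separated subset of $I_k$. Finally, as the statement itself notes, the argument compares counts only and says nothing about covered area.
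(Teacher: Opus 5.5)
Your proof is correct and follows the paper's argument. The paper observes that a column-locked grid is a constrained special case of the row-wise placement problem, so the independent row maxima cannot do worse; your row-by-row bound $n_k\le\lfloor W_k/\sigma\rfloor+1=N_k$ via the 1D Strip Optimality theorem is that same relaxation made explicit. (One small point: the inequality needs only the upper-bound half of that theorem, and writing $I_k$ as a closed segment $[x_{min},x_{max}]$ relies on continuity of the overlap area, as in the paper's ray lemma, not on convexity alone.)
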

\begin{proof}
A standard 2D grid enforces strict column locking; if row $k$ contains a point at $x_i$, row $k+1$ must place a point precisely at $x_i$ to maintain the column. The independent strip algorithm relaxes this constraint, evaluating each row $I_k$ exclusively based on its own width $W_k$. Since a global grid is merely a special case of strip packing where $x_{start}$ is identically constrained across all rows, the unconstrained optimization of each row independently cannot perform worse than the constrained case. Therefore, $N_{strip} \ge N_{grid}$. The synthetic comparisons illustrating this restricted guarantee are retained in Figures \ref{fig:publication_figure}, \ref{fig:sweep_min_overlap}, and \ref{fig:sweep_patch_overlap}.
\end{proof}
\begin{figure}[htbp]
    \centering
    \includegraphics[width=1\linewidth,height=0.80\textheight,keepaspectratio]{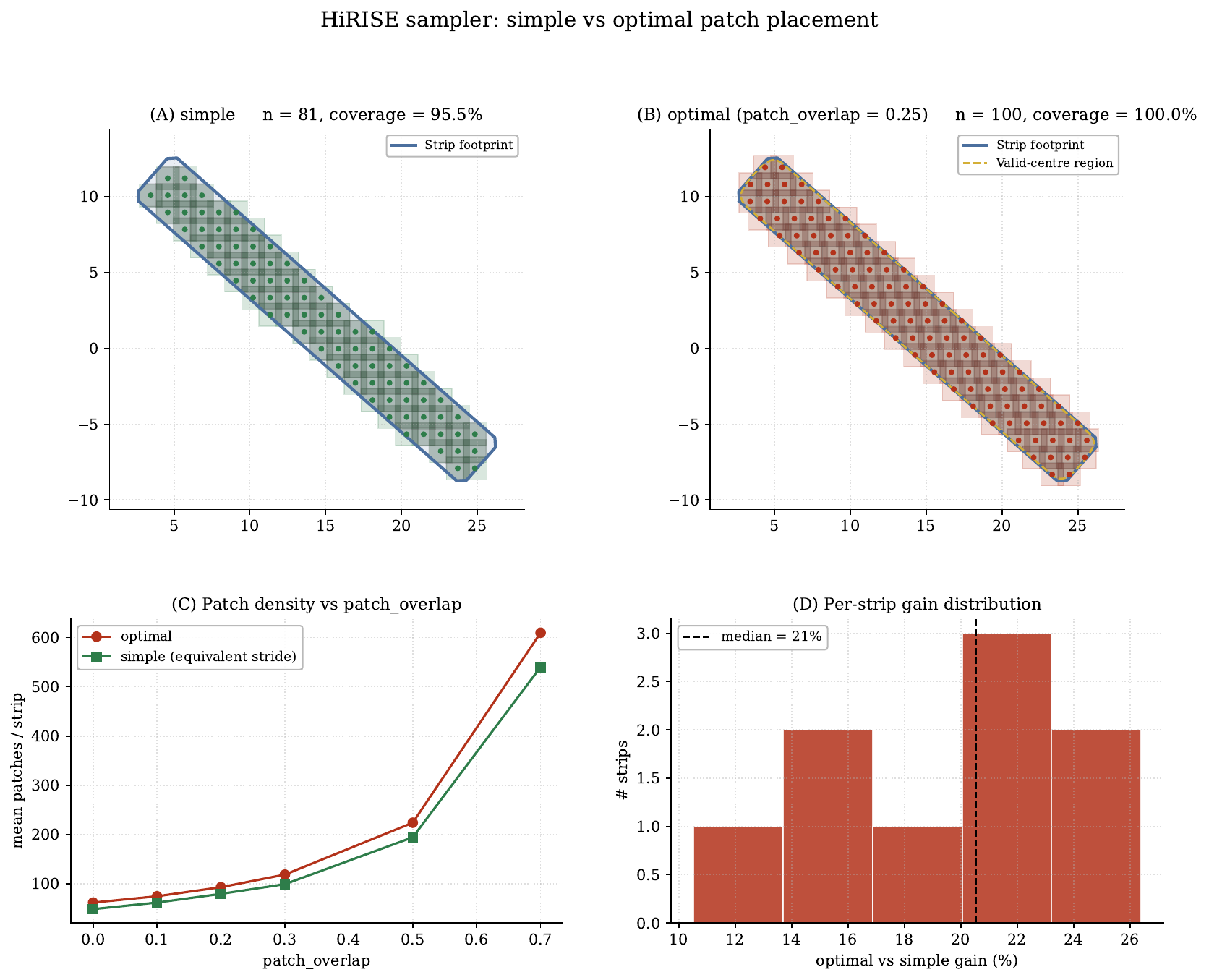}
    \caption{Comprehensive evaluation of the simple versus optimal patch placement algorithms. Panels (A) and (B) illustrate the spatial distribution and coverage gains on a representative synthetic strip. Panel (C) tracks the mean patch density as a function of the internal patch overlap parameter. Panel (D) presents the distribution of per-strip percentage gains across the synthetic dataset, indicating higher patch counts in the illustrated examples; these are descriptive comparisons, not a statistical significance test.}
    \label{fig:publication_figure}
\end{figure}
\begin{figure}[htbp]
    \centering
    \includegraphics[width=1\linewidth,height=0.80\textheight,keepaspectratio]{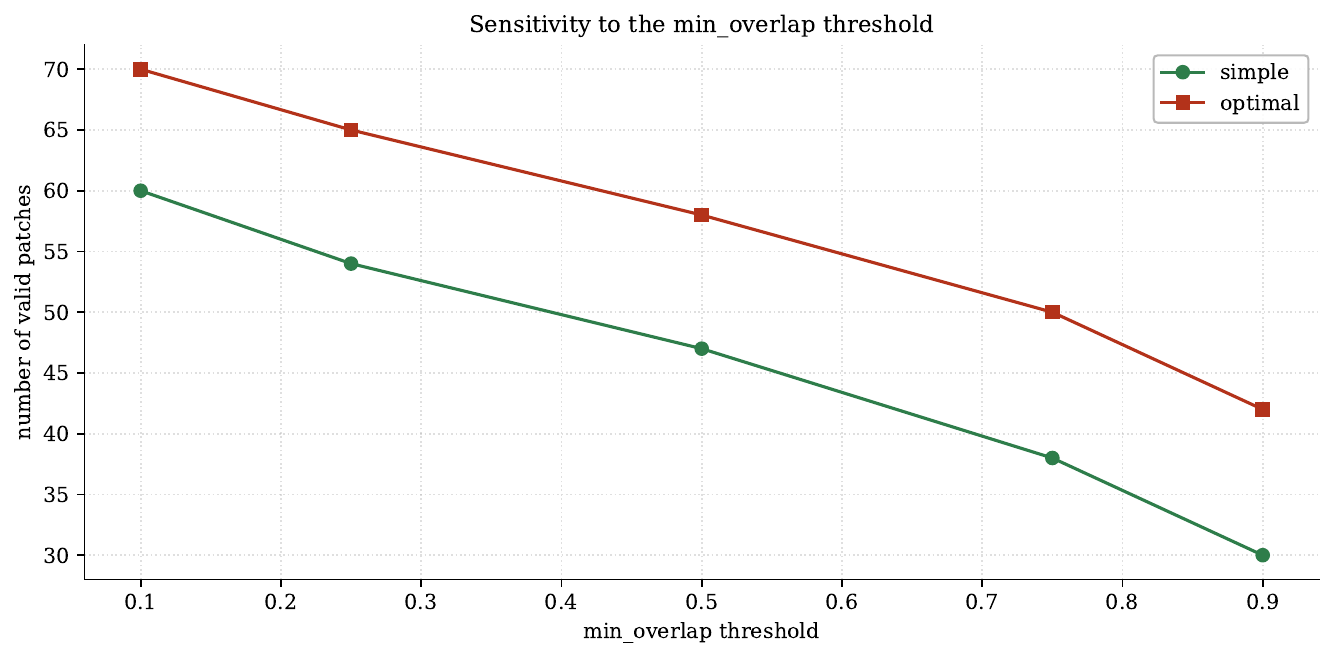}
    \caption{Sensitivity analysis of the valid patch yield with respect to the minimum required overlap threshold ($p$). The optimal independent strip packing algorithm consistently identifies more valid patch centers across all threshold constraints than the simple grid baseline.}
    \label{fig:sweep_min_overlap}
\end{figure}
\begin{figure}[htbp]
    \centering
    \includegraphics[width=1\linewidth,height=0.80\textheight,keepaspectratio]{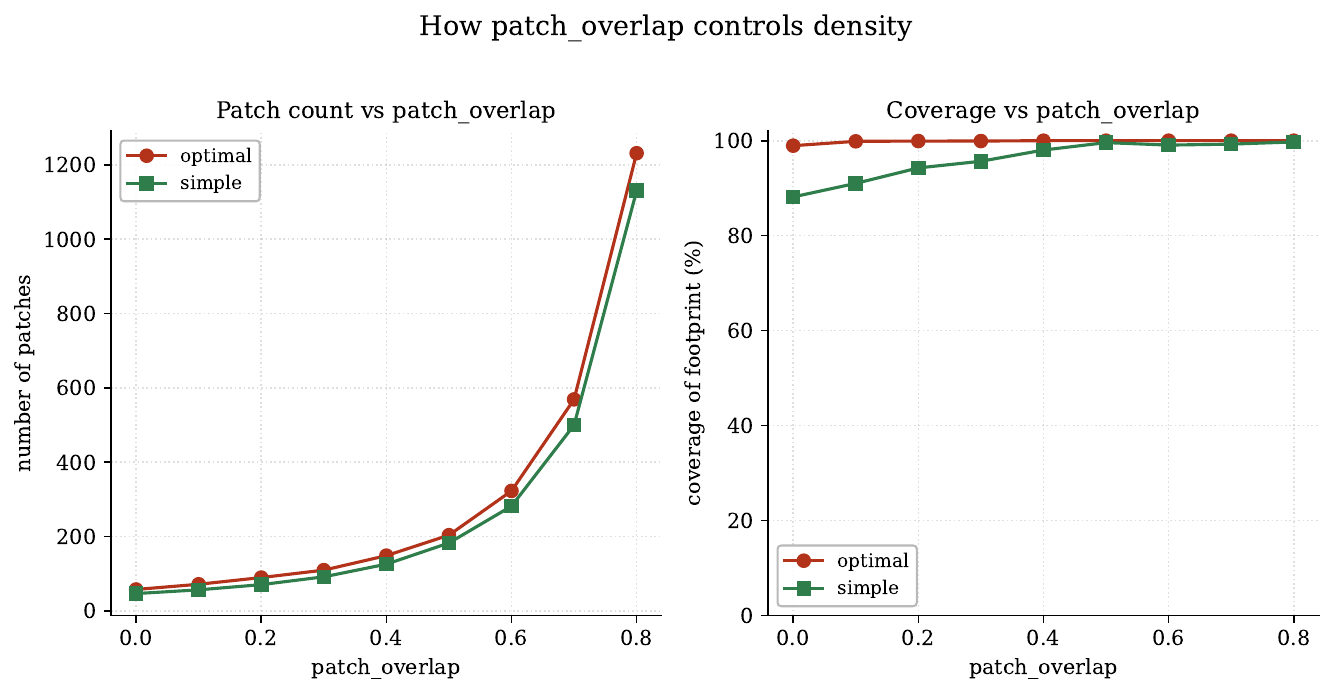}
    \caption{Impact of the patch overlap parameter ($\delta$) on packing density and footprint coverage. The left panel shows how the total patch count scales with increasing overlap, while the right panel illustrates the corresponding increase in total area coverage, highlighting the optimal algorithm's superior space utilization.}
    \label{fig:sweep_patch_overlap}
\end{figure}
%
%

\newpage
\FloatBarrier
\section{DTM Dataset} \label{sec:dtm_dataset}

\FloatBarrier
\subsection{DTM characteristics and processing pipeline} \label{sec:dtm_characteristics}

HiRISE DTM posting is commonly about four times the source-image pixel scale: 0.25--0.5\,m imagery can accompany terrain grids at 1--2\,m posting. Vertical precision depends on stereo geometry, matching quality, and terrain; it is not a universal sub-meter error bound. For the pipeline's nominal 1\,m DTM grid, the equatorial angular spacing based on the reference equatorial radius is:
\begin{align}
    \text{res}_{\text{DTM}} = \frac{1}{59,274.70} \approx 1.687 \times 10^{-5} \text{ \degree/pixel}
\end{align}
This posting is four times coarser than a 0.25\,m image grid and twice as coarse as a 0.5\,m image grid. The comparison uses posting only; interpolation to the image grid does not increase independently measured terrain detail.

\FloatBarrier
\subsubsection{Data formats and metadata}

Unlike standard RDR products, DTM elevations are stored as single-file PDS3 products (`.IMG`) with the metadata label embedded directly at the start of the binary file. The raster data is stored as 32-bit IEEE 754 single-precision floats representing physical elevation in meters. Because the data is already in physical units, no secondary radiometric calibration is applied to the elevation values.

Conversely, the orthorectified images (orthos) attached to each stereo pair are JPEG2000 (`.JP2`) files with detached `.LBL` sidecars. These orthos are produced via the SOCET Set pipeline, which outputs 8-bit brightness values rather than the 10-bit instrument digital number (DN) range preserved in the primary RDR path. Consequently, the standard calibration constants for converting to radiance factor (I/F) cannot be naively reused; each ortho image requires specific scaling and offset factors parsed directly from its respective label, as compared in Table \ref{tab:dtm_vs_rdr_ortho}.

\begin{table}[H]
    \centering
    \caption{Example encoded-image attributes in the inspected RDR and DTM-ortho products. Read bit depth, scaling, and offset from each label; these examples are not universal product constants.}
    \label{tab:dtm_vs_rdr_ortho}
    \begin{tabular}{@{} l c c @{}}
        \toprule
        \textbf{Attribute} & \textbf{Standard RDR} & \textbf{DTM Ortho} \\
        \midrule
        Sample Bits & 16 & 8 \\
        Effective Max DN & 1023 (10-bit) & 255 (8-bit) \\
        Typical Scaling Factor & $\sim 1.24 \times 10^{-4}$ & $\sim 8.49 \times 10^{-5}$ \\
        Typical Offset & $\sim 0.033$ & $\sim 0.069$ \\
        \bottomrule
    \end{tabular}
\end{table}
The orthorectified images used for DTM generation use mosaics of either the RED CCDs or three-band images comprising near-infrared, red visible, and blue-green visible (i.e., IRB). Each of the three color bands can be converted from DN to I/F using the same formula as for the single band image.

Figure~\ref{fig:pixel_histograms_dtm} compares the left and right RED distributions for the inspected 200-product subset. Similar aggregate histograms are useful for checking encoding and exposure ranges, although they do not demonstrate pixelwise radiometric correspondence. Pixel totals count sampled values and can include repeated coverage from overlapping patches.
\begin{figure}[htbp]
     \centering
     \begin{subfigure}[b]{0.49\textwidth}
         \centering
         \includegraphics[width=\textwidth,height=0.80\textheight,keepaspectratio]{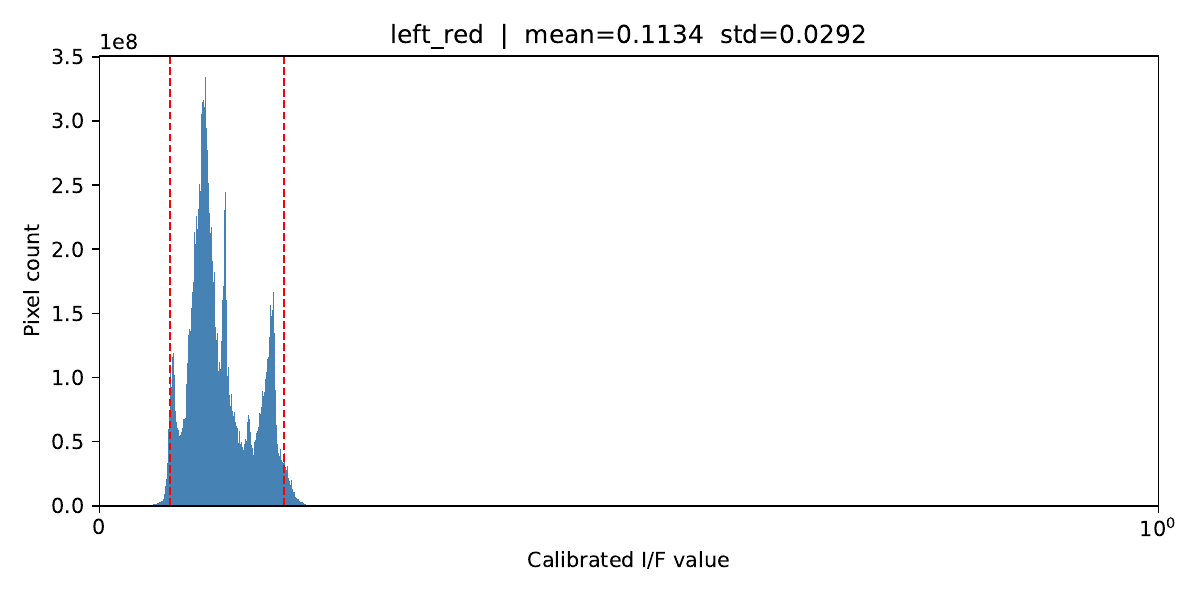}
         \caption{Left red spectral band}
         \label{fig:histogram_dtm_left_red}
     \end{subfigure}
     \hfill
     \begin{subfigure}[b]{0.49\textwidth}
         \centering
         \includegraphics[width=\textwidth,height=0.80\textheight,keepaspectratio]{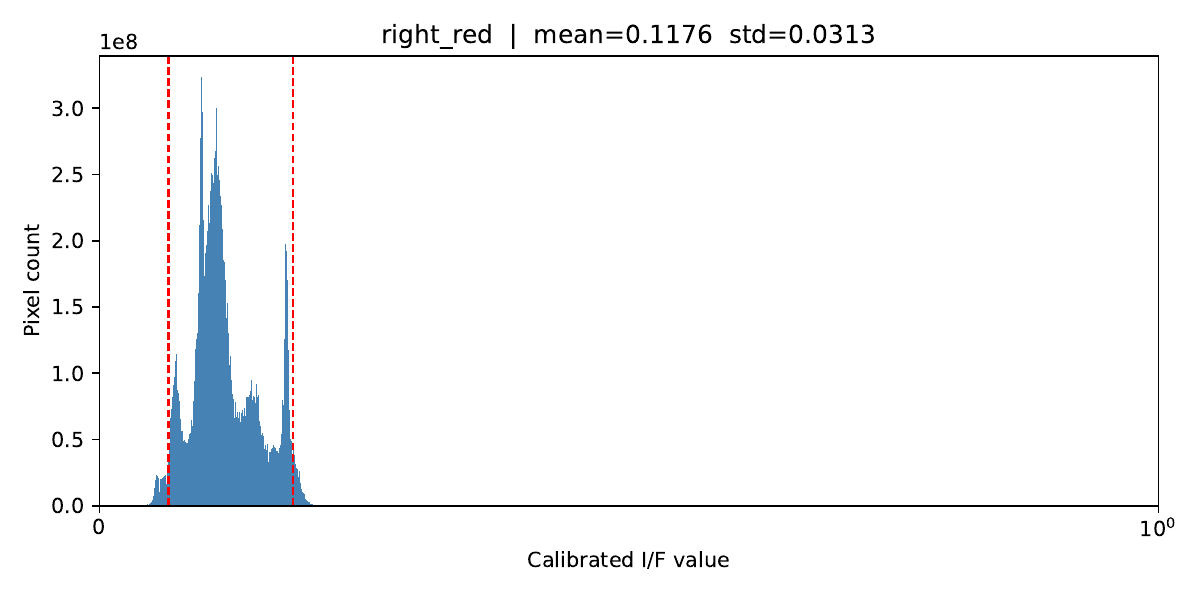}
         \caption{Right red spectral band}
         \label{fig:histogram_dtm_right_red}
     \end{subfigure}
        \caption{Per-pixel radiometrically calibrated distribution of valid patches with a patch size of 0.018 degrees over 200 HiRISE DTM measurement readings. These patches are derived from the HiRISE DTM dataset and subsampled by filtering samples with a bounding box of $[-120\degree, -30\degree, 150\degree, 30\degree]$. The histograms are unnormalized frequency histograms that provide aggregated information on approximately 25 billion pixels per channel. The red lines in the histograms represent the 2nd and 98th percentiles of the data.}
        \label{fig:pixel_histograms_dtm}
\end{figure}
The problem is the elevation data: as noted in Figure \ref{fig:pixel_histograms_dtm_elevation} and Table \ref{tab:dataset_stats_extended_dtm}, the elevation range is in absolute measurement values, with minimums ranging from $-4460 m$ to $ 131 m$. From this data, we cannot use the absolute elevation data for the neural network application; we thus preprocessed the elevation patches.
\begin{figure}[htbp]
     \centering
         \includegraphics[width=\textwidth,height=0.80\textheight,keepaspectratio]{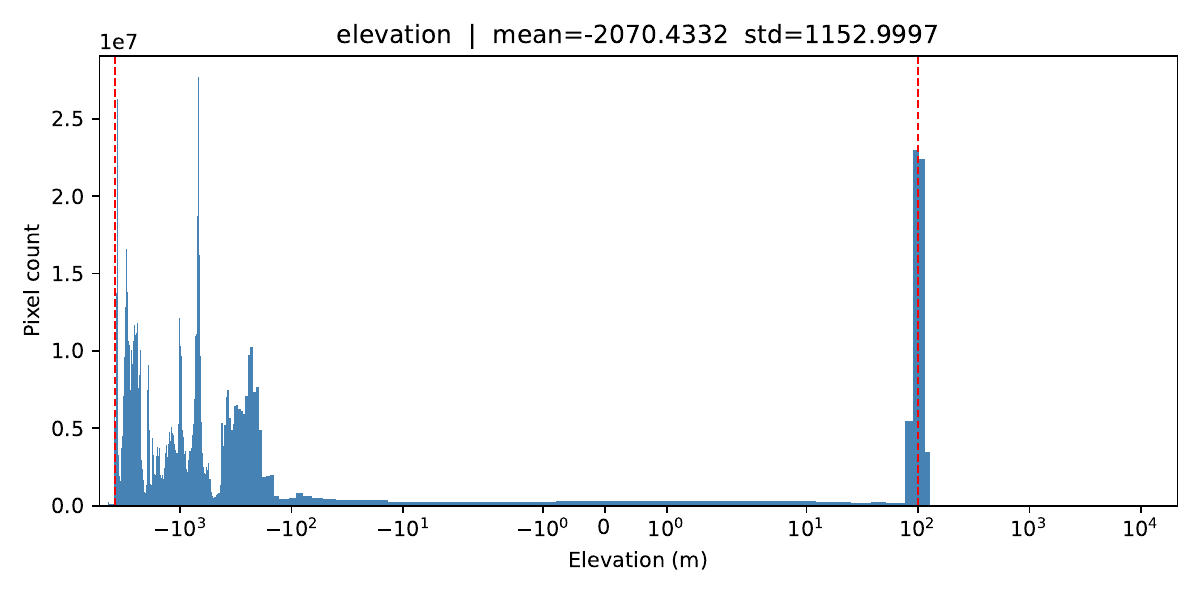}
        \caption{Per-pixel elevation data of valid patches with a patch size of 0.018 degrees over 200 HiRISE DTM measurement readings. These patches are derived from the HiRISE DTM dataset and subsampled by filtering samples within a bounding box of $[-120\degree, -30\degree, 150\degree, 30\degree]$. The histograms are semi-log-scaled, unnormalized frequency histograms that provide aggregated information on approximately 1 billion pixels per channel (there are fewer pixels due to resolution subsampling). The red lines in the histograms represent the 2nd and 98th percentiles of the data.}
        \label{fig:pixel_histograms_dtm_elevation}
\end{figure}
\begin{table}[htbp]
\centering
\caption{Summary statistics of the dataset across spectral channels. P2 and P98 represent the 2\% and 98\% percentiles. CV denotes the coefficient of variation ($\sigma/\mu$). DRCR denotes dynamic range compression ratio ($(P_{98}-P_{2})/\mu$).}
\label{tab:dataset_stats_extended_dtm}

\begin{tabular}{
l
S[table-format=1.4e2]
S[table-format=-4.4]
S[table-format=-4.4]
}
\toprule
\textbf{Channel} & {\textbf{$N_{\text{pixels}}$}} & {\textbf{Mean}} & {\textbf{Std}} \\
\midrule
Elevation & 1.6092e+09 & -2070.4332 & 1152.9997 \\
Left red & 2.5751e+10 & 0.1134 & 0.0292 \\
Right red & 2.6228e+10 & 0.1176 & 0.0313 \\
\bottomrule
\end{tabular}
\vspace{1.5em}

\begin{tabular}{
l
S[table-format=-4.4]
S[table-format=-4.4]
S[table-format=-4.4]
S[table-format=2.4]
S[table-format=-1.3]
S[table-format=-1.3]
}
\toprule
\textbf{Channel} & {\textbf{Min}} & {\textbf{Max}} & {\textbf{P2}} & {\textbf{P98}} & {\textbf{CV}} & {\textbf{DRCR}} \\
\midrule
Elevation & -4460.0215 & 131.0734 & -3818.7003 & 99.5918 & -0.557 & -1.892 \\
Left red & 0.0425 & 0.2005 & 0.0670 & 0.1739 & 0.258 & 0.942 \\
Right red & 0.0380 & 0.2085 & 0.0653 & 0.1829 & 0.266 & 1.000 \\
\bottomrule
\end{tabular}
\end{table}
Each terrain patch is detrended by fitting a plane to its valid pixels and subtracting that plane from the original elevations. The target is therefore local residual relief rather than absolute elevation or regional slope. The reference plane is metadata for reconstruction and must not be silently supplied from test ground truth in a deployable single-image evaluation.

\textbf{Detrending}:
Let $\mathcal{S}$ be the set of coordinates for all valid pixels as defined by the mask $M$:
\begin{align}
\mathcal{S} = \{(x_i, y_i) \in [0, W-1] \times [0, H-1] \mid M(y_i, x_i) = 1\}
\end{align}
Let $N = |\mathcal{S}|$ be the total number of valid pixels. We model the global linear trend as a plane defined by the function $z(x, y) = c_0 x + c_1 y + c_2$. To find the optimal parameters $\mathbf{c} = [c_0, c_1, c_2]^T$, we minimize the sum of squared errors evaluated strictly over the valid subset $\mathcal{S}$:
\begin{align}
\mathbf{c}^* = \arg\min_{\mathbf{c}} \sum_{(x_i, y_i) \in \mathcal{S}} (Z(y_i, x_i) - (c_0 x_i + c_1 y_i + c_2))^2
\end{align}
The least-square formulation is characterized through the design matrix $A \in \mathbb{R}^{N \times 3}$ and the observation vector $\mathbf{z} \in \mathbb{R}^N$:
\begin{align}
A = \begin{bmatrix} x_1 & y_1 & 1 \\ x_2 & y_2 & 1 \\ \vdots & \vdots & \vdots \\ x_N & y_N & 1 \end{bmatrix}, \quad \mathbf{z} = \begin{bmatrix} Z(y_1, x_1) \\ Z(y_2, x_2) \\ \vdots \\ Z(y_N, x_N) \end{bmatrix}
\end{align}
When $A$ has full column rank, the ordinary least-squares solution can be written using the normal equations. A QR or SVD solve is preferable to explicitly forming the inverse in numerical code:
\begin{align}
\mathbf{c}^* = (A^T A)^{-1} A^T \mathbf{z}
\end{align}
Finally, the final detrended matrix $\hat{Z} \in \mathbb{R}^{H \times W}$ is computed by evaluating the estimated trend surface over the entire grid and subtracting it from the original data:
\begin{align}
\hat{Z}(y, x) = Z(y, x) - (c_0^* x + c_1^* y + c_2^*)
\end{align}
The final statistics of the detrended dataset are represented both in Figure \ref{fig:pixel_histograms_dtm_elevation_detrended} and Table \ref{tab:dataset_stats_extended_elevation_centered}.
\begin{figure}[htbp]
     \centering
         \includegraphics[width=\textwidth,height=0.80\textheight,keepaspectratio]{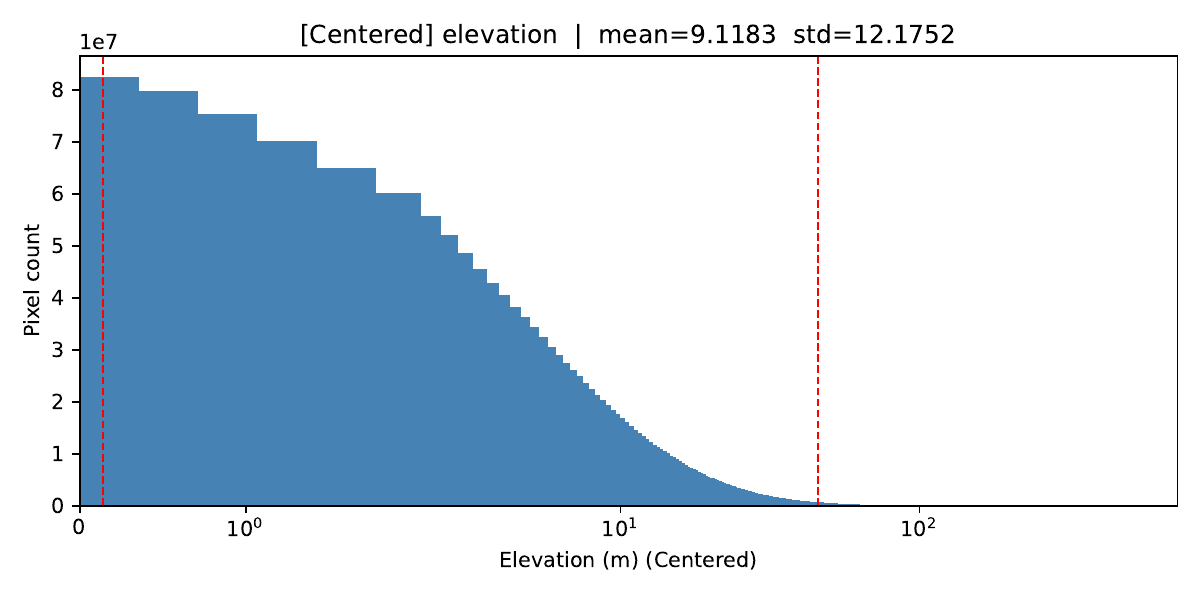}
        \caption{Per-pixel detrended absolute value elevation data of valid patches with a patch size of 0.018 degrees over 200 HiRISE DTM measurement readings. These patches are derived from the HiRISE DTM dataset and subsampled by filtering samples with a bounding box of $[-120\degree, -30\degree, 150\degree, 30\degree]$. The histograms are semi-log-scaled, unnormalized frequency histograms that provide aggregated information on approximately 1 billion pixels per channel (there are fewer pixels due to resolution subsampling). The red lines in the histograms represent the 2nd and 98th percentiles of the data.}
        \label{fig:pixel_histograms_dtm_elevation_detrended}
\end{figure}
\begin{table}[tbh]
\centering
\caption{Summary statistics of the absolute detrended residual $|Z-\mathrm{plane}|$ in meters. P2 and P98 represent the 2\% and 98\% percentiles. CV denotes the coefficient of variation ($\sigma/\mu$). DRCR denotes dynamic range compression ratio ($(P_{98}-P_{2})/\mu$).}
\label{tab:dataset_stats_extended_elevation_centered}

\begin{tabular}{
l
S[table-format=1.4e2]
S[table-format=3.4]
S[table-format=3.4]
}
\toprule
\textbf{Channel} & {\textbf{$N_{\text{pixels}}$}} & {\textbf{Mean}} & {\textbf{Std}} \\
\midrule
Elevation & 1.6092e+09 & 9.1183 & 12.1752 \\
\bottomrule
\end{tabular}
\vspace{1.5em}

\begin{tabular}{
l
S[table-format=3.4]
S[table-format=3.4]
S[table-format=3.4]
S[table-format=3.4]
S[table-format=1.3]
S[table-format=1.3]
}
\toprule
\textbf{Channel} & {\textbf{Min}} & {\textbf{Max}} & {\textbf{P2}} & {\textbf{P98}} & {\textbf{CV}} & {\textbf{DRCR}} \\
\midrule
Elevation & 0.0000 & 730.2763 & 0.1393 & 45.9075 & 1.335 & 5.019 \\
\bottomrule
\end{tabular}
\end{table}
The residual-magnitude distribution is strongly skewed: its 98th percentile is approximately 45.9\,m, while the maximum in the exported statistics is approximately 730.3\,m. Scaling solely by that maximum allocates a small fraction of the available numerical range to low-amplitude relief. This is a dynamic-range issue; a pointwise normalization does not itself create higher spatial resolution.

The residual statistics below refer to $|Z-\text{plane}|$ when shown on a positive magnitude axis. Negative coefficients of variation in the absolute-elevation table arise from division by a negative mean and have no datum-independent interpretation; all exported values are retained for audit.

\FloatBarrier
\subsubsection{Dataset normalization}

Orthoimage intensities are normalized independently for each patch using the 2nd and 98th percentiles of its positive pixels. Clipping limits the encoder input to $[-1,1]$:
\begin{align}
    o_{normalized} = \operatorname{clip}\left(2\frac{o - o_{2\%}}{o_{98\%} - o_{2\%}}-1, -1, 1 \right)
\end{align}
The anchors $o_{2\%}<o_{98\%}$ are computed from the current patch. A constant positive patch maps to zero. This normalization reduces sensitivity to patch-wide intensity range, but does not preserve radiometric comparability across overlapping patches.

Elevation residuals use a signed logarithm to increase numerical sensitivity near zero and compress the tails. A shared reference scale gives a consistent mapping across patches; cached examples must be interpreted using their recorded scale. The unclipped and clipped variants below separate the transformation itself from the information loss caused by saturation. The current configuration uses $S_{\mathrm{ref}}=45.9075$\,m without clipping; the reported experiment and the separate clipping-budget analysis use the conventions identified in the reproducibility discussion.

Unlike positive camera depth, detrended elevation is signed and centered near zero. We therefore use a symmetric logarithmic mapping, related to companding in waveform synthesis~\cite{Oord2016WaveNet:Audio} and symlog value transformations~\cite{HafnerMasteringModels}. For residual elevation $\hat Z(y,x)$ and positive scale $S_{\mathrm{ref}}$, the unclipped encoding is:
\begin{align}
    Z_{\text{norm}}(y, x) = \operatorname{sign}(\hat{Z}(y, x)) \log_2\left(1 + \frac{|\hat{Z}(y, x)|}{S_{\text{ref}}}\right)
\end{align}

At $|\hat Z|=S_{\mathrm{ref}}$, the transform reaches $\pm1$. Its derivative is $1/[\ln(2)(S_{\mathrm{ref}}+|\hat Z|)]$, so near-zero gain relative to linear scaling by the \emph{same} reference scale is $1/\ln2\approx1.443$. Larger reported gains use a different, maximum-based linear baseline. For the bounded representation, define $q=\operatorname{clip}(Z_{\mathrm{norm}},-1,1)$ explicitly. Values with $|\hat Z|>S_{\mathrm{ref}}$ then saturate and cannot be recovered uniquely.

The unclipped transform has a deterministic inverse using only the global reference scale. Applied to a clipped prediction, the same expression recovers residual relief within the representable interval and maps saturated endpoints to $\pm S_{\mathrm{ref}}$; it does not recover the original tail magnitudes:
\begin{align}
    \hat{Z}_{\text{pred}}(y, x) = \operatorname{sign}(\tilde{Z}_{\text{norm}}(y, x)) \cdot S_{\text{ref}} \cdot \left( 2^{|\tilde{Z}_{\text{norm}}(y, x)|} - 1 \right)
\end{align}

Absolute elevation can be reconstructed as $\hat Z_{\mathrm{pred}}+c_0^*x+c_1^*y+c_2^*$ only when the trend plane is available independently at deployment. A plane fitted to the reference DTM is available during training and reference-based diagnostics, but not from an otherwise unlabeled single image. Recovering or anchoring this missing trend, for example with an external coarse elevation product, is a separate reconstruction problem.

Clipping can flatten extreme scarps even when its dataset-average penalty is small. An image VAE is trained on a particular input distribution; this motivates a bounded encoding but does not establish a mathematical restriction that its output must lie in $[-1,1]$. Future work should compare clipping, alternative companding scales, and an adapted decoder using identical held-out scenes.

The selection of the reference scale $S_{\text{ref}}$ dictates the behavior of the Global Log Normalization; some statistics about choosing the normalization clipping information are provided in Figures \ref{fig:global_log_normalization_scaling} and \ref{fig:global_log_normalization_scaling_optimization}. For a bounded encoding, the design trades tail saturation against numerical sensitivity to small residuals; disabling clipping removes saturation but extends the input range. Rather than relying on an arbitrary selection, we formulate the choice of $S_{\text{ref}}$ as a formal constrained optimization problem.
\begin{figure}[htbp]
    \centering
    \includegraphics[width=1\linewidth,height=0.80\textheight,keepaspectratio]{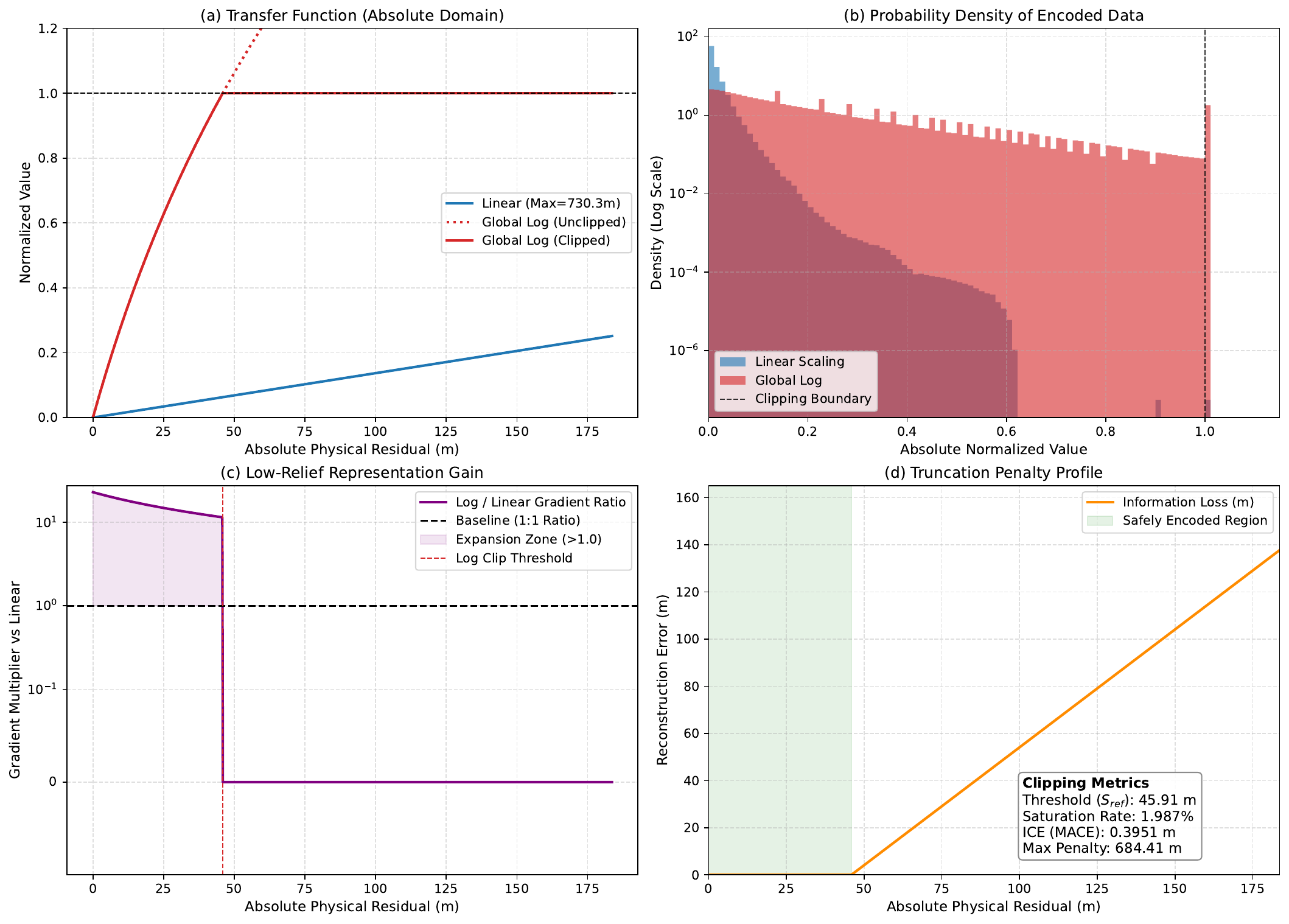}
    \caption{\textbf{Signed-log encoding and its clipping cost.} (a) Transfer functions using the illustrative 98th-percentile scale $S_{\mathrm{ref}}\approx45.9$\,m. (b) Encoded residual-magnitude densities. (c) Derivative gain relative to linear scaling by the observed maximum ($\sim730.3$\,m), not a spatial-resolution gain. (d) Meter-valued error from saturating extreme residuals. The complete study is retained to expose the tradeoff between central dynamic range and tail information loss.}
    \label{fig:global_log_normalization_scaling}
\end{figure}
\textbf{Mean Absolute Clipping Error (MACE)}: For bounded encoding, each residual whose magnitude exceeds $S_{\mathrm{ref}}$ loses $|\hat Z_i|-S_{\mathrm{ref}}$ meters on inversion. Over $N$ valid residual samples, the mean loss is:
\begin{align}
    C(S_{\text{ref}}) = \frac{1}{N} \sum_{i=1}^{N} \max(0, |\hat Z_i| - S_{\text{ref}})
\end{align}
\textbf{Base Expansion Factor}: Smaller reference scales allocate more numerical range to low-amplitude relief, with near-zero gain proportional to $1/S_{\mathrm{ref}}$, while increasing clipping. This is analogous to companding~\cite{Oord2016WaveNet:Audio}; the gain is not exponential in $1/S_{\mathrm{ref}}$ and does not certify recoverable geomorphic detail.

We quantify the numerical gain relative to $\mathcal F_{\mathrm{linear}}(x)=x/Z_{\max}$, where $Z_{\max}=\max_i|\hat Z_i|$, by the derivative ratio at zero:
\begin{align}
    B(S_{\text{ref}}) = \frac{\left. \frac{d}{dx} \mathcal{F}_{\text{log}}(x) \right|_{x \to 0}}{\left. \frac{d}{dx} \mathcal{F}_{\text{linear}}(x) \right|_{x \to 0}} = \frac{Z_{\max}}{\ln(2) \cdot S_{\text{ref}}}
\end{align}
\textbf{Constrained Optimization Objective}: Select the scale that maximizes near-zero numerical gain subject to a chosen average clipping-error budget $\epsilon=0.10$\,m. This tolerance is not a measured HiRISE noise floor and does not bound individual-pixel errors. Since both $B(S)$ and $C(S)$ are nonincreasing for $S>0$, the solution is the smallest feasible scale (or the smallest feasible candidate in a discrete sweep):
\begin{align}
  S_{\text{ref}}^* = \arg\max_{S_{\text{ref}}} B(S_{\text{ref}}) \quad \text{subject to} \quad C(S_{\text{ref}}) \le \epsilon
\end{align}
\begin{figure}[htbp]
    \centering
    \includegraphics[width=1\linewidth,height=0.80\textheight,keepaspectratio]{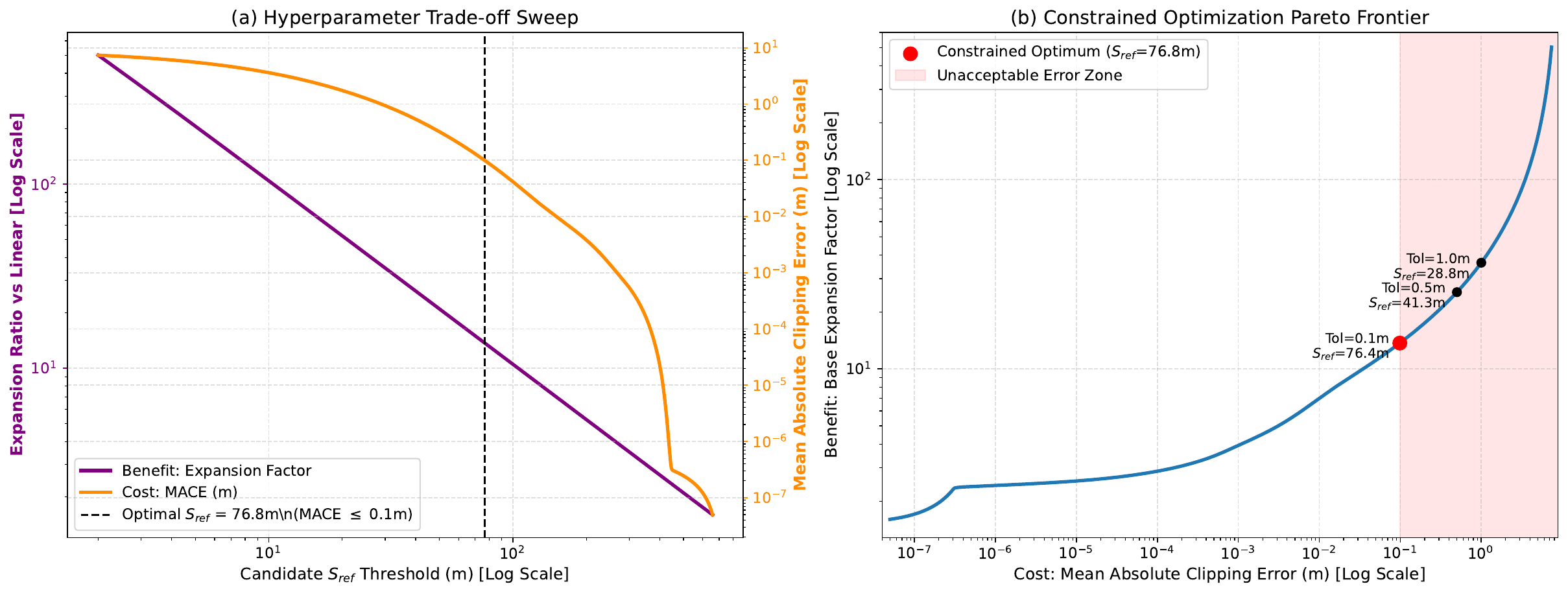}
    \caption{\textbf{Reference-scale selection under an average clipping budget.} The sweep relates numerical gain $B(S)$ to meter-valued mean absolute clipping error $C(S)$. The marked scale is approximately 76.8\,m for a 0.1\,m budget. The constraint limits an average over the inspected residual sample; it does not ensure small errors on individual cliffs or establish an instrument-precision threshold.}
    \label{fig:global_log_normalization_scaling_optimization}
\end{figure}
Figure~\ref{fig:global_log_normalization_scaling_optimization} marks an optimum near 76.8\,m. The archived table below instead reports 76.4\,m, with MACE 0.0987\,m and gain 13.69. Both exports are retained because their sample definitions and rounding must be reconciled from the original statistics before the training value can be certified. The qualitative tradeoff is unchanged, but these values should not be silently treated as identical.
\begin{table}
    \centering
        \caption{Study of the $S_{ref}$ parameters based on the different scale using the constrained Pareto optimization routine or using the data's 98\% range.}
    \begin{tabular}{cccc}
    \toprule
       \textbf{ $S_{ref}$} &  \textbf{\makecell{Saturation Rate \\ (Data lost to tails)}} (\%)& \textbf{\makecell{Irreducible Clipping \\ Error (MACE)} (m)}& \textbf{Base Expansion Factor}\\
        \midrule
        98\% (45.9)&  1.9871\% &  0.39506& 22.92\\
       Optimal (76.4m)&  0.3986\%&  0.0987& 13.69\\
       \bottomrule
    \end{tabular}
    \label{tab:placeholder}
\end{table}
\FloatBarrier
\subsubsection{Stereo-pair alignment and resolution scaling}

DTM products and their corresponding ortho images are explicitly linked in the pipeline using a canonical stereo-pair key. This key is derived by concatenating the left and right observation IDs (e.g., \texttt{ESP\_011265\_1560} and \texttt{ESP\_011331\_1560}). Because a single physical observation can participate in multiple stereo pairs, this many-to-many mapping ensures the consistent alignment of the topographic data with its corresponding visual counterparts.

Furthermore, both DTMs and orthos embed a grid-spacing identifier in their product strings to denote pixel scale (e.g., `A` for 0.25 m/pixel, `C` for 1.00 m/pixel). This allows the dataset pipeline to dynamically filter and select the finest available scale for localized patch extraction \cite{HiRISEModels}.

\FloatBarrier
\subsubsection{Physical units and nodata handling}

Each DTM pixel is a scalar elevation in meters, representing either the areoid elevation (relative to the Mars MOLA-derived geoid) or the planetary radius (distance from the center of Mars).

Instead of using a zero-value sentinel for nodata, DTM rasters encode missing data using the IEEE 754 single-precision minimum ($\approx -3.40 \times 10^{38}$). During the loading pipeline, these sentinels are explicitly converted to Not-a-Number (NaN) masks. This conversion is strictly enforced to ensure that shadowed regions, data gaps, or unmapped boundaries do not artificially skew normalization statistics or inadvertently bias downstream loss calculations during model training.

\FloatBarrier
\subsubsection{Known topographic artefacts}

HiRISE DTMs contain several recognized structural artifacts stemming from the physical instrument or the automated stereo correlation process \cite{Kirk2009UltrahighSites}. When training deep learning models on this dataset, we must account for these phenomena \cite{HiRISEModels}:
\begin{itemize}
    \item \textbf{SOCET Set ``Boxes'':} Square regions offset vertically by 0.5--1 m, resulting from automated correlation boundaries in the processing software. These introduce step-function label noise in otherwise smooth terrains.
    \item \textbf{CCD Seams:} Sub-meter vertical jumps running along lines at regular cross-track intervals due to residual mis-registration between the 10 constituent CCDs.
    \item \textbf{High-frequency Jitter:} Spacecraft jitter that introduces cross-track elevation ripples with amplitudes of 10--50 cm, often appearing as high-pass noise superimposed on the true topography.
    \item \textbf{Long-baseline Tilts:} Gradual drift in the stereo model producing kilometer-scale undulations. This drift can affect absolute elevation predictions at large scales but generally preserves local relative topography.
\end{itemize}
Because these artifacts cause significant problems for the neural network's training process, a pipeline was developed to filter out structural artifacts, as shown in Figure \ref{fig:manifest_filter}, especially the CCD seams, which result from the merging of multiple orthorectified and DTM geospatially adjacent images.
\begin{figure}[htbp]
    \centering
    \includegraphics[width=1\textwidth,height=0.80\textheight,keepaspectratio]{Figures/DTM/manifest_filter.pdf}
    \caption{Pipeline and criteria for filtering the dataset elements to ensure the dataset can serve as a training dataset for a neural network; the valid components are cached in a manifest file for faster access down the line. The first filter, the \textbf{near-flat slope}, removed patches where the average residual error from the detrended patch was close to zero, representing featureless DTM. The \textbf{tile-merge seam} aims to remove CCD seam artifacts and merged DTM, but it produces non-continuous transition edges along the seams, which throw neural networks off. \textbf{TIN (Triangulated irregular network) flat plateau} represents the potential faceted and manually interpolated areas, which are results of terrain that was very bland (low contrast) or deeply shadowed with low contrast and low signal, which causes the areas to only generally approximate the shape of the terrain \cite{HiRISEModels}. Finally, low-coverage is an optional non-zero data percentage filter, if not handled as a parameter of the HiRISE patch sampler; see Appendix \ref{sec:sampling}. }
\label{fig:manifest_filter}
\end{figure}
\FloatBarrier
\subsubsection{Merging Seam detection}

Mosaic seams can introduce abrupt radiometric or elevation changes that do not correspond to terrain. The detector below combines sharp gradients, local distribution shifts, and line geometry to rank candidate seams. Natural ridges, shadows, and crater rims can produce similar responses, so the score is a screening heuristic. Its thresholds require labeled acceptance/rejection checks, including retained natural features, before it can be interpreted as a reliable quality filter.

The pipeline for the seam detection is illustrated in Figure \ref{fig:seam_filter},
\begin{figure}[htbp]
    \centering
    \includegraphics[height=0.80\textheight,width=\linewidth,keepaspectratio]{Figures/DTM/seam_detection_pipeline.pdf}
    \caption{\textbf{Automated Seam Artifact Detection Pipeline.} The proposed architecture identifies unnatural mosaicking boundaries through a three-stage process. \textbf{Stage 1} computes unblurred step-gradients and calculates a statistical distribution shift (Cohen's $d$) between adjacent terrain blocks for both the orthoimage and DTM. \textbf{Stage 2} aggregates these pixel-level signals using multi-angle directional convolutions gated by the statistical shift, producing a base anomaly heatmap. \textbf{Stage 3} suppresses false positives from natural topography (e.g., craters and dunes) by applying four geometric penalty multipliers: Span Ratio, Sparsity, Piecewise Linearity (via Hough Transform), and Spatial Isolation. The composite score ranks seam candidates; this diagram does not establish detection accuracy.}
\label{fig:seam_filter}
\end{figure}
Mosaicking seams are artificial linear discontinuities introduced during the generation of planetary orthoimages and Digital Terrain Models (DTMs). Because these artifacts can severely corrupt the latent space of generative models by introducing non-physical edges, they must be robustly identified and filtered. We propose a multi-stage, directionally-aware detection pipeline intended to distinguish artificial seams from natural geological features (such as craters, ridges, and dune fields) by evaluating both local statistical shifts and global geometric structure.

\textbf{Stage 1: Pixel-Level Signals and Statistical Shifts}

The pipeline begins by analyzing the high-frequency spatial gradients of both the orthoimage and the DTM. To prevent the spatial blurring inherent to standard $3 \times 3$ operators (e.g., Sobel), we employ strict 1D forward-difference kernels ($[-1, 1]$), which retain narrow step responses without the smoothing of a Sobel operator; natural edges can produce the same response.

\textit{Sharp Gradient Magnitude}:
The 1D forward difference gradient magnitude, designed to preserve exact 1-to-3 pixel step edges without spatial blurring, is computed as:
\begin{equation}
G = \sqrt{(\nabla_x I)^2 + (\nabla_y I)^2 + \epsilon}
\end{equation}
Where $\nabla_x I$ and $\nabla_y I$ are the 1D discrete forward differences (convolutions with $[-1, 1]$ and $[-1, 1]^T$) and $\epsilon = 10^{-8}$.

\textit{Pooled Standard Deviation:}
Simultaneously, the algorithm evaluates the statistical divergence across potential seam boundaries using oriented kernels at multiple angles. To calculate the statistical shift \cite{Cohen2013StatisticalSciences}, the pooled standard deviation incorporates a global variance floor to prevent division by zero in homogeneous regions:
\begin{equation}
\sigma_{pooled} = \sqrt{\frac{\sigma_L^2 + \sigma_R^2}{2}} + \sigma_{floor}
\end{equation}
Where $\sigma_L^2$ and $\sigma_R^2$ are the local variances of the adjacent terrain blocks, and $\sigma_{floor}$ is scaled to 10\% of the global standard deviation.

\textbf{Stage 2: Directional Convolution and Base Score}

The base-seam heatmap is constructed by projecting the combined gradient magnitudes onto a set of discrete orientations (e.g., $12$ angles) using directional convolutions. This aggregated signal is then modulated by a distribution gate derived from the maximum Cohen's $d$ values for both the orthoimage and the DTM. The gate weights the directional response by a local distribution shift; its lower clip of 0.1 leaves a nonzero contribution even when that shift is small.

\textit{Base Directional Score ($S_{base}$):}
A distribution gate modulates the combined directional gradient magnitude:
\begin{equation}
S_{base} = \left( w_{ortho} \frac{\bar{G}_{ortho,\theta}}{B_{ortho}} + w_{dtm} \frac{\bar{G}_{dtm,\theta}}{B_{dtm}} \right) \times \text{clip}(\max(d_{ortho}, d_{dtm}), 0.1, 3.0)
\end{equation}
Where $\bar{G}_{...,\theta}$ is the average gradient magnitude along the oriented line kernel at angle $\theta$, $B_{...}$ is the background average gradient, $w$ represents the modality weights, and $d$ represents the computed Cohen's $d$ shifts.

\textbf{Stage 3: Structural and Geometric Multipliers}

Natural topography frequently produces sharp, shifted gradients (e.g., crater rims or dune slip faces). To filter out these false positives, the base heatmap is scaled by four structural multipliers bounded in $[0,1]$; statistical independence is not assumed:

\textit{Sparsity Penalty ($M_{sparsity}$):}
Natural repeating textures, such as dune fields, generate dense clusters of high gradient signals. A true seam, however, represents a singular, sparse line comprising a negligible fraction of the total area. The sparsity multiplier heavily penalizes dense signal regions using an exponential decay function. Modeled as an exponential decay function based on the density of the high-scoring signal:
\begin{equation}
M_{sparsity} = \exp(-15.0 \cdot \rho)
\end{equation}
Where $\rho = \frac{\text{Area}_{hot}}{\text{Area}_{valid}}$ represents the ratio of strongly activated pixels to the total valid area.

\textit{Span Ratio ($M_{span}$):}
True seams typically span the entire length of a data tile, whereas craters are localized. We isolate the core signal and apply morphological closing followed by connected components analysis. The span ratio represents the physical length of the longest continuous feature relative to the tile dimensions. The implemented proxy is the diagonal of a connected-component bounding box divided by the maximum tile dimension, measured in pixels:
\begin{equation}
M_{span} = \min\left( \frac{\max_i \sqrt{w_i^2 + h_i^2}}{\max(H, W)}, 1.0 \right)
\end{equation}
Where $w_i$ and $h_i$ are the dimensions of the bounding box for the $i$-th connected component.

\textit{Piecewise Linearity ($M_{linear}$):}
Seams are highly structured and piecewise linear, even when sharp corners are accounted for. The ratio of high-scoring pixels that successfully map to structured lines via the Probabilistic Hough Transform \cite{Matas2000RobustTransform} applied to Canny edges \cite{Canny1986ADetection}:
\begin{equation}
M_{linear} = \frac{| P_{signal} \cap P_{Hough} |}{| P_{signal} |}
\end{equation}
Where $P_{signal}$ is the set of binary thresholded high-gradient pixels, and $P_{Hough}$ is the mask of pixels reconstructed from the detected Hough lines.

\textit{Spatial Isolation ($M_{isolation}$):}
Seams are geometrically isolated anomalies. Comparing the perpendicular cross-section seam peak to the parallel background terrain:
\begin{align}
R_{iso} &= \frac{\max(P_{center})}{\mu_{bg}} \\
M_{isolation} &= \text{clip}\left(\frac{R_{iso} - 1.5}{1.5}, 0, 1\right)
\end{align}
Where $\max(P_{center})$ is the peak signal value within the central exclusion zone, and $\mu_{bg}$ is the mean signal of the parallel background regions.

\textbf{Final Composite Score}
The final gating metric used to determine if a tile contains an artificial seam:
\begin{equation}
S_{final} = S_{base} \cdot \left(0.5 + 0.5 \cdot M_{span}\right) \cdot M_{sparsity} \cdot M_{linear} \cdot M_{isolation}
\end{equation}
To make these expressions well-defined, background-gradient and background-intensity denominators require a positive numerical floor. Empty signal or component sets require an explicit zero-score convention; a tile with no valid pixels is excluded. These implementation guards must be recorded with the thresholds. Each component is visualized in Figure \ref{fig:dataset_architecture_dtm_seam}.
\begin{figure}[htbp]
    \centering
    \includegraphics[width=1\textwidth,height=0.80\textheight,keepaspectratio]{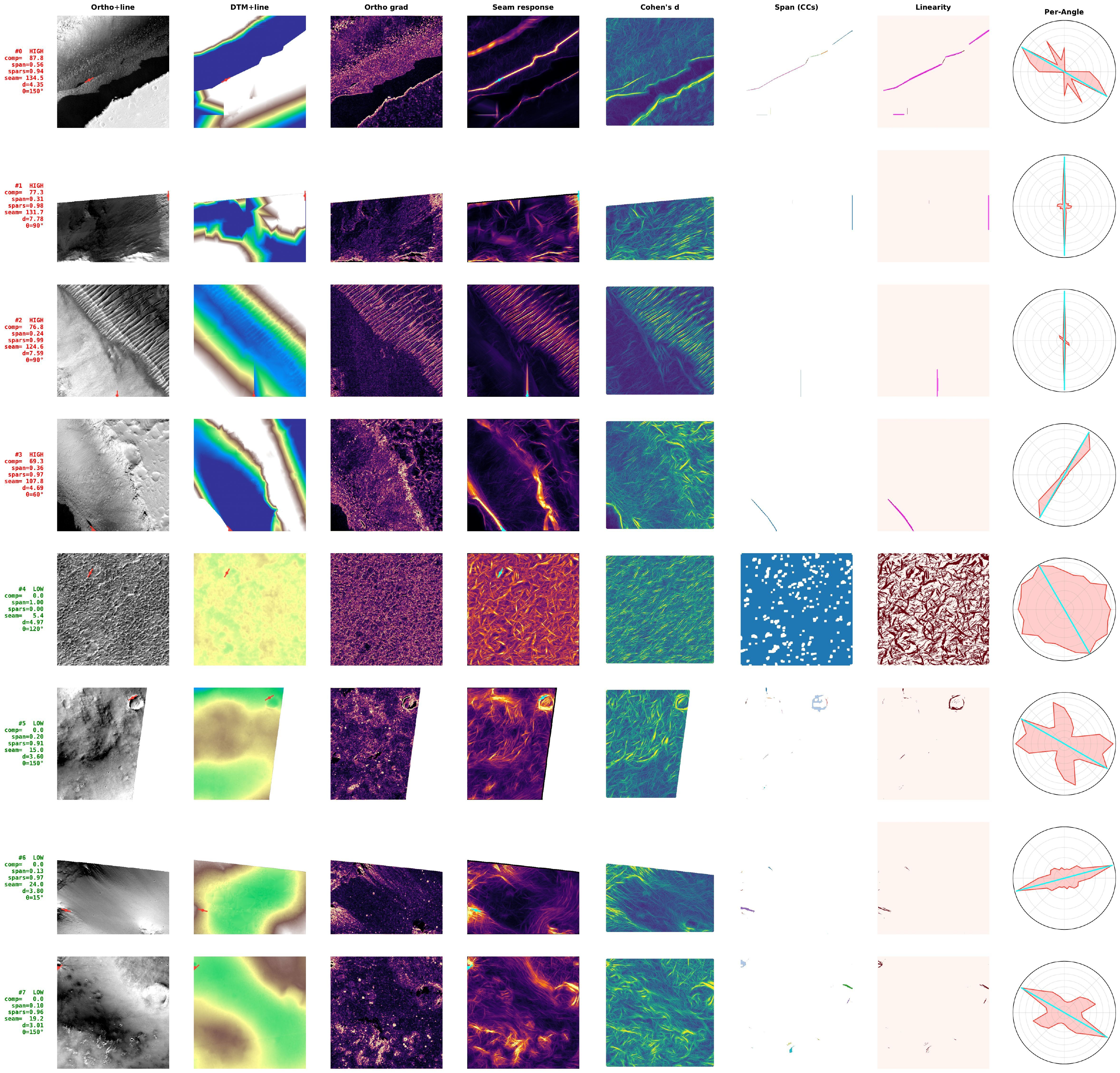}
    \caption{Diagnostic visualization grid for automated seam artifact detection. Each row represents an evaluated scan region comprising eight analytical panels: (1) \textbf{Ortho+line}: RGB orthomosaic with the best-scoring candidate line superimposed; (2) \textbf{DTM+line}: Digital Terrain Model (DTM) with candidate line; (3) \textbf{Ortho grad}: Sharp gradient magnitude utilizing 1D difference kernels to emphasize exact pixel step edges; (4) \textbf{Seam response}: Heatmap of the directional seam scores; (5) \textbf{Cohen's $d$}: Heatmap representing localized distribution shifts between terrain separated by the candidate line; (6) \textbf{Span (CCs)}: Connected components map of the core signal, utilized to evaluate structural span and filter isolated anomalies; (7) \textbf{Linearity}: Probabilistic Hough transform lines overlaid on a sparsity hot-mask to quantify piecewise linearity; and (8) \textbf{Per-Angle}: A polar plot illustrating the maximum response as a function of orientation angle $\theta$. Textual annotations (left) detail the calculated structural multipliers, composite score, span ratio, sparsity, raw seam score, Cohen's $d$, and optimal angle $\theta$, classifying the sample as \textit{HIGH} or \textit{LOW} by the heuristic threshold. These labels are not calibrated probabilities.}
\label{fig:dataset_architecture_dtm_seam}
\end{figure}
%
%

\FloatBarrier
\subsection{Dataset subset}

The inspected DTM catalog contains 11,692 terrain-related records supplied through the PDS by multiple production groups~\cite{HiRISEModels}. Associated orthoimages, resolution variants, and terrain rasters must be grouped by terrain product and stereo observation before counting independent DTMs. The 652.495\,MB JPEG2000 mean reported for the RDR catalog is not a separately measured DTM-corpus file size. Figure~\ref{fig:toptographic_map_dtm} retains the full catalog and selected-region coverage for this snapshot.
\begin{figure}[htbp]
    \centering
    \includegraphics[width=1\linewidth,height=0.80\textheight,keepaspectratio]{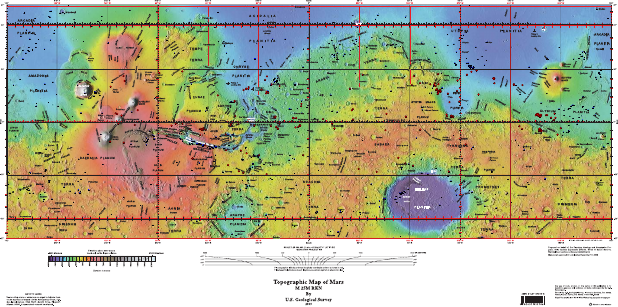}
    \caption{Topographic Map of Mars \cite{Topographic2782} through the view of Mercator Projection with longitude range of $-180^{\circ}$ to $180^{\circ}$ and latitude range of approximately $-57^{\circ}$ to $57^{\circ}$. The map was generated by the U.S. Geological Survey, and on the map were overlaid all the HiRISE sample locations in \textcolor{blue}{blue} within that topographical range, and in \textcolor{red}{red} is the subsampled dataset used as the scope of the project. The \textcolor{red}{sub-sampled} dataset was selected by filtering samples with a bounding-box of $[-120^\circ, -30^\circ, 150^\circ, 30^\circ]$.}
    \label{fig:toptographic_map_dtm}
\end{figure}
The selected bounding box $[-120^{\circ},-30^{\circ},150^{\circ},30^{\circ}]$ reduces the catalog from 11,692 to 5,117 records and limits distortion in the pipeline's geographic-grid representation. This is a processing choice, not a claim that high-latitude HiRISE products are unusable. Polar or locally metric projections can support those regions. The filtered record count is distinct from the number of unique DTMs, inspected products, or training patches.

\FloatBarrier
\subsection{Pipeline workflow}

The full pipeline workflow showing how the dataset reader and sampler work together is visualized in Figure \ref{fig:dataset_architecture_dtm}.
\begin{figure}[htbp]
    \centering
    \includegraphics[width=1\textwidth,height=0.80\textheight,keepaspectratio]{Figures/DTM/pipeline_overview.pdf}
    \caption{Full DTM pipeline architecture overview of the Mars DTM HiRISE dataset loader and sampler}
\label{fig:dataset_architecture_dtm}
\end{figure}
Missing pixels require finite values before VAE encoding. Replacing them with zero can create an artificial edge against neighboring valid relief or image intensity. We therefore use smooth infilling while retaining the observed-data mask for supervision and evaluation. Filling makes the input numerically complete; it does not add measured terrain or guarantee freedom from boundary artifacts.

\FloatBarrier
\subsection{GMRF Hole Infilling} \label{sec:gmrf_hole}

A Gaussian Markov random field (GMRF) provides a local quadratic smoothness model for missing pixels~\cite{Markov2005GaussianApplications}. Conditioning this model on the valid boundary yields a harmonic-style fill that reduces abrupt zero-padding transitions. The model is intentionally simple and does not reconstruct unobserved craters, dunes, or texture from physical evidence.

Before infilling, the validity mask is eroded by a configured radius to exclude potentially unreliable boundary pixels. This trades observed support for smoother boundary conditioning; erosion alone cannot establish that the remaining data are accurate. The radius and the evaluation mask must therefore be recorded separately.

Following standard GMRF formulations for spatial data \cite{Markov2005GaussianApplications}, we define the image domain as a grid graph parameterized by a precision matrix $Q = \tau L$, where $L$ is the sparse graph Laplacian and $\tau > 0$ dictates the strength of spatial smoothing. The optimal values for the missing regions ($u_v$) are obtained by computing the mean of the conditional distribution given the observed valid pixels ($u_o$).To guarantee numerical stability during the sparse linear solve, particularly when void regions are vast, completely isolated, or subject to floating-point imprecision, we modify the standard conditional solve by introducing a small positive diagonal regularization term ($\epsilon \approx 10^{-6}$). This is conceptually similar to a "nugget" effect in kriging and geostatistics \cite{Cressie2015StatisticsEdition}. The regularized conditional solve is therefore formulated as:
\begin{align}
    u_v = -(Q_{vv} + \epsilon I)^{-1} Q_{vo} u_o
\end{align}
where the subscripts $v$ and $o$ denote the block-partitioned indices for the void and observed pixels, respectively, and $I$ is the identity matrix.

In computation, solve $(Q_{vv}+\epsilon I)u_v=-Q_{vo}u_o$ with a sparse solver rather than forming an inverse. Without regularization, the conditional mean is unique when every void component is connected to observed support; otherwise the intrinsic Laplacian can be singular. Positive $\epsilon$ makes the void block invertible but biases the solution toward zero. The often-cited $\mathcal O(N^{3/2})$ direct-solve cost applies to suitable two-dimensional grids and orderings, not arbitrary masks or solvers. Figure~\ref{fig:smooth_fill_inspection} retains the complete set of fill examples. Boundary artifacts persisted in training, so the method should be assessed as mitigation rather than an artifact-free solution. The Gaussian assumption describes the spatial prior, not a requirement that the empirical elevation histogram itself be Gaussian.
\begin{figure}[htbp]
\centering
\includegraphics[height=0.80\textheight,width=\linewidth,keepaspectratio]{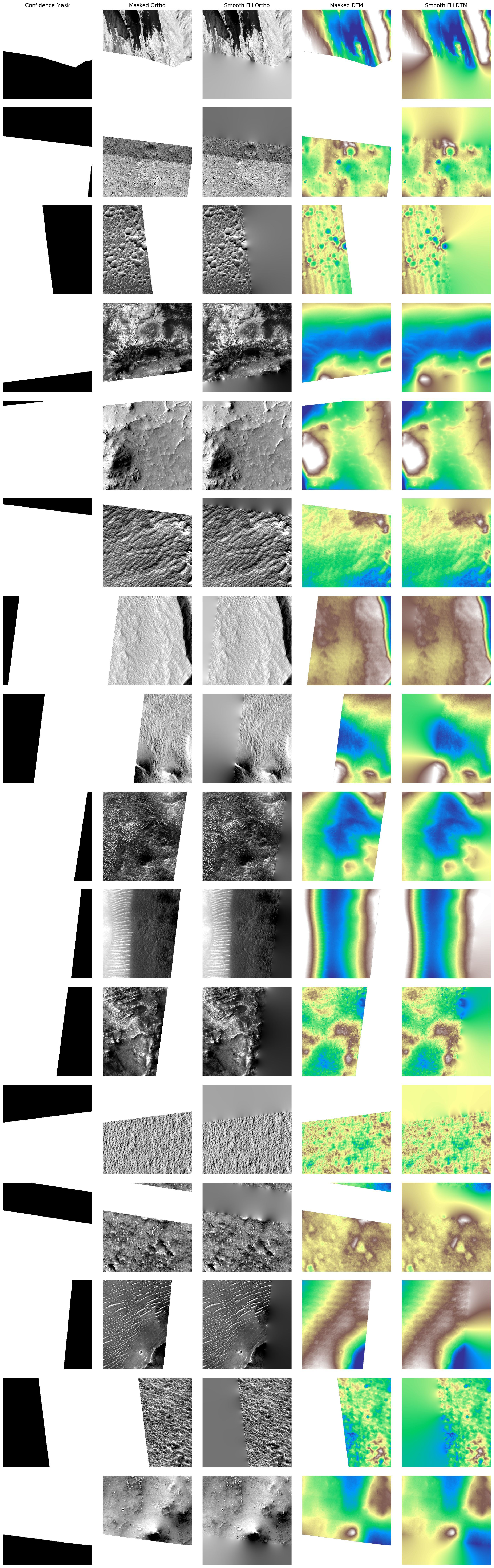}
\caption{Diagnostic visualization of the smooth diffusion infilling process prior to VAE optimization. Each row illustrates a single sample containing missing or invalid data, processed across five comparative panels: (1) \textbf{Confidence Mask}: A binary representation delineating valid observations from void regions; (2) \textbf{Masked Ortho}: The original RGB orthomosaic with invalid regions explicitly masked; (3) \textbf{Smooth Fill Ortho}: The resulting orthomosaic after applying a Gaussian Markov Random Field (GMRF) conditional solve, which smoothly diffuses boundary pixels to infill the voids; (4) \textbf{Masked DTM}: The corresponding Digital Terrain Model (DTM) with missing elevation data isolated; and (5) \textbf{Smooth Fill DTM}: The elevation map reconstructed via the identical GMRF sparse linear solve. The fill reduces zero-padding discontinuities, but the original mask still identifies unobserved values and the panels retain visible failure cases.}
\label{fig:smooth_fill_inspection}
\end{figure}
\FloatBarrier
\subsection{Code example}

Provided below is an example workflow of how to use the TorchGEO dataset in the code block \ref{lst:hirise_pipeline_dtm}.
\begin{figure}[htbp]
\begin{lstlisting}
global_coverage = True
viz_3d = True

bbox_tuple = (-120, -30, 150, 30)

dataset = MarsHiRISEDTM(
    target=None,
    bbox=bbox_tuple,
    include_ortho=True,
    ortho_type=["RED"],
    download=False,
    reuse_cache=True,
)

output_path = pathlib.Path("outputs/Figures_DTM")
output_path.mkdir(parents=True, exist_ok=True)

fig = dataset.plot_coverage()
fig.savefig(output_path / "dtm_coverage.png", bbox_inches='tight')
logger.info("Saved coverage figure.")

if global_coverage:
    dataset.plot_global_coverage(output_path / "global_coverage.pdf")
    logger.info("saved global coverage fig")

sampler = HiRISEGeoSampler(
    dataset, size=0.018,
    length=None,
    units=Units.CRS,
    center_mode="optimal"
)

# Saves memory
dataset._raw_index = None

logger.info("Number of samples: %d", len(sampler))

dataloader = DataLoader(
    dataset, sampler=sampler,
    num_workers=8, multiprocessing_context="fork", prefetch_factor=32,
    persistent_workers=True
)

output_path_3d = output_path / '3d'

for i, sample in enumerate(dataloader):
    if viz_3d:
        output_path_3d.mkdir(parents=True, exist_ok=True)
        fig = dataset.plot3d(sample)
        fig.savefig(output_path_3d / f"dtm_output{i}_3d.png")
        logger.info("Saved dtm_output%d_3d.png", i)
        plt.close(fig)

    output_path_flat = output_path / 'flat'
    output_path_flat.mkdir(parents=True, exist_ok=True)
    fig = dataset.plot(sample)
    fig.savefig(output_path_flat / f"dtm_output{i}.png")
    logger.info("Saved dtm_output%d.png", i)
    plt.close(fig)
\end{lstlisting}
\caption{Geospatial dataloading and patch extraction pipeline for the Mars HiRISE DTM dataset.}
\label{lst:hirise_pipeline_dtm}
\end{figure}
\newpage
\FloatBarrier
\section{Loss Formulation}
\label{sec:losses}

\FloatBarrier
\subsection{Overview and Design Rationale}
\label{sec:loss_overview}

Mars terrain supervision differs from terrestrial benchmarks such as KITTI~\cite{Geiger2012CVPR} and NYU-v2~\cite{Silberman:ECCV12} in acquisition geometry, scale, and reference construction. HiRISE DTMs are stereo-derived estimates with registration errors, shadow-related gaps, triangulation artifacts, and interpolated regions~\cite{Kirk2009UltrahighSites,Beyer2018TheData}. Their posting is also commonly coarser than the image grid: a representative pair has 0.25\,m imagery and 1\,m terrain posting~\cite{McEwen2007MarsHiRISE,Kirk2009UltrahighSites}. Neither posting alone nor visible image texture establishes terrain resolution, but this disparity motivates combining geometric references with a separate image constraint.

A reference-only objective cannot supply independent measurements of terrain detail absent from that reference. We therefore combine the following complementary objectives. They overlap in function and may conflict: matching an interpolated coarse reference can penalize image-derived variation even where the reference contains no independent fine-scale measurement.
\begin{enumerate}
\item \textbf{GT-matching losses} that enforce correctness at the GT resolution (flow matching, Section \ref{sec:fm_loss}, Huber, Section \ref{sec:huber_loss}, gradient, Section \ref{sec:grad_loss}, Laplacian, Section \ref{sec:laplacian_loss}, normals, Section \ref{sec:normal_loss}, ordinal ranking, Section \ref{sec:ordinal_loss}, focal-frequency, Section \ref{sec:ffl_loss}).
\item \textbf{Image-matching losses} that compare image information on the processed grid by rendering the predicted topography back into a shaded image and comparing it to the observation (photoclinometric/shape-from-shading loss, Section \ref{sec:photo_loss}).
\item \textbf{Noise-robust losses} that supervise on quantities which degrade gracefully under stereo artifacts (Huber, Section \ref{sec:huber_loss}, ordinal ranking, Section \ref{sec:ordinal_loss}, constant-shift-invariant gradient loss, Section \ref{sec:grad_loss}).
\end{enumerate}
The total training objective is a weighted sum:
\begin{equation}
\begin{aligned}
\mathcal{L}_{\text{total}} = \;&\lambda_{\text{FM}}\,\mathcal{L}_{\text{FM}}
+ \lambda_{\text{huber}}\,\mathcal{L}_{\text{huber}}
+ \lambda_{\text{norm}}\,\mathcal{L}_{\text{norm}}
+ \lambda_{\text{grad}}\,\mathcal{L}_{\text{grad}} \\
&+ \lambda_{\text{lap}}\,\mathcal{L}_{\text{lap}}
+ \lambda_{\text{FFL}}\,\mathcal{L}_{\text{FFL}}
+ \lambda_{\text{ord}}\,\mathcal{L}_{\text{ord}}
+ \lambda_{\text{photo}}\,\mathcal{L}_{\text{photo}}.
\end{aligned}
\label{eq:total_loss}
\end{equation}
The implementation exposes a configurable start step for each auxiliary term. In the available large-run configuration, all seven nonzero objectives are active from step zero; focal-frequency loss has zero weight. No successful staged curriculum is established by these settings. The formulation below retains all eight terms and distinguishes the encoded training reference from the raw evaluation reference.
\begin{figure}[htbp]
\centering
\includegraphics[height=0.80\textheight,width=\linewidth,keepaspectratio]{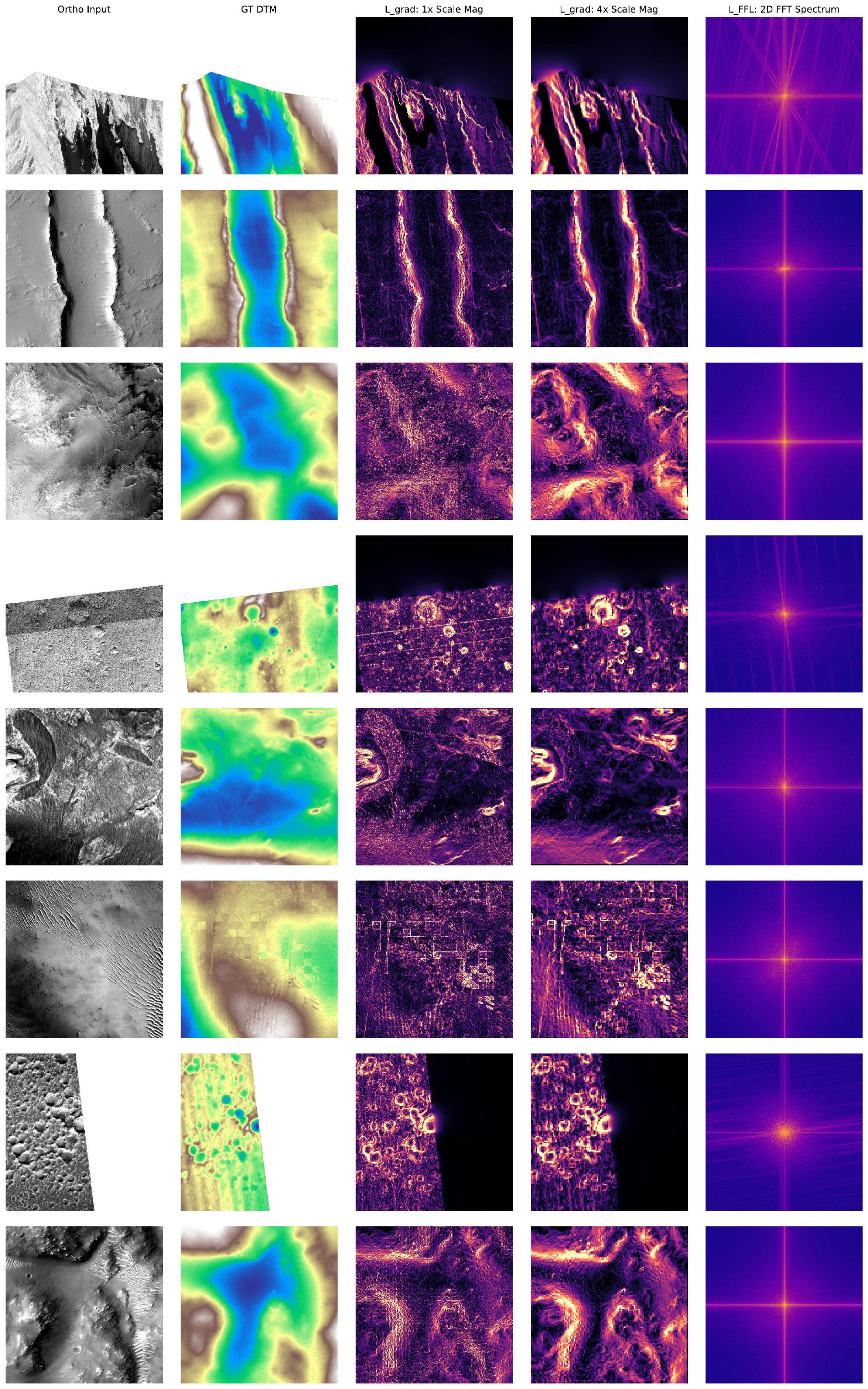}
\caption{\textbf{Intermediate fields for the pixel-space objectives.} Each row shows an archived sample: (a) input orthoimage; (b) stereo-reference DTM; (c) gradient magnitude at the processed grid scale; (d) gradient magnitude after $4\times$ pooling; and (e) log-magnitude Fourier spectrum. Pooling exposes broader spatial variation, while the spectrum summarizes frequency content without feature location. The optional FFL weights large spectral discrepancies rather than exclusively high frequencies; it is disabled in the reported run. These diagnostic fields document the operators, not measured improvements from each loss.}
\label{fig:loss_components}
\end{figure}
\FloatBarrier
\subsection{Primary Objective: Conditional Flow Matching Loss ($\mathcal{L}_{\mathrm{FM}}$)}
\label{sec:fm_loss}

We use Conditional Flow Matching (CFM)~\cite{Lipman2022FlowModeling,Liu2022FlowFlow,Tong2023ImprovingTransport,Gui2024DepthFM:Matching} as the primary latent-space objective. Let $\mathbf z_{\mathrm{img}}=\mathcal E(\mathbf I)$ and $\mathbf z_1=b\mathcal E(q)$, where $q$ is signed-log relief and the available configuration uses signal gain $b=1$. Encoding uses the frozen VAE posterior mode. If source noising is enabled, $\mathbf z_0=\sqrt{\bar\alpha_k}\mathbf z_{\mathrm{img}}+\sqrt{1-\bar\alpha_k}\boldsymbol\eta$ with $\boldsymbol\eta\sim\mathcal N(0,I)$ and the checkpoint's cosine-schedule noising step $k$; otherwise $\mathbf z_0=\mathbf z_{\mathrm{img}}$. The training path is $\mathbf z_t=(1-t)\mathbf z_0+t\mathbf z_1+\sigma_{\min}\boldsymbol\epsilon$. The available configuration uses $t=\operatorname{sigmoid}(u)$, $u\sim\mathcal N(0,1)$, numerically restricted to $[10^{-5},1-10^{-5}]$, and $\sigma_{\min}=10^{-4}$. Conditional on the sampled endpoints and fixed $\boldsymbol\epsilon$, the path velocity is $\mathbf z_1-\mathbf z_0$:

\begin{equation}
\mathcal{L}_{\text{FM}} = \mathbb{E}_{t,\mathbf I,q,\boldsymbol\eta,\boldsymbol\epsilon}
\Big[\,
\left\lVert v_\theta(\mathbf{z}_t, t;\,\mathbf{z}_{\text{img}})
- (\mathbf z_1-\mathbf z_0)\right\rVert_2^2
\,\Big].
\label{eq:cfm}
\end{equation}
The clean image latent remains the conditioning signal even when the path source is noised. The endpoint estimate used by pixel-space objectives is
\begin{equation}
\widetilde{\mathbf z}_1=\mathbf z_t+(1-t)v_\theta(\mathbf z_t,t;\mathbf z_{\mathrm{img}}),\qquad d=\mathcal D(\widetilde{\mathbf z}_1).
\label{eq:decoded_endpoint}
\end{equation}
The VAE weights are frozen, while gradients through $\mathcal D$ remain enabled for the prediction. This is a single-time endpoint estimate, not a training-time ODE solve. Even with the exact conditional target velocity, the estimate equals $\mathbf z_1+\sigma_{\min}\boldsymbol\epsilon$; the noise floor must not be omitted from an exact endpoint identity. At the configured gain $b=1$, geometric pixel losses compare the first decoded channel of $\mathcal D(\widetilde{\mathbf z}_1)$ with the first channel of $\mathcal D(\mathcal E(q))$. Their reference therefore includes VAE reconstruction error. In the current code, photoclinometry instead receives inverse-transformed predicted relief. The archived run at revision \texttt{6a4ea05} supplies normalized decoded relief to that branch; its results do not evaluate the later change of shading representation. At inference, an $N$-step Euler realization follows $\mathbf z_{j+1}=\mathbf z_j+N^{-1}v_\theta(\mathbf z_j,j/N;\mathbf z_{\mathrm{img}})$. Source noising and postprocessing must be specified separately from step count.

We optionally weight the per-cell residual by a confidence map obtained by area-interpolating the pixel-space validity mask, so that latent cells whose receptive fields are dominated by nodata or low-confidence pixels contribute proportionally less to the loss.

\FloatBarrier
\subsection{Absolute Depth Loss ($\mathcal{L}_{\text{huber}}$)}
\label{sec:huber_loss}

\paragraph{Formulation.}
Let $d$ denote the first channel of decoded predicted relief and $\hat d$ the first channel of the VAE-reconstructed signed-log reference. Neither decoder output is intrinsically bounded. We apply the Huber penalty \cite{Huber1964RobustParameter}:
\begin{equation}
\mathcal{L}_{\text{huber}} =
\frac{1}{|\Omega_v|}\sum_{\mathbf{x}\in\Omega_v}
\rho_\delta\!\big(d(\mathbf{x}) - \hat{d}(\mathbf{x})\big),
\quad
\rho_\delta(r) =
\begin{cases}
\frac{1}{2}r^2 & |r|\le \delta,\\[2pt]
\delta\big(|r|-\frac{1}{2}\delta\big) & |r|>\delta,
\end{cases}
\label{eq:huber}
\end{equation}
with $\Omega_v$ the valid-pixel set and $\delta=0.1$. This is 5\% of an illustrative $[-1,1]$ interval, but that interval is not a bound on an unclipped encoding or decoded output. Soft confidence weights replace the binary average when enabled.

\paragraph{Justification.}
The Huber term anchors decoded relief values to the reference in addition to the latent flow target. Differential losses remove constant offsets, but are generally sensitive to height scaling; flow matching already provides an absolute latent target and is not inherently affine-invariant. Huber regression is quadratic for small residuals and linear in the tails, reducing the influence of large errors relative to squared loss~\cite{Huber1964RobustParameter,Laina2016DeeperNetworks,Ranftl2022TowardsTransfer}. It does not make outliers harmless or correct systematic reference bias. The implementation uses the Huber convention above, not smooth-$\ell_1$ normalized by $\delta$. Figure~\ref{fig:huber_viz} retains the per-pixel diagnostic.
\begin{figure}[htbp]
\centering
\includegraphics[width=\linewidth,height=0.80\textheight,keepaspectratio]{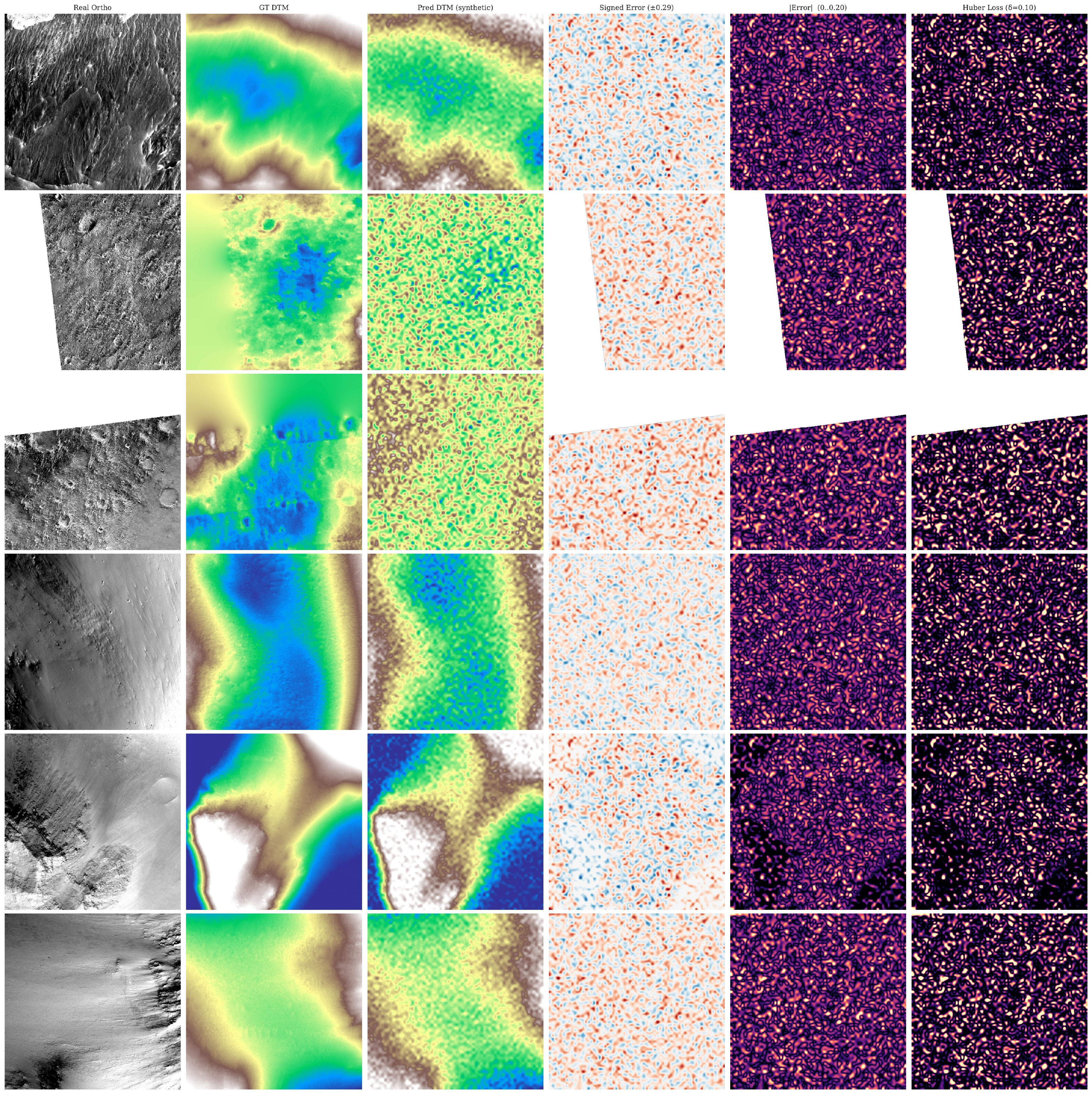}
\caption{%
\textbf{Huber loss diagnostic.} Columns, left to right: (1)~real orthoimage (percentile-stretched, masked); (2)~GT DTM (terrain colormap, $[-1,1]\!\to\![0,1]$); (3)~predicted DTM; (4)~signed error $d - \hat d$ on a symmetric RdBu\_r colormap, scale $\pm v_{\max}$ reported in the column title; (5)~absolute error $|d - \hat d|$ on a magma colormap, displayed up to $2\delta$; (6)~per-pixel Huber loss $\rho_\delta(\cdot)$, displayed up to $\delta^{2}$. The $L_2$-to-$L_1$ transition is visible by comparing columns (5) and (6): large residuals enter the linear branch of the penalty. Display saturation is separate from the loss, which continues to grow linearly.}
\label{fig:huber_viz}
\end{figure}
\FloatBarrier
\subsection{Surface-Normal Consistency Loss ($\mathcal{L}_{\text{norm}}$)}
\label{sec:normal_loss}

\paragraph{Formulation.}
The current normal-consistency operator uses the unnormalized $3\times3$ Sobel kernels (without the usual division by eight) and a spatial factor $s=\max(H,W)/2$. Applied to signed-log relief, this constructs normals in a normalized coordinate system; it does not by itself convert the derivatives into physical slopes. The represented normal is
\begin{equation}
\mathbf{n}(\mathbf{x}) = \frac{(-s\,\partial_x d,\,-s\,\partial_y d,\,1)^{\!\top}}
{\big\lVert(-s\,\partial_x d,\,-s\,\partial_y d,\,1)^{\!\top}\big\rVert_2},
\end{equation}
With $\hat{\mathbf n}$ computed from the reference under the same convention, the cosine loss is:
\begin{equation}
\mathcal{L}_{\text{norm}} = \frac{1}{|\Omega_v|}\sum_{\mathbf{x}\in\Omega_v}
\big(1 - \mathbf{n}(\mathbf{x})\!\cdot\!\hat{\mathbf n}(\mathbf{x})\big).
\end{equation}
The validity mask is morphologically eroded by one pixel to discard locations whose Sobel stencil crosses nodata. For physical interpretation, let $q$ be normalized relief, $r=S_{\mathrm{ref}}\operatorname{sgn}(q)(2^{|q|}-1)$ its inverse, and $X,Y$ horizontal coordinates in meters. Away from clipped tails,
\begin{equation}
\partial_X r=S_{\mathrm{ref}}\ln2\,2^{|q|}\partial_X q,\qquad
\mathbf n_{\mathrm{phys}}=\frac{(-\partial_X Z,-\partial_Y Z,1)}{\sqrt{1+(\partial_X Z)^2+(\partial_Y Z)^2}},
\end{equation}
where $Z=r+\text{plane}$ if the regional trend is independently available. Discrete derivatives must use the actual pixel spacing in meters. The logarithmic chain factor varies spatially, so a single factor $\max(H,W)/2$ cannot substitute for this conversion. This distinction applies to both angular evaluation and physical interpretation of shading.

\paragraph{Justification.}
Normal supervision compares local surface orientation and complements direct relief regression~\cite{Eigen2014DepthNetwork,Hu2018RevisitingBoundaries,Yin2019EnforcingPrediction,Fan2021Three-Filters-to-Normal:Estimator,Yang2024DepthData}. Here it operates on the graph of encoded relief, so its geometric meaning depends on both the height transform and derivative convention. The training operator uses raw Sobel kernels with replicate padding; evaluation uses unit-pixel finite differences. Neither supplies physical slope accuracy without horizontal spacing and height inversion.

\FloatBarrier
\subsection{Multi-Scale Gradient Matching Loss
($\mathcal{L}_{\text{grad}}$)}
\label{sec:grad_loss}

\paragraph{Formulation.}
Let $\mathcal P_s$ denote ordinary $s\times s$ average pooling, $m$ the confidence map, and $S=\{1,2,4\}$. For axis $a\in\{x,y\}$ define $\Delta_a$ as the forward difference and $w_{a,s}(u)=(\mathcal P_s m)(u)(\mathcal P_s m)(u+e_a)$. The implemented loss is
\begin{equation}
\mathcal L_{\mathrm{grad}}=\frac{1}{|S|}\sum_{s\in S}\frac1s
\sum_{a\in\{x,y\}}
\frac{\sum_u w_{a,s}(u)|\Delta_a\mathcal P_s d(u)-\Delta_a\mathcal P_s\hat d(u)|}{\sum_u w_{a,s}(u)+\varepsilon}.
\end{equation}
The two axes are normalized separately. At $s=1$, binary weights require both differenced pixels to be valid. At coarser scales, fractional pooled confidence downweights partially observed blocks; pooling is not masked, so filled values can influence their averages. The factor $1/s$ expresses a pooled-grid difference in original-pixel coordinates. This convention does not ensure equal contributions across scales.

\paragraph{Justification.}
Gradient matching ignores constant height offsets while preserving sensitivity to signed slope and edge contrast. It remains sensitive to tilt, local stereo artifacts, and height scale. Multi-scale supervision~\cite{eigen2015predictingdepthsurfacenormals,Li2018MegaDepth:Photos} compares sharp transitions and broader variations using the same basic operator. For Martian terrain, crater rims and scarp boundaries motivate the fine-scale term, while broad relief motivates pooled comparisons. The $1\times$ and $4\times$ fields in Figure~\ref{fig:loss_components} show what these terms measure; their individual benefit requires ablation.

\FloatBarrier
\subsection{Laplacian (Curvature) Consistency Loss
($\mathcal{L}_{\text{lap}}$)}
\label{sec:laplacian_loss}

\paragraph{Formulation.}
Let $\nabla^2$ denote the discrete five-point Laplacian. We use
\begin{equation}
\mathcal{L}_{\text{lap}} = \frac{1}{|\Omega_v^{\circ}|}
\sum_{\mathbf{x}\in\Omega_v^{\circ}}
\big|\nabla^2 d(\mathbf{x}) - \nabla^2 \hat{d}(\mathbf{x})\big|,
\label{eq:lap}
\end{equation}
where $\Omega_v^{\circ}$ is the valid set eroded by the $3{\times}3$ Laplacian footprint.

\paragraph{Justification.}
The Laplacian adds an explicit second-derivative penalty to first-order matching. Signed gradient agreement already constrains curvature indirectly; the extra term changes how local errors are weighted. With the convention $\Delta d=d_{x+1}+d_{x-1}+d_{y+1}+d_{y-1}-4d$, an ideal bowl has positive Laplacian and a peak negative Laplacian. This scalar is not a complete curvature tensor or a proof of geomorphic class, especially after nonlinear height normalization. Related differential operators are used in surface reconstruction and editing~\cite{Nehab2005EfficientlyGeometry,Sorkine2004LaplacianEditing,Atzmon2020SAL:Data}. Figure~\ref{fig:laplacian_viz} shows reference, prediction, and disagreement under the diagnostic's operator convention.
\begin{figure}[htbp]
\centering
\includegraphics[width=\linewidth,height=0.80\textheight,keepaspectratio]{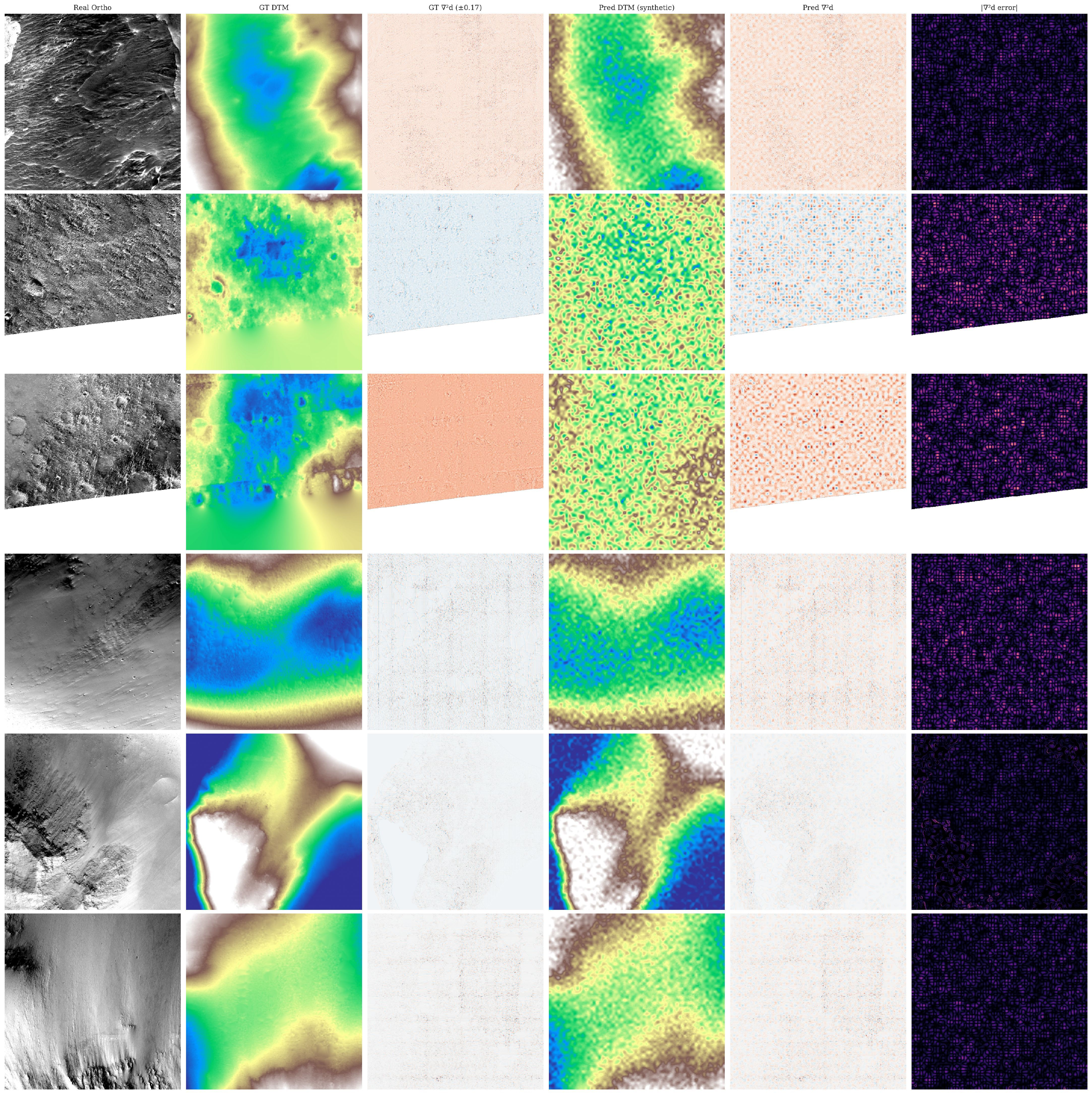}
\caption{\textbf{Laplacian diagnostic.} Columns show orthoimage, reference relief, reference Laplacian, predicted relief, predicted Laplacian, and absolute Laplacian error. Shared display limits permit spatial comparison of the derivative fields. Interpreting red/blue as bowl/peak requires checking the stencil sign; the standard positive-neighbor Laplacian is positive for a bowl. These are derivatives of the represented relief field, not independently calibrated physical curvatures.}
\label{fig:laplacian_viz}
\end{figure}
\FloatBarrier
\subsection{Focal Frequency Loss ($\mathcal{L}_{\text{FFL}}$)}
\label{sec:ffl_loss}

\paragraph{Formulation.}
Let $\mathcal{F}$ denote the 2-D discrete Fourier transform. Writing $F = \mathcal{F}(d)$ and $\hat F = \mathcal{F}(\hat d)$, the focal frequency loss of Jiang \emph{et al.}~\cite{Jiang2020FocalSynthesis} computes:
\begin{equation}
\mathcal{L}_{\text{FFL}} =
\frac{1}{HW}\sum_{\mathbf u} w(\mathbf u)\,
\big|F(\mathbf u) - \hat F(\mathbf u)\big|^2,
\quad
w(\mathbf u) = \frac{|F(\mathbf u) - \hat F(\mathbf u)|^\alpha}
{\max_{\mathbf u}|F(\mathbf u) - \hat F(\mathbf u)|^\alpha},
\end{equation}
The weight emphasizes the largest current frequency discrepancies, which need not occur at high frequency. Define $w=0$ when all discrepancies vanish, or use a documented positive denominator guard. Before the FFT, GT injection sets $\tilde d=c\,d+(1-c)\hat d$ for confidence $c\in[0,1]$. For a binary mask, the residual is zero outside the valid region. This suppresses differences caused by arbitrary padding but does not eliminate spectral leakage at mask boundaries; the FFT still analyzes a finite, masked domain.

\paragraph{Justification.}
Spectral bias motivates examining frequency-dependent errors~\cite{Rahaman2018OnNetworks,Xu2024FrequencyNetworks,Basri2020FrequencyDensity}. FFL adaptively weights difficult Fourier coefficients and has been studied for reconstruction and related synthesis tasks~\cite{Jiang2020FocalSynthesis,Gao2023ImplicitSuper-Resolution}. It can only compare with the spectrum present in the reference, and cannot verify fine terrain absent from that reference. We retain this optional formulation and its diagnostic spectrum, but $\lambda_{\mathrm{FFL}}=0$ in the reported training configuration, so no experimental gain is attributed to it.

\FloatBarrier
\subsection{Ordinal Ranking Loss ($\mathcal{L}_{\text{ord}}$)}
\label{sec:ordinal_loss}

\paragraph{Formulation.}
For each training iteration, we sample $N_p=10^4$ pixel pairs $(\mathbf{x}_i,\mathbf{x}_j)$ per image, optionally weighted by the confidence map. Only pairs with unambiguous GT ordering $|\hat d_i - \hat d_j| > \tau$ (we use $\tau=0.02$) contribute:
\begin{equation}
\mathcal{L}_{\text{ord}} =
\frac{1}{N_{\text{ord}}}
\sum_{(i,j):|\hat d_i - \hat d_j|>\tau}
\max\!\big(0,\;-\operatorname{sgn}(\hat d_i - \hat d_j)\,(d_i - d_j)\big),
\end{equation}
Here $N_{\mathrm{ord}}$ counts the retained pairs; define the loss as zero if that count is zero. This is a zero-margin hinge: an exactly tied prediction also has zero penalty. Thus it discourages reversed ordering but does not enforce a strictly positive separation. Adding such a margin would change the implemented objective and requires a separate experiment.

\paragraph{Justification.}
Stereo artifacts can corrupt elevations and their ordering~\cite{Beyer2018TheData,Kirk2009UltrahighSites}. Ordinal supervision is useful when an error changes magnitudes without reversing the selected pair order; the threshold removes pairs with small reference differences but does not guarantee correct labels. The approach draws on relative-depth learning~\cite{ChenSingle-ImageWild,Xian2018MonocularSupervision,Ranftl2022TowardsTransfer}, while the displayed hinge is the formulation used here rather than a claim of exact equivalence to all cited objectives. Sampling with zero probability on invalid pixels excludes nodata; confidence weighting favors trusted regions only to the extent that the confidence map is reliable. Figure~\ref{fig:ordinal_viz} retains pair outcomes and local violation rates for inspection.
\begin{figure}[htbp]
\centering
\includegraphics[width=\linewidth,height=0.80\textheight,keepaspectratio]{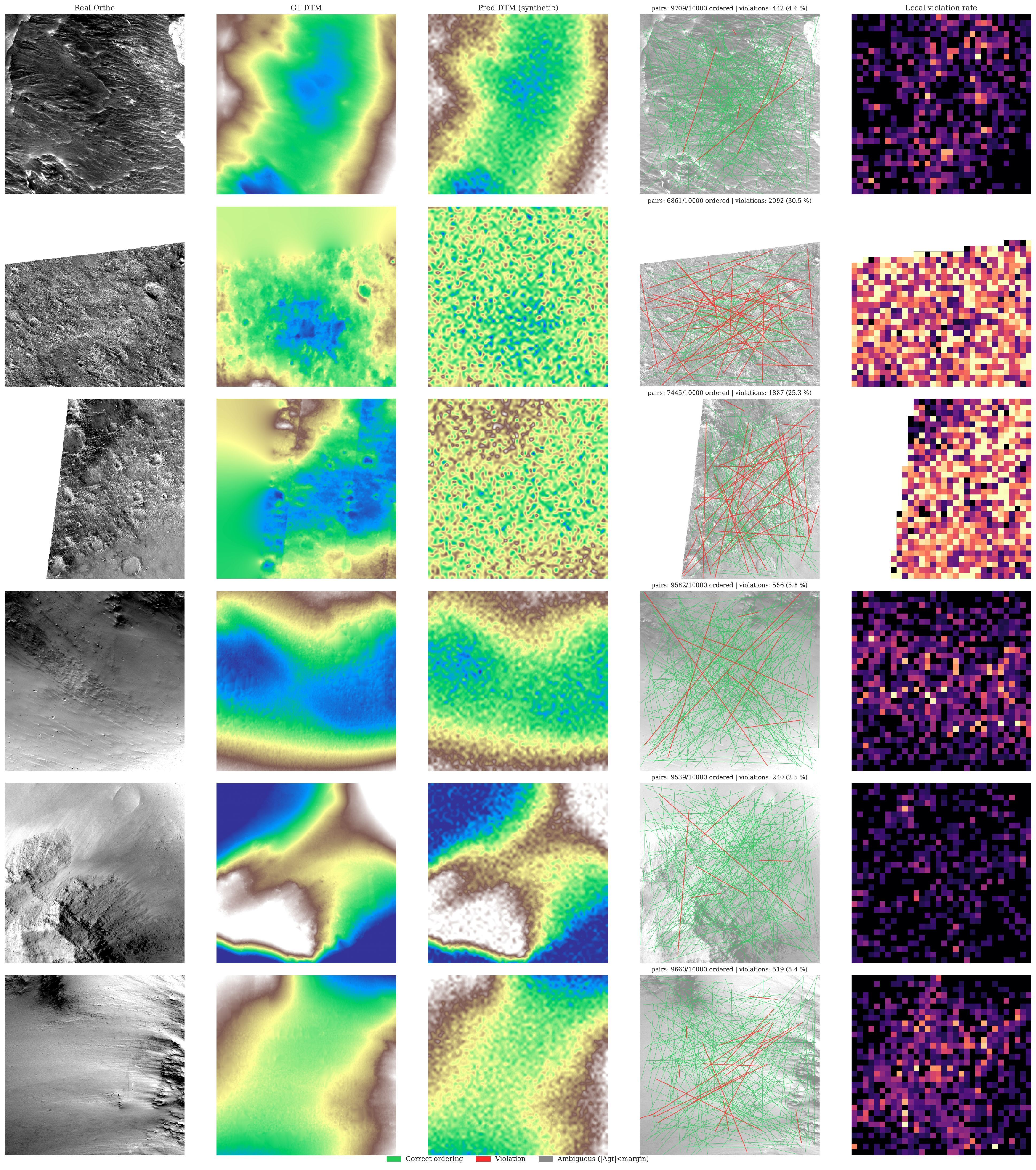}
\caption{%
\textbf{Ordinal ranking diagnostic.} Columns: (1)~orthoimage; (2)~GT DTM; (3)~predicted DTM; (4)~\emph{pair-sample panel} --- 250 randomly-drawn pairs overlaid on a dimmed ortho. Green segments mark pairs whose GT ordering is unambiguous ($|\Delta\hat d|>\tau$) and whose prediction \emph{agrees}; red segments mark ordering \emph{violations}; grey segments are ambiguous pairs ($|\Delta\hat d|\le\tau$) which are excluded from the loss. The per-row caption reports the ordered/total pair count and the violation rate; (5)~\emph{local violation rate} on a coarse $32\!\times\!32$ grid (magma colormap). Hot spots reveal where the model systematically predicts the wrong relative ordering --- typically crater rims and shadowed regions --- and are directly actionable for dataset reweighting or active-learning sample selection. Legend at the foot of the figure.}
\label{fig:ordinal_viz}
\end{figure}
%
\FloatBarrier
\subsection{Photoclinometric (Shape-from-Shading) Loss ($\mathcal{L}_{\text{photo}}$)}
\label{sec:photo_loss}

\FloatBarrier
\subsubsection{Motivation}

The preceding objectives compare with the stereo reference, directly or in latent space. Where the image grid has finer sampling than the terrain grid~\cite{McEwen2007MarsHiRISE,Kirk2009UltrahighSites}, interpolation does not create missing geometric measurements. As in other super-resolution settings~\cite{Dong2014ImageNetworks,Ledig2016Photo-RealisticNetwork}, plausible fine-scale output must be distinguished from verified recovery. Image-based shading offers an additional constraint whose information and ambiguities differ from those of stereo.

Photoclinometry relates image irradiance to surface orientation under assumptions about illumination and reflectance~\cite{SHAPEBooks,Horn1989ShapeShading,Rigneault2008NumericalBenchmarks}. Planetary applications have used these constraints to refine terrain estimates~\cite{Kirk2003High-resolutionImages,Kirk2009UltrahighSites,Beyer2018TheData,Alexandrov2018MultiviewImages}. We transfer this idea into a differentiable training objective: predicted relief is rendered and compared with the observed image. The image contains potentially useful fine-scale variation, but some of that variation comes from albedo, cast shadows, registration, or sensor effects. Consequently, shading consistency is evidence of compatibility with a surrogate image model, not a unique solution for terrain.

The photoclinometric term is the only objective here that compares the prediction to image intensity rather than solely to stereo-derived relief. Its spatial support is the processed image grid. Establishing a gain beyond the reference requires both a known resampling scale and independent geometric validation.

\FloatBarrier
\subsubsection{Formulation}

In the current implementation, the training caller first applies the inverse signed-log transform to decoded predicted relief. The recorded experimental revision instead renders the normalized decoded field. The reflectance equations below apply to either graph, but the input representation changes its normals and must accompany an experimental result. The shading branch uses Sobel kernels divided by eight, unlike the normal-consistency branch, and multiplies derivatives by $\max(H,W)/2$, with a numerical clamp at $10^4$. This defines a graph on normalized horizontal coordinates; even with meter-valued residual heights, it is not a calibrated metric surface unless horizontal spacing is supplied. We render that graph under a Lunar--Lambert surrogate and compare it to the grayscale orthoimage $I_o$ using a positive-affine-invariant structural comparison:
\begin{equation}
\mathcal{L}_{\text{photo}} = \frac{1}{|S|}\sum_{s\in S}
\Big[(1-\alpha)\,\big(1-\operatorname{Pearson}_s(\mathcal{R}_s,\,I_{o,s})\big)
+ \alpha\,\big(1-\operatorname{SSIM}^{z}_s(\mathcal{R}_s,\,I_{o,s})\big)\Big],
\label{eq:photo}
\end{equation}
with $S=\{1,2,4\}$, $\alpha=0.5$, and $\mathcal{R}_s$ the render at scale $s$ defined below. We now detail each component.

\paragraph{Lunar--Lambert reflectance model.}
Lambertian reflectance is a useful baseline~\cite{SHAPEBooks}, but planetary regolith can exhibit more complex photometric behavior~\cite{Hapke2012TheorySpectroscopy}. We use an empirical Lunar--Lambert blend~\cite{McEwen1991PhotometricApplications,mcewen1996reflectance,Kirk2003High-resolutionImages}. In our coefficient convention, $L=1$ is Lambertian and $L=0$ is the unscaled Lommel--Seeliger term:
\begin{equation}
\mathcal{R}_{\text{LL}}(\mathbf{n};L,\ell) =
\underbrace{L\,\mu_i}_{\text{Lambertian}}
+ \underbrace{(1-L)\,\frac{\mu_i}{\mu_i+\mu_e+10^{-6}}}_{\text{Lommel--Seeliger}},
\label{eq:ll_render}
\end{equation}
Here $\mu_i=\max(\mathbf n\cdot\boldsymbol\ell,0)$ and $\mu_e=\max(n_z,10^{-4})$ implement nonnegative incidence and a safe denominator under a nadir-view surrogate. HiRISE stereo acquisitions can be off nadir, so this is a modeling approximation rather than an instrument guarantee. The scalar blend is $L=\sigma(\ell_{LL})$, initialized by $\ell_{LL}=0$ to give $L=0.5$. Coefficient conventions in the photometric literature can reverse the mixture weight or include a factor of two; the equation above defines the convention used in this manuscript. It is an empirical blend, not a fitted complete phase function. The per-pixel render is
\begin{equation}
\mathcal{R}(\mathbf{x}) =
\mathcal{R}_{\text{LL}}\big(\mathbf n(\mathbf x);L,\boldsymbol\ell\big)\,I_{\text{sun}}
+ I_{\text{amb}},
\end{equation}
The per-image gain and offset $(I_{\mathrm{sun}},I_{\mathrm{amb}})$ are estimated using the robust Lambertian fit in Section~\ref{sec:sun_irls}. Positive global exposure changes are removed by the structural comparison below. A change from Lambertian to Lunar--Lambert reflectance can also alter spatial contrast and is not generally reducible to a single affine exposure correction.

\paragraph{Multi-scale structural comparison.}
At each scale $s\in\{1,2,4\}$, the implementation average-pools relief and the orthoimage, then recomputes normals and renders at that scale. Rendering pooled relief differs from pooling a full-resolution render. Pooled confidence must exceed 0.99, and samples with less than 5\% valid support at a scale receive a zero contribution. Two complementary terms are computed:
\begin{itemize}
\item A \textbf{Pearson correlation} term $\operatorname{Pearson}_s = \frac{\operatorname{Cov}(\mathcal{R}_s,I_{o,s})}
{\sqrt{\operatorname{Var}(\mathcal{R}_s)\operatorname{Var}(I_{o,s})}}$. For nonconstant signals on a fixed mask, Pearson is invariant to positive affine changes of either signal; negative gain reverses correlation. It measures correspondence of spatial intensity variation. \item A \textbf{$ z$-normalized SSIM}~\cite{Wang2004ImageSimilarity} term $\operatorname{SSIM}^{z}_s$, computed after each signal is per-image z-scored inside the valid mask. Z-scoring removes \emph{global} luminance and contrast, but SSIM's local luminance/contrast/structure decomposition preserves local structural comparison using a $7\!\times\!7$ Gaussian kernel.
\end{itemize}
Equation~\eqref{eq:photo} removes positive global gain and additive offset under fixed masks and nonzero signal variance. It remains sensitive to local pattern changes, including those caused by geometry, the blend $L$, albedo, or an incorrect sun direction. Numerical variance floors, clipping, and gain-dependent masks can weaken exact invariance. These distinctions matter when interpreting both the loss and the illumination sensitivity experiments.

The $s=1$ comparison retains variation on the processed image grid, while $s=2,4$ average local fluctuations and compare broader patterns. This is motivated by multi-scale structural and photoconsistency objectives~\cite{Wang2003Multi-scaleAssessment,Yao2018MVSNet:Stereo}. It provides neither a sub-pixel accuracy guarantee nor proof that a particular scale stabilizes training; those are experimental questions.

The implementation does not clip the final rendered intensity to the orthoimage's input range. It does clamp local incidence to zero and stabilizes the emission denominator as specified in Equation~\eqref{eq:ll_render}. These operations do not model cast shadows. Global exposure invariance holds for the ideal structural comparison under fixed masks; variance floors make it approximate numerically. Deep shadows can be poorly informative or violate the surrogate~\cite{SHAPEBooks}.

\FloatBarrier
\subsubsection{Per-Scene Illumination Estimation with a Decoupled Offset}
\label{sec:sun_irls}

The render uses a unit direction $\boldsymbol\ell\in\mathbb S^2$, positive directional gain $I_{\mathrm{sun}}$, and ambient offset $I_{\mathrm{amb}}$. PDS products contain acquisition geometry, but a tile-level image-model fit additionally requires consistent coordinate conventions and radiometric assumptions~\cite{McEwen2007MarsHiRISE,Kirk2009UltrahighSites}. Broader image mosaics similarly combine acquisition and processing choices~\cite{Dickson2018AEditing}. We estimate nuisance parameters from each ortho/reference-DTM/mask triple during data preparation and cache them. These are reference-conditioned fit parameters, not independent measurements of the true sun direction. Their role in training and diagnostic scoring should be distinguished from deployment without a reference DTM.

\paragraph{Conditioning of the joint direction--offset fit.}
A joint Lambertian fit uses $I_o(\mathbf x)=\mathbf k^\top\mathbf n(\mathbf x)+b$ with design matrix $A=[\mathbf N_x,\mathbf N_y,\mathbf N_z,\mathbf1]$. On gently varying terrain, $n_z\approx1$, making the last two columns nearly collinear. The system can therefore be ill-conditioned, while exactly flat terrain is rank-deficient. Directional vertical intensity and offset are then difficult to identify separately, and noise or model mismatch can produce unstable coefficients, including a negative $\hat k_z$. The issue is conditioning and identifiability, not a universal failure of ordinary least squares.

A negative vertical direction is inconsistent with the chosen upper-hemisphere illumination prior. Depending on clipping and the reflectance continuation, it can produce unphysical or weakly varying renders. An unclamped formula does not necessarily become a uniformly black image, but near-constant renders do make normalized structural losses poorly conditioned. This motivates an explicit estimation and validity policy.

\paragraph{Implemented estimator.}
The cached illumination parameters were generated with \texttt{estimate\_sun\_vector\_irls}. The adapter supplies inverse-transformed residual relief, patch-normalized imagery, and a validity mask. The routine averages RGB channels when present and constructs unit normals from replicate-padded Sobel derivatives divided by eight and multiplied by $\max(H,W)/2$. These are graph normals in normalized horizontal coordinates. Let $\Omega_v$ contain valid pixels with normalized image intensity greater than $10^{-4}$. For at least 100 selected pixels, the offset and regression target are
\begin{equation}
I_{\mathrm{amb}}=Q_{0.01}\{I_o(\mathbf x):\mathbf x\in\Omega_v\},\qquad
Y_i=\max(I_o(\mathbf x_i)-I_{\mathrm{amb}},0).
\end{equation}
The percentile estimates an offset of the selected normalized intensities, not physical ambient radiance. Removing the intercept avoids its near-collinearity with $n_z$, although weak normal diversity can still make direction poorly identified. With rows $H_i=\mathbf n_i^\top$, the initial coefficient is $\mathbf k^{(0)}=\arg\min_{\mathbf k}\|H\mathbf k-Y\|_2^2$.

The routine forms Huber-weighted candidates~\cite{Huber1964RobustParameter,Holland1977RobustLeast-squares} using a MAD scale~\cite{Rousseeuw1993AlternativesDeviation}:
\begin{equation}
\begin{aligned}
r_i^{(t)}&=Y_i-H_i\mathbf k^{(t)},\qquad
\sigma^{(t)}=\operatorname{median}_i|r_i^{(t)}-\operatorname{median}_j r_j^{(t)}|/0.67449+10^{-6},\\
w_i^{(t)}&=\min\left(1,\frac{1.345}{|r_i^{(t)}/\sigma^{(t)}|+10^{-8}}\right),\qquad
\widetilde{\mathbf k}^{(t+1)}=\arg\min_{\mathbf k}\sum_i w_i^{(t)}(Y_i-H_i\mathbf k)^2.
\end{aligned}
\label{eq:irls_update}
\end{equation}
The actual assignment rule differs from an ordinary IRLS recurrence: the candidate replaces the current coefficient only if $\|\widetilde{\mathbf k}^{(t+1)}-\mathbf k^{(t)}\|_2<10^{-4}$, after which the loop terminates. Otherwise the coefficient remains unchanged, for at most 15 attempts. Thus a nonconvergent call returns its initial least-squares coefficient; the implementation does not progressively reweight successive updated fits. The cache's use of this routine establishes its provenance, but does not establish the robustness of a completed IRLS optimization.

Finally, the routine reflects a negative vertical coefficient, $k_z\leftarrow|k_z|$, returns $I_{\mathrm{sun}}=\max(\|\mathbf k\|_2,10^{-4})$, and applies PyTorch vector normalization to obtain $\boldsymbol\ell$. Reflection is an upper-hemisphere heuristic, not a constrained least-squares optimum~\cite{Belhumeur1999Bas-reliefAmbiguity,Shi2019AStereo}. A zero coefficient vector remains a degenerate direction. With fewer than 100 selected pixels, the defaults are $\boldsymbol\ell=(0.5,-0.5,1)/\sqrt{1.5}$, $I_{\mathrm{sun}}=1$, and $I_{\mathrm{amb}}=0.05$. Figure~\ref{fig:solar_dist} describes the fitted distribution; comparison with acquisition geometry would provide an independent directional check.

%
\begin{figure}[htbp]
\centering
\begin{subfigure}[t]{0.31\linewidth}
\centering
\includegraphics[width=\linewidth,height=0.80\textheight,keepaspectratio]{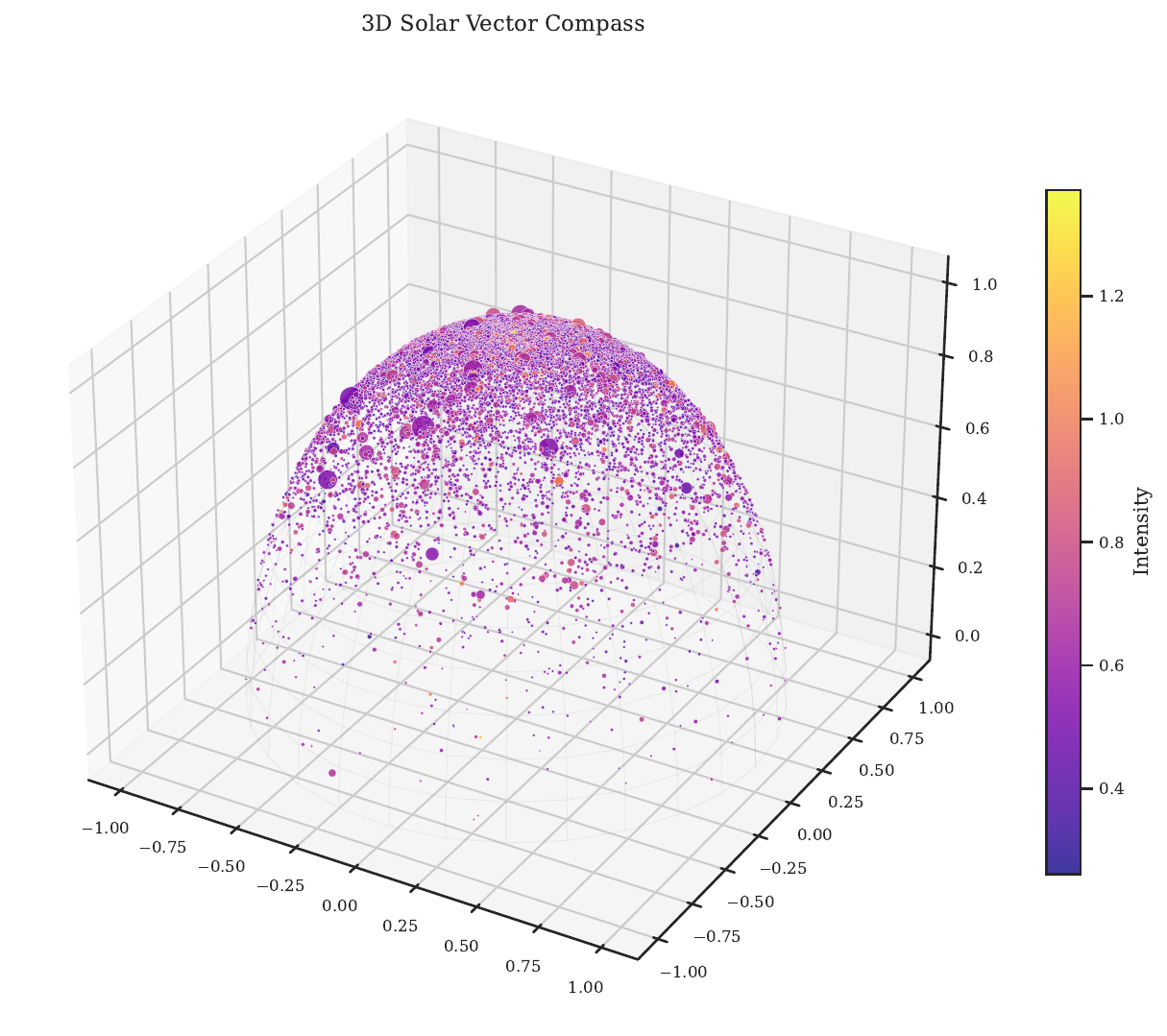}
\caption{3D solar vector compass.}
\label{fig:solar_compass_3d}
\end{subfigure}
\hfill
\begin{subfigure}[t]{0.31\linewidth}
\centering
\includegraphics[width=\linewidth,height=0.80\textheight,keepaspectratio]{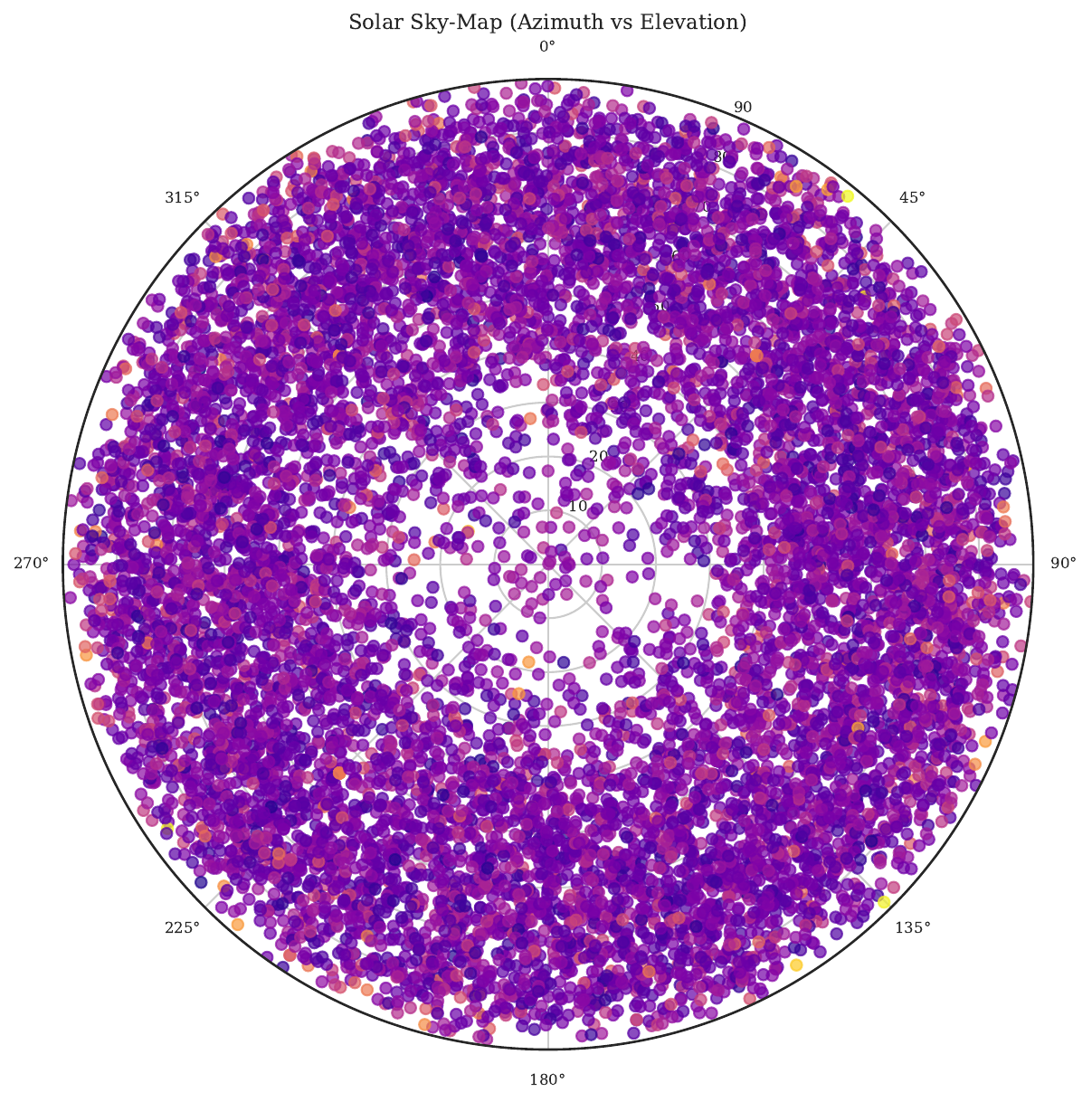}
\caption{Polar sky map (azimuth vs.\ elevation).}
\label{fig:solar_sky_polar}
\end{subfigure}
\hfill
\begin{subfigure}[t]{0.31\linewidth}
\centering
\includegraphics[width=\linewidth,height=0.80\textheight,keepaspectratio]{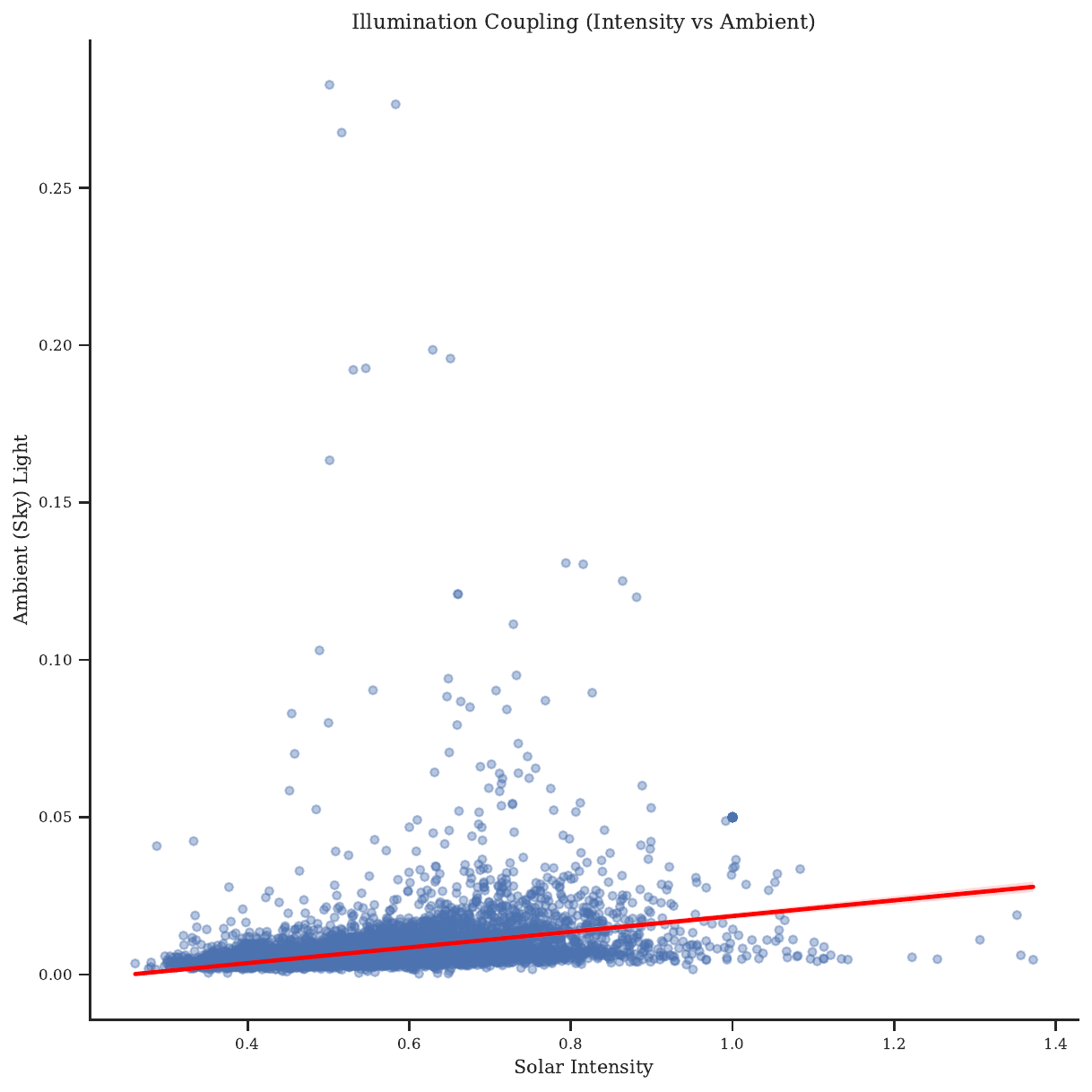}
\caption{Exposure coupling (intensity vs.\ ambient).}
\label{fig:solar_coupling}
\end{subfigure}
\caption{\textbf{Distribution of fitted illumination parameters.} (a) Unit directions after the upper-hemisphere correction; color encodes fitted directional gain and marker size ambient offset. (b) Polar view of the direction distribution. (c) Gain versus offset with a fitted trend. The hemisphere support is imposed by the estimator, and direction coverage is not proof of unbiased recovery. Positive gain--offset association can arise from exposure or the fitting procedure as well as atmospheric effects~\cite{Hapke2012TheorySpectroscopy,Pollack1979PropertiesAtmosphere}. Independent metadata comparison is needed to validate the fitted directions and physical interpretation.}
\label{fig:solar_dist}
\end{figure}
\FloatBarrier
\subsection{Sun vector Photoclinometric loss sensitivity}
Changing the sun direction changes the spatial shading pattern and can therefore affect the training constraint. For fixed geometry and support, positive global gain and additive offset are removed by ideal z-normalization and Pearson comparison. Local illumination or reflectance changes are not removed by this invariance.

Figure~\ref{fig:component_ablation} reports mean loss sensitivity to angular perturbations around fitted reference directions; Figure~\ref{fig:qualitative_illumination_physics} shows two $+45^{\circ}$ perturbations on one patch. Figures~\ref{fig:prove_and_visualize_local_convexity} and~\ref{fig:saddle_contour} examine local curvature, while Figure~\ref{fig:umap_global_disentanglement} reports exploratory predictive probes. Together they characterize the shading objective under the sampled conditions. They are not trained-model loss ablations or a proof of unique geometric recovery.

\begin{figure}[htbp]
    \centering
    \includegraphics[width=\linewidth,height=0.80\textheight,keepaspectratio]{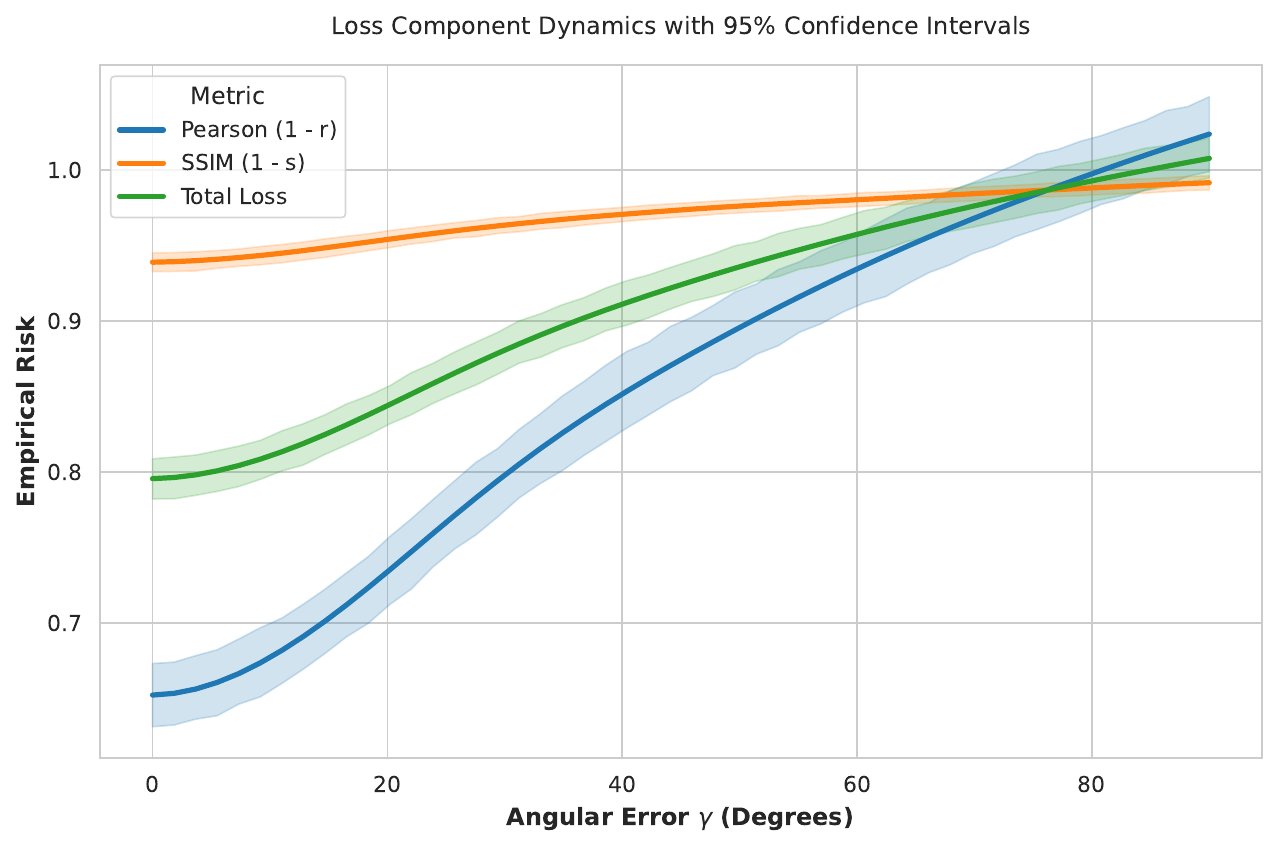}
    \caption{\textbf{Sensitivity of image-loss components to direction perturbation.} The mean Pearson, SSIM, and combined losses increase over the plotted angular range. Bands are labeled 95\% intervals in the archived export; the resampling unit and procedure must accompany the experiment. Pearson varies more strongly than SSIM in this plot. The comparison characterizes the two components under fixed geometry; it does not prove that both are necessary for reconstruction accuracy.}
    \label{fig:component_ablation}
\end{figure}

The photoclinometric forward model couples three physical parameters to the rendered intensity: the sun direction $\mathbf{s}\!\in\!\mathbb{S}^2$, a multiplicative gain $I$, and an additive ambient term $A$. By construction (Sec.~\ref{sec:photo_loss}), the loss is built on per-image $z$-score normalization followed by Pearson correlation and SSIM, with positive-affine invariance supplied by normalization (and by Pearson itself). Since $I$ and $A$ enter the render only through such a transformation, the loss analytically annihilates them; among these three illumination parameters, only $\mathbf{s}$ changes the spatial response after ideal normalization. Geometry and reflectance parameters can also change that response. A formal derivation is given in App.~\ref{app:invariance_proof}. The following diagnostics examine this distinction at pixel, sample, terrain-aggregate, and dataset levels, subject to the assumptions of the invariance proof.

\textbf{Pixel-level.} Figure~\ref{fig:qualitative_illumination_physics} compares a fitted reference direction with $+45^{\circ}$ azimuth and elevation perturbations on one patch. The z-residual RMSE changes from 1.13 to 1.30 for azimuth and to 1.14 for elevation, with corresponding MAE values of approximately 0.89, 1.08, and 0.91. The different responses show that angular sensitivity depends on direction and terrain. The unchanged input image and retained residual distributions make that comparison inspectable.

\begin{figure}[htbp]
    \centering
    \includegraphics[width=0.78\linewidth,height=0.80\textheight,keepaspectratio]{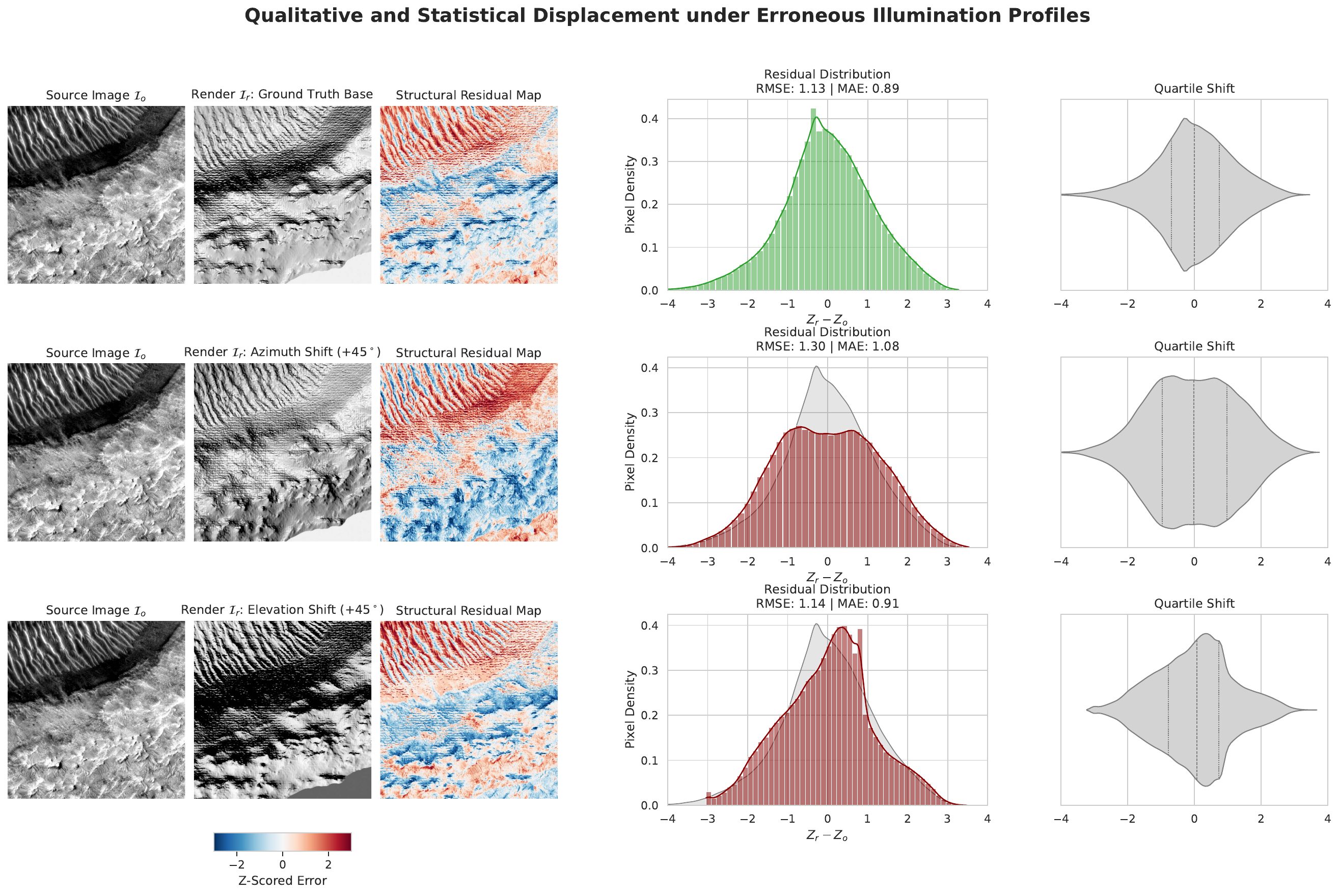}
    \caption{\textbf{Effect of illumination perturbations on one patch.} Columns show the observed image, surrogate render, signed z-residual, residual density, and a distribution summary. Rows use the fitted reference direction, a $+45^{\circ}$ azimuth shift, and a $+45^{\circ}$ elevation shift. The baseline is not independently measured solar ground truth. The retained RMSE/MAE annotations quantify image residuals; they are not terrain errors or proof of a unique inverse.}
    \label{fig:qualitative_illumination_physics}
\end{figure}

\textbf{Sample-level.} Figure~\ref{fig:prove_and_visualize_local_convexity} reports tangent-plane diagnostics for 794 fitted patches. Its classifier labels 793 cases as minima and one as a saddle, with a displayed bootstrap interval of $[99.6,100.0]\%$ for the former proportion. Positive Hessian eigenvalues characterize local curvature; a strict-minimum claim also requires stationarity and numerical reliability. The plotted median gradient norm is approximately $1.5\times10^{-3}$ and the condition-number distribution peaks near 1.8. The quarter-circle pattern in the direction-alignment panel follows from projecting a unit tangent vector onto an orthonormal two-dimensional basis, and does not establish alignment with either physical axis.

\begin{figure}[htbp]
    \centering
    \includegraphics[width=\linewidth,height=0.80\textheight,keepaspectratio]{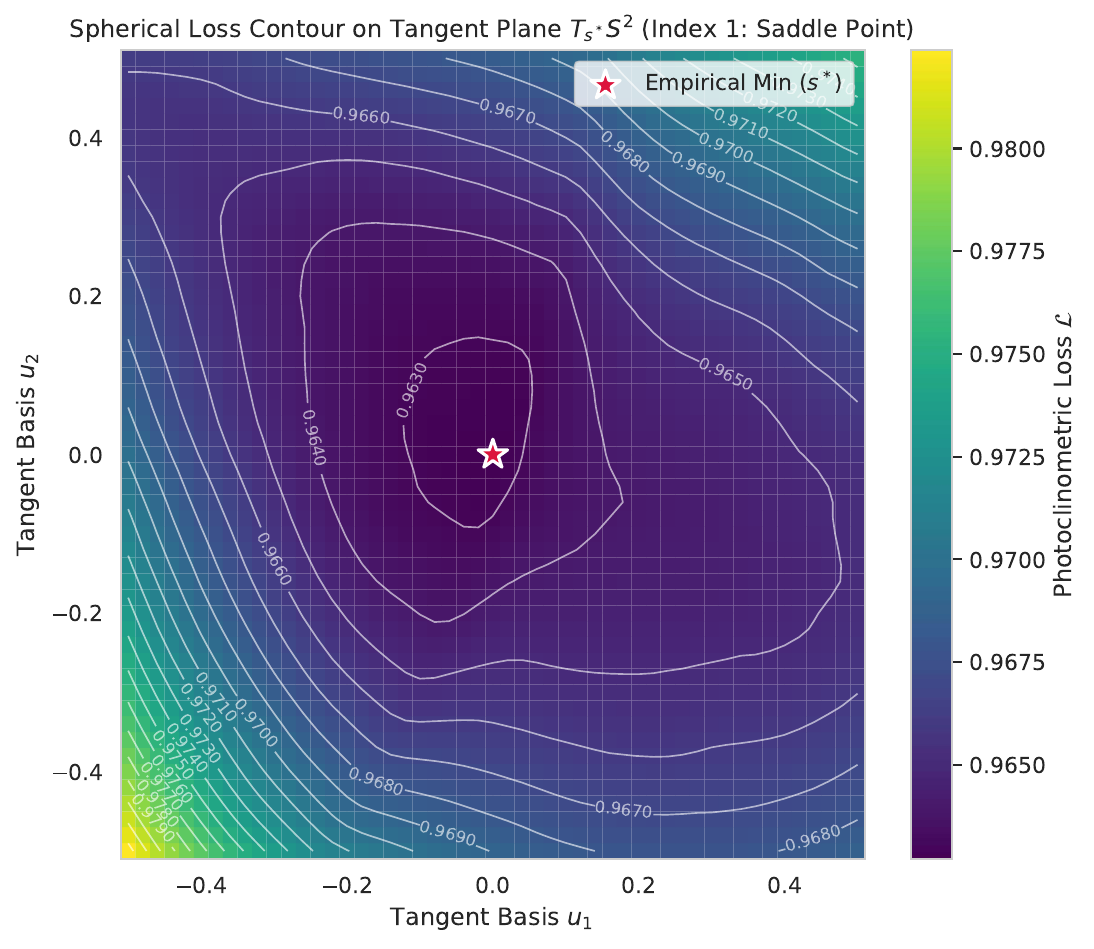}
    \caption{\textbf{Retained negative-curvature case.} This archived tangent-plane contour accompanies the sample classified as a saddle. The reported negative eigenvalue, $-6.21\times10^{-3}$, cannot be dismissed as numerical error merely because it is small. A reliable classification requires stationarity checks, finite-difference or autodifferentiation consistency, and sensitivity to numerical tolerances. The example qualifies the broader local-curvature result.}
    \label{fig:saddle_contour}
\end{figure}

\begin{figure}[htbp]
    \centering
    \includegraphics[width=\linewidth,height=0.80\textheight,keepaspectratio]{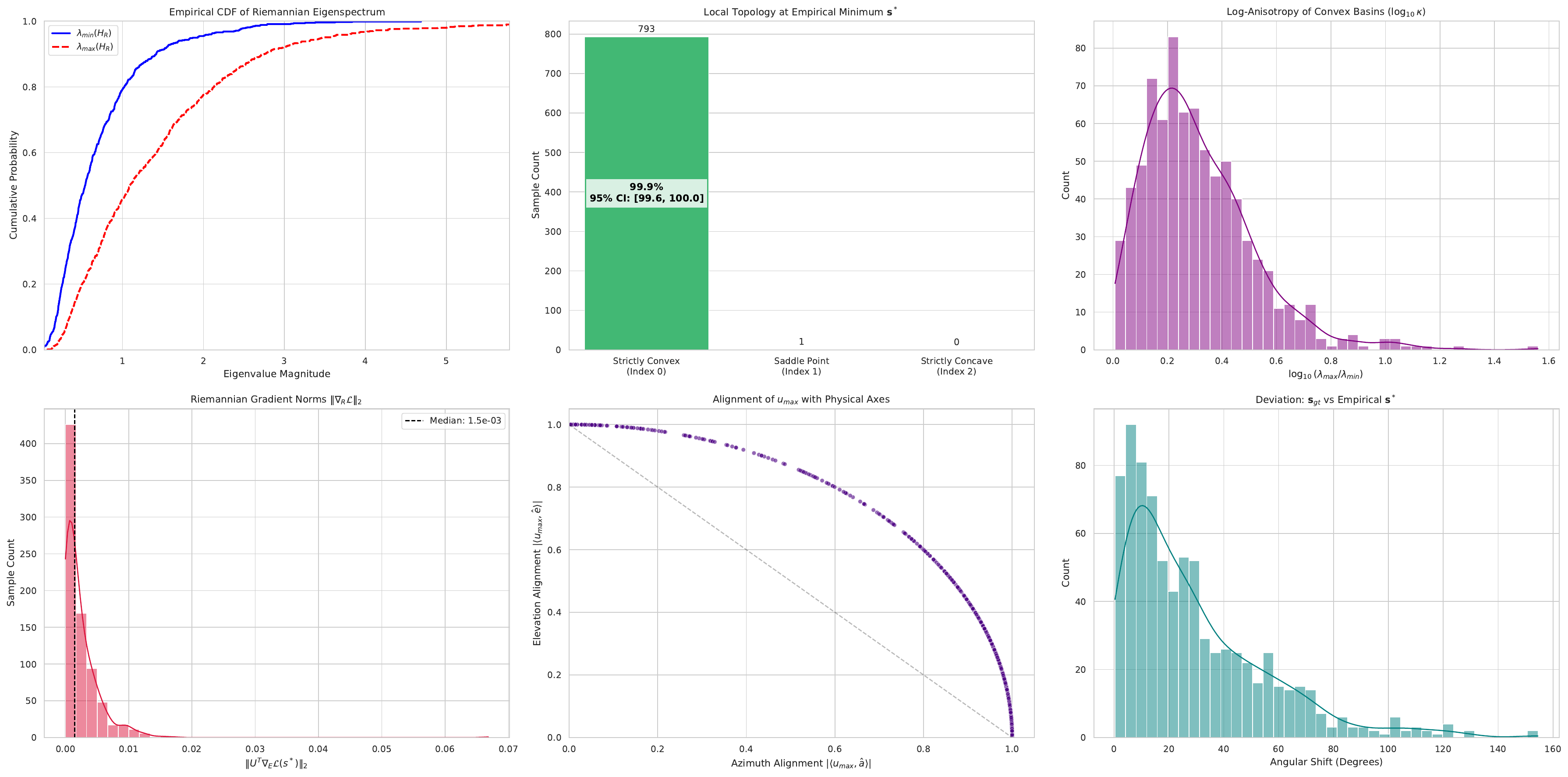}
\caption{\textbf{Local tangent-plane diagnostics for 794 patches.} Top row: Hessian-eigenvalue empirical CDFs, tolerance-based classification (793 minima, one saddle), and log condition numbers for positive-curvature cases. Bottom row: gradient norms, projections of the dominant eigenvector on the azimuth/elevation basis, and angular differences between fitted reference directions and empirical optima. The original export reports a 99.9\% minimum classification with bootstrap interval $[99.6,100.0]\%$, median gradient norm near $1.5\times10^{-3}$, and condition-number mode near 1.8. These describe the inspected sample and numerical criteria. The unit quarter-arc in the projection panel is a basis identity. Angular disagreement, with a mode near $10^{\circ}$, compares two fitting objectives, not an estimate against independent solar ground truth.}
    \label{fig:prove_and_visualize_local_convexity}
\end{figure}

\textbf{Terrain-level.} Figure~\ref{fig:spherical_loss_landscape} shows a smooth basin in the mean sampled loss around the reference illumination direction, consistent with the angular sensitivity curves. Averaging across terrain can hide secondary minima or flat regions in individual scenes. The aggregate surface therefore does not prove global convexity, uniqueness, or convergence from every initial direction.

\begin{figure}[htbp]
    \centering
    \includegraphics[width=\linewidth,height=0.80\textheight,keepaspectratio]{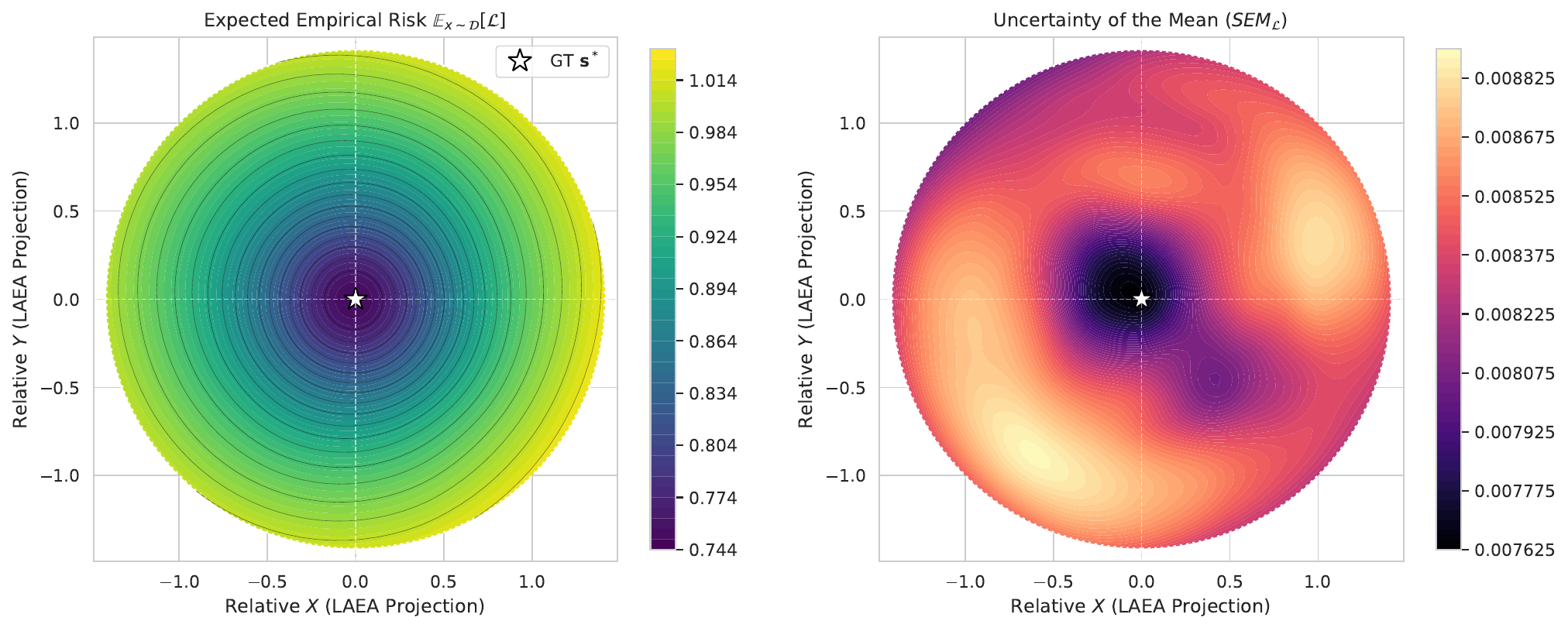}
\caption{\textbf{Mean illumination-loss landscape and uncertainty of the mean.} The left panel uses a Lambert azimuthal equal-area projection centered on the fitted reference direction. Its displayed loss scale spans approximately 0.744--1.014. The right panel's SEM scale spans approximately 0.007625--0.008825. The smooth average basin characterizes the sampled terrains and directions; SEM measures uncertainty of an average, not scene-to-scene ambiguity. Neither panel guarantees a unique per-scene optimum or global optimizer convergence.}
    \label{fig:spherical_loss_landscape}
\end{figure}

\textbf{Dataset-level.} Figure~\ref{fig:umap_global_disentanglement} is labeled 100 terrains and 50,000 samples. Its angular-error probe reports $R^2=0.917$ and distance correlation 0.496; nuisance probes report $R_I^2=-0.034$, $R_A^2=-0.115$, and distance correlation 0.059. Negative held-out $R^2$ means the tested regressor underperformed a mean predictor on that split, not that no dependence can exist. The feature definition, whether 50,000 is a total or per-terrain count, and the terrain grouping of probe splits need to accompany the export. The plot is retained as exploratory evidence; exact invariance follows from the algebra below under its stated assumptions.

\begin{figure}[htbp]
    \centering
    \includegraphics[width=\linewidth,height=0.80\textheight,keepaspectratio]{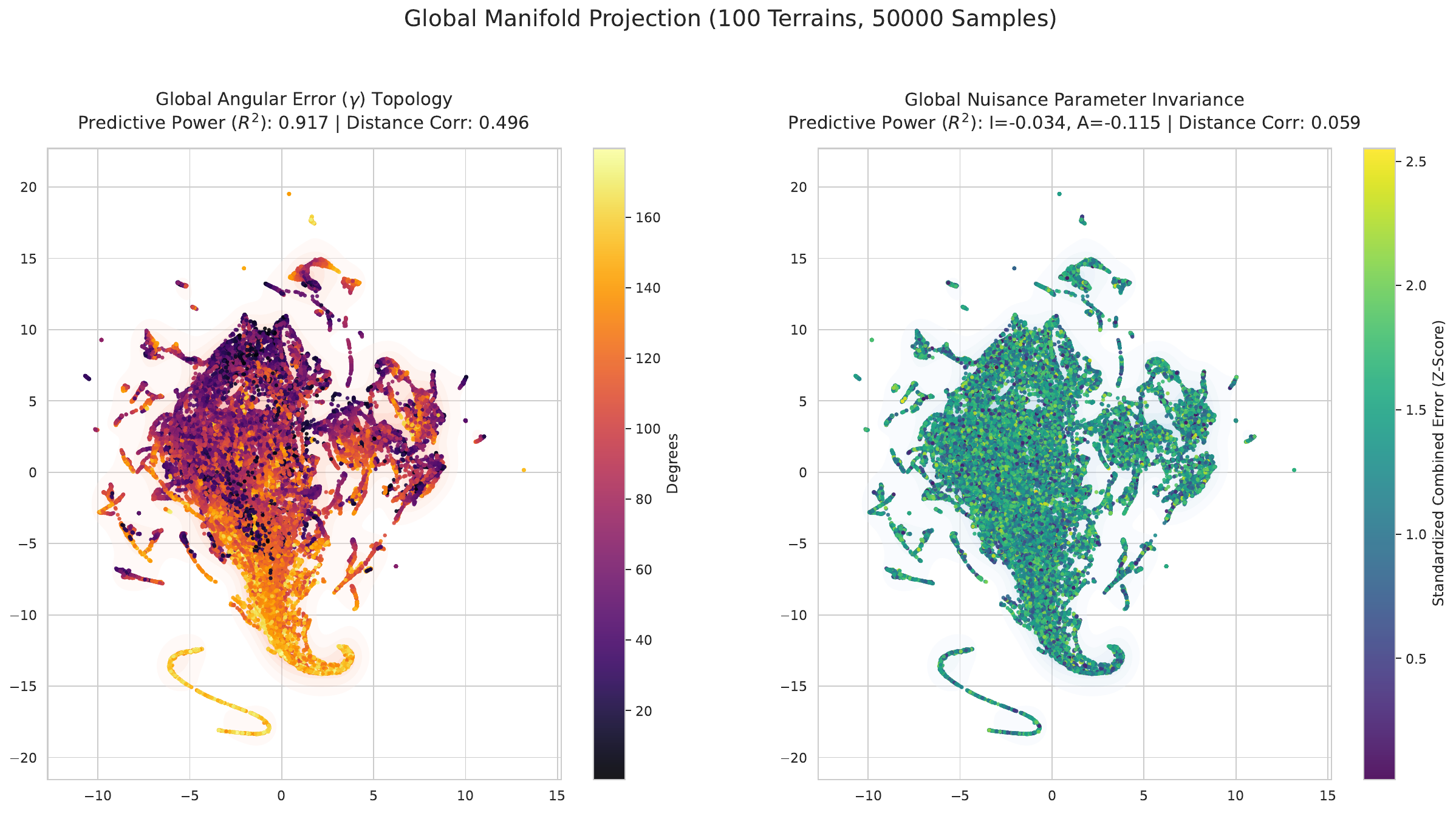}
    \caption{\textbf{Exploratory embedding and parameter probes.} The archived embedding is colored by angular error (left) and standardized gain/offset variation (right). The displayed angular probe has $R^2=0.917$; nuisance probes have $R^2=-0.034$ and $-0.115$. These scores summarize the specified probes, not a general proof of statistical independence. Reproduction requires the input feature construction and terrain-disjoint probe protocol; an embedding of parameters and an embedding of residual features are different experiments.}
    \label{fig:umap_global_disentanglement}
\end{figure}

These diagnostics support studying angular sensitivity separately from positive global exposure changes. They do not establish that illumination is the only source of variation in the reconstruction loss: terrain geometry, reflectance, registration, masking, and numerical stabilization remain relevant. The useful mathematical guarantee is exposure invariance under explicit assumptions, complemented by empirical local-curvature measurements.

\FloatBarrier
\subsection{Invariance of the Photoclinometric Loss to \texorpdfstring{$(I,A)$}{(I,A)}} \label{app:invariance_proof}

\paragraph{Forward model} Let $\mathbf{N}(p)\in\mathbb{S}^2$ denote the unit surface normal at pixel $p$, computed from the digital terrain model via finite differences. For fixed geometry and reflectance, write the render as $I g(p;\mathbf s)+A$. A Lambertian example is
\begin{equation} \label{eq:render}
I_r(p\,;\,\mathbf{s}, I, A) \;=\; I\,\bigl[\mathbf{N}(p)\!\cdot\!\mathbf{s}\bigr]_+ \;+\;A, \qquad \mathbf{s}\in\mathbb{S}^2,\; I>0,\; A\in\mathbb{R}, \end{equation}
where $[\,\cdot\,]_+ \!=\! \max(\cdot, 0)$ enforces nonnegative local incidence in this illustrative model; it does not model cast shadows. The same proof applies to a fixed Lunar--Lambert spatial response without output clipping. Crucially, $(I, A)$ enter Eq.~\eqref{eq:render} only as a positive affine map of the spatial shading pattern $g(p; \mathbf{s}) \!\triangleq\! [\mathbf{N}(p)\!\cdot\!\mathbf{s}]_+$.  \paragraph{Loss.} Given an observed orthorectified image $I_o$ and a binary validity mask $M$, the photoclinometric loss is
\begin{equation} \label{eq:loss}
\mathcal{L}(\mathbf{s}, I, A; \mathcal{D}) \;=\; (1-\alpha)\bigl[1 - r\!\left(\tilde z_r,\,\tilde z_o\right)\bigr] \;+\; \alpha\bigl[1 - \mathrm{SSIM}\!\left(\tilde z_r,\,\tilde z_o\right)\bigr],
\end{equation} where $\tilde z_x = z_M(x)$ denotes the masked $z$-score $z_M(x) = (x - \mu_M(x)) / \sigma_M(x)$, and $r$ is the masked Pearson correlation.

\paragraph{Invariance to positive affine intensity transformations} For any $a>0$ and $b\in\mathbb{R}$ and any field $x$ with $\sigma_M(x)>0$, \begin{equation} \label{eq:zscore_invariance} z_M(a x + b) \;=\; \frac{(a x + b) - (a \mu_M(x) + b)}{a\,\sigma_M(x)} \;=\; z_M(x). \end{equation}

\paragraph{Theorem (Invariance to $(I,A)$).} Fix geometry, reflectance, and a common validity mask, and assume nonzero variances of the rendered and observed fields. For gains $I,I'>0$ and offsets $A,A'\in\mathbb R$,
\begin{equation}
\label{eq:invariance_thm}
\mathcal{L}(\mathbf{s}, I', A'; \mathcal{D})
\;=\;
\mathcal{L}(\mathbf{s}, I, A; \mathcal{D}).
\end{equation}

\noindent\textit{Proof} From Eq.~\eqref{eq:render}, the rendered field under the alternative photometry satisfies
\begin{equation}
I_r(p; \mathbf{s}, I', A')
\;=\;
\frac{I'}{I}\,\bigl[I_r(p; \mathbf{s}, I, A) - A\bigr] + A'
\;=\;
a\,I_r(p; \mathbf{s}, I, A) + b,
\end{equation}
with $a = I'/I > 0$ and $b = A' - aA$. Applying Eq.~\eqref{eq:zscore_invariance} pixel-wise on the support of $M$ yields $z_M(I_r(\cdot; \mathbf{s}, I', A')) = z_M(I_r(\cdot; \mathbf{s}, I, A))$, and the observed $z$-score $z_M(I_o)$ is independent of $(\mathbf{s}, I, A)$. Since both terms of Eq.~\eqref{eq:loss} are functions solely of $(z_M(I_r), z_M(I_o))$, the claim follows. $\square$

\paragraph{Remark (local SSIM and practical scope).} Global z-scoring does not force every local window to have zero mean and unit variance. Local SSIM therefore retains its luminance, contrast, and structure factors:
\begin{equation}
\mathrm{SSIM}_{W}(z_1,z_2)=
\frac{2\mu_{1,W}\mu_{2,W}+C_1}{\mu_{1,W}^2+\mu_{2,W}^2+C_1}
\frac{2\sigma_{12,W}+C_2}{\sigma_{1,W}^2+\sigma_{2,W}^2+C_2}.
\end{equation}
The proof holds because the \emph{entire normalized input fields} are unchanged, so every local window receives identical values after a positive affine transformation. Average pooling also commutes with such a transformation on fixed support, extending the argument to each pyramid scale. Negative gains, gain-dependent masks, post-gain clipping, and degenerate variance are excluded. A fixed additive variance floor generally makes scale invariance approximate rather than exact. Changes in geometry, $L$, or spatially varying albedo are outside the theorem.

\FloatBarrier
\subsection{Riemannian Geometry of $\mathcal{L}$ on $\mathbb{S}^2$} \label{app:riemannian_geometry}

The unit-norm constraint $\|\mathbf{s}\|=1$ confines optimisation to the sphere $\mathbb{S}^2 \subset \mathbb{R}^3$. The tangent space at $\mathbf{s}^*$ is the codimension-one subspace
\begin{equation}
T_{\mathbf{s}^*}\mathbb{S}^2
\;=\;
\{\,v \in \mathbb{R}^3 : v^\top \mathbf{s}^* = 0\,\}.
\end{equation}
Let $U \in \mathbb{R}^{3\times 2}$ be an orthonormal basis of $T_{\mathbf{s}^*}\mathbb{S}^2$, so that $U^\top U = I_2$ and $U U^\top = I_3 - \mathbf{s}^* \mathbf{s}^{*\top}$.

\paragraph{Riemannian gradient} The Euclidean gradient $\nabla_E \mathcal{L} \in \mathbb{R}^3$ projects orthogonally onto the tangent space:
\begin{equation}
\nabla_R \mathcal{L}(\mathbf{s}^*)
\;=\;
(I_3 - \mathbf{s}^* \mathbf{s}^{*\top})\,\nabla_E \mathcal{L},
\qquad
\widetilde{\nabla}_R \mathcal{L}
\;\triangleq\;
U^\top \nabla_E \mathcal{L}\;\in\;\mathbb{R}^2.
\end{equation}

\paragraph{Riemannian Hessian} For $\mathbb{S}^2$ embedded in $\mathbb{R}^3$ with the round metric, the Riemannian Hessian in coordinates $U$ is~\cite{AbsMahSep2008}
\begin{equation}
\label{eq:riemannian_hessian}
H_R(\mathbf{s}^*)
\;=\;
\underbrace{U^\top H_E(\mathbf{s}^*)\, U}_{\text{embedded Hessian}}
\;-\;
\underbrace{\bigl\langle \nabla_E \mathcal{L}(\mathbf{s}^*),\,
\mathbf{s}^* \bigr\rangle\, I_2}_{\text{Weingarten correction}}
\;\in\;\mathbb{R}^{2\times 2},
\end{equation}
where the second term arises from the second fundamental form of $\mathbb{S}^2$ in $\mathbb{R}^3$ and is required to obtain the intrinsic curvature.

\paragraph{Stationarity and topological classification} A point $\mathbf{s}^*\!\in\!\mathbb{S}^2$ is a strict local minimum of a twice continuously differentiable $\mathcal{L}$ if
\begin{equation}
\widetilde{\nabla}_R \mathcal{L}(\mathbf{s}^*) = 0
\quad\text{and}\quad
\lambda_{\min}(H_R(\mathbf{s}^*)) > 0.
\end{equation}
These are sufficient second-order conditions; they are not necessary for a strict minimum, which can have a singular Hessian. The numerical diagnostic declares a candidate approximately stationary when $\|\widetilde\nabla_R\mathcal L\|_2/\max_i|\lambda_i(H_R)|<10^{-2}$, provided the denominator is nonzero, and uses spectral tolerance $10^{-4}|\lambda_{\max}|$. A denominator guard and treatment of nonsmooth model operations must be specified. Under these criteria the archived classifier reports 793 of 794 candidates as minima. This finite-sample proportion and its bootstrap interval characterize the inspected fits; they do not constitute a population guarantee for illumination inversion, still less for joint terrain reconstruction.

\paragraph{Justification}
The shading response depends on the local derivatives of the predicted relief. Comparing it with an image can therefore supply orientation-related information beyond an interpolated stereo target, an idea established in planetary photoclinometry~\cite{Kirk2003High-resolutionImages,Beyer2018TheData,Alexandrov2018MultiviewImages}. The benefit depends on actual grid spacing, metric normals, reflectance assumptions, and visibility. Our formulation makes this constraint differentiable; identifying it as the most important loss, or claiming a particular resolution gain, requires ablation and independent validation. Figure~\ref{fig:photo_physics} retains the intermediate fields used to audit the render and comparison.
\begin{figure}[htbp]
\centering
\includegraphics[width=\linewidth,height=0.80\textheight,keepaspectratio]{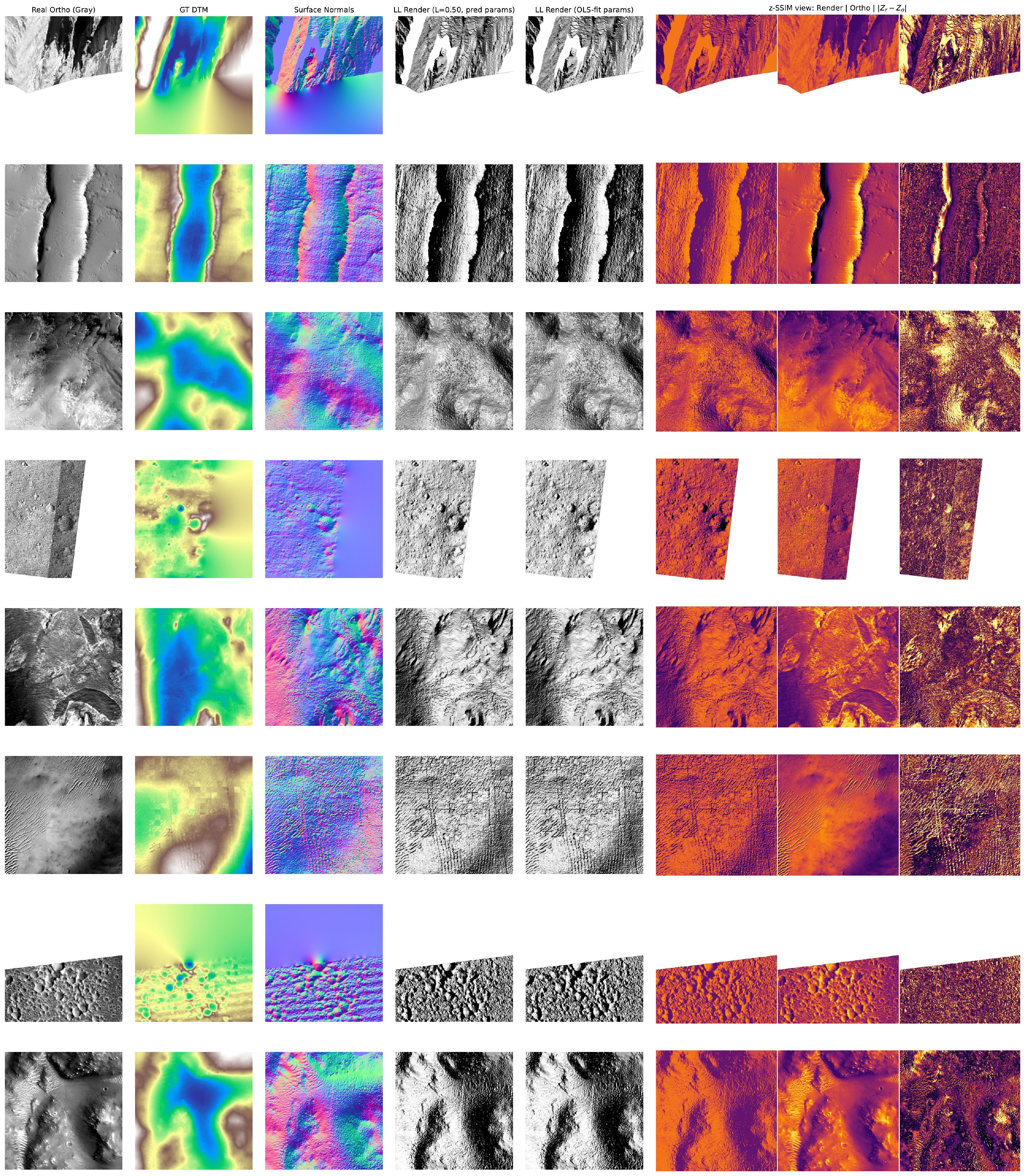}
\caption{\textbf{Intermediate shading and comparison fields.} The complete diagnostic grid retains the input orthoimage, reference DTM, normal field, two render computations, and the z-normalized render/image/residual views. Reference normals are used for this inspection; prediction normals supply the training objective. The absolute z-residual is a diagnostic image, not the local SSIM map itself. Bright residuals indicate spatial disagreement that may arise from terrain, reflectance, shadows, registration, or illumination; global normalization removes only positive affine exposure changes under the stated assumptions.}
\label{fig:photo_physics}
\end{figure}

\FloatBarrier
\subsection{Summary and Loss Interaction}
\label{sec:loss_summary}

Table~\ref{tab:loss_summary} summarises the role of each term, including whether it targets GT-bounded or full-orthoimage-resolution information, and its sensitivity to stereo GT noise.
\begin{table}[t]
\centering
\caption{Role of each loss term. ``Scale'' denotes the spatial frequency each term primarily supervises.}
\label{tab:loss_summary}
\small
\begin{tabular}{lllll}
\toprule
Loss & Supervises & Scale & Noise-robust? & Section \\
\midrule
$\mathcal{L}_{\text{FM}}$    & latent velocity        & all latent    & partial  & \ref{sec:fm_loss} \\
$\mathcal{L}_{\text{huber}}$ & absolute depth         & values & reduced tail influence  & \ref{sec:huber_loss} \\
$\mathcal{L}_{\text{norm}}$  & local slope            & mid--high     & partial  & \ref{sec:normal_loss} \\
$\mathcal{L}_{\text{grad}}$  & slope multi-scale      & low--high     & offset-invariant  & \ref{sec:grad_loss} \\
$\mathcal{L}_{\text{lap}}$   & curvature              & high          & partial  & \ref{sec:laplacian_loss} \\
$\mathcal{L}_{\text{FFL}}$   & frequency bands        & adaptive bands & no  & \ref{sec:ffl_loss} \\
$\mathcal{L}_{\text{ord}}$   & relative ordering      & all           & conditional  & \ref{sec:ordinal_loss} \\
$\mathcal{L}_{\text{photo}}$ & \textbf{ortho shading} & image grid & model-dependent  & \ref{sec:photo_loss} \\
\bottomrule
\end{tabular}
\end{table}
The objective separates complementary constraints. Flow matching and Huber regression anchor latent and decoded values; normals and multi-scale gradients compare first-order structure; the Laplacian weights second derivatives; ordinal supervision penalizes selected order reversals; and the optional FFL emphasizes difficult spectral coefficients. Photoclinometry adds image-pattern agreement under an approximate reflectance model. These roles are design motivations, not independently verified benefits. In particular, shift invariance is not general noise robustness, and image-based comparison still depends on reference-derived illumination in the reported preparation pipeline. The retained diagnostics make these interactions and limitations inspectable.

The full pipeline of loss culminates in the Figure \ref{fig:flow_and_losses}.
\begin{figure}[htbp]
    \centering
    \includegraphics[width=1\textwidth,height=0.80\textheight,keepaspectratio]{Figures/DTM/flow_and_losses.pdf}
    \caption{\textbf{Archived flow-matching and loss architecture.} The complete diagram preserves the latent encoding, interpolant, velocity prediction, endpoint decoding, eight objective branches, and weighted optimization. VAE parameters are frozen while the decoder remains differentiable with respect to its input. FFL is optional and has zero reported weight. The revised Equation~\eqref{eq:cfm} includes source augmentation and logit-normal time sampling. Geometric losses compare VAE-decoded fields. The archived run shades normalized decoded relief; the current code first inverts the signed logarithm. The recovered run configuration governs the reported results, while the diagram retains historical schematic annotations.}
\label{fig:flow_and_losses}
\end{figure}

\newpage
%
%

\FloatBarrier
\section{Visual Results and Diagnostic Figures}
\label{sec:visual_results}

This appendix examines the spatial behavior behind the aggregate metrics: terrain correspondence, amplitude compression, smoothing, boundaries, derivative structure, shading, spectra, flow evolution, and sampling variability. The primary evaluation in Table~\ref{tab:metrics_main} contains 2024 gathered records per step count. The additional test export and individual diagnostic panels below provide complementary evidence under their stated protocols.

\textbf{Evaluation provenance and units.} Run \texttt{88c8} records source revision \texttt{6a4ea05}. The primary step sweep gathers 2024 records per count and compares VAE-decoded normalized fields. Table~\ref{tab:metrics_legacy} preserves the separate 1012-record rank-zero export at checkpoint 9476, using 16 Euler steps, one realization, and the normalized input reference. We computed its photo-consistency score separately; the primary sweep does not compute that metric. Eleven individual panels match the run's step-9827 validation PDFs, including the $N=8$ sampling-variance example. Relief in these panels is normalized, so their historical meter labels do not denote physical height. The figures remain unchanged to preserve the underlying evidence.

\textbf{Configuration and implementation version.} The launch command is \texttt{bash scripts/training/launch\_train.sh --config configs/train\_hirise\_large.yaml --n\_runs 1}. The script forwards the chosen configuration and respects externally selected visible GPUs; the recorded run used two devices. The current configuration specifies $S_{\mathrm{ref}}=45.9075$\,m and disables clipping, whereas the recorded experiment uses a percentile scale and clipping. The current evaluator also inverts the height transform. These later changes are not retroactively applied to the reported scores. The cached illumination parameters use the newer decoupled routine described in Appendix~\ref{sec:sun_irls}; we state its update rule there. A separate scale-selection study reports approximately 76.8\,m under a mean clipping-error budget of 0.1\,m, with a 76.4\,m tabulated export. Both are retained as preprocessing diagnostics; this discrepancy is not resolved by choosing the current configuration's scale.

\begin{table}[t]
\centering
\caption{Recovered MarsFM test export from run \texttt{88c8}, checkpoint 9476: mean $\pm$ sample standard deviation over 1012 rank-zero patch records, with 16 Euler steps and one realization. Relief errors follow affine alignment in normalized signed-log space. The default positive-reference mask restricts scoring support. Angular values are grid-coordinate diagnostics, not physical-slope errors. Numerical rounding follows the recovered CSV.}
\label{tab:metrics_legacy}
\small
\setlength{\tabcolsep}{2pt} 
\begin{tabular}{@{}lcc@{}} 
\toprule
Metric & Direction & Value \\
\midrule
\multicolumn{3}{@{}l@{}}{\textit{Differential geometry diagnostics}} \\
Normal angular error ($\degree$)   & $\downarrow$ & $0.380 \pm 0.206$ \\
Slope RMSE ($\degree$)             & $\downarrow$ & $0.489 \pm 0.303$ \\
Curvature RMSE ($\times 10^{-3}$)  & $\downarrow$ & $2.93 \pm 1.79$ \\
Depth-Boundary F-score             & $\uparrow$   & $0.075 \pm 0.147$ \\
\midrule
\multicolumn{3}{@{}l@{}}{\textit{Structural, image, and spectral diagnostics}} \\
MS-SSIM-topo                       & $\uparrow$   & $0.580 \pm 0.109$ \\
Photo-consistency (SSIM)           & $\uparrow$   & $0.220 \pm 0.130$ \\
PSD slope ratio (1.0 = ideal)      & $\to 1$      & $1.074 \pm 0.132$ \\
Patch SWD                          & $\downarrow$ & $0.0295 \pm 0.0660$ \\
\midrule
\multicolumn{3}{@{}l@{}}{\textit{Normalized relief and depth scores}} \\
RMSE                                & $\downarrow$ & $0.098 \pm 0.085$ \\
RMSE-log                            & $\downarrow$ & $1.131 \pm 0.202$ \\
$\delta_1$ (\%)                     & $\uparrow$   & $23.9 \pm 8.4$ \\
$\delta_2$ (\%)                     & $\uparrow$   & $45.0 \pm 12.2$ \\
$\delta_3$ (\%)                     & $\uparrow$   & $60.7 \pm 11.9$ \\
\bottomrule
\end{tabular}
\end{table}

\FloatBarrier
\subsection{Per-Patch Qualitative Predictions}
\label{sec:vis_qualitative}

\paragraph{Triptychs.} Figure~\ref{fig:triptych_gallery} contains one input--prediction--reference triptych, labeled step 9827. It shows a cratered patch, with a shared relief color scale permitting comparison of broad structure and local variation. The prediction captures some spatial organization but has a different amplitude distribution and smoother detail. A stratified gallery covering dunes, cratered plains, channelized scarps, and blocky ejecta would be useful for a future terrain-class evaluation; the present panel alone does not establish performance across those regimes. Exporter: \texttt{plot\_prediction\_triptych}.

The useful evidence in this example is the coexistence of broad morphological correspondence and visible reconstruction error. Image texture alone cannot identify the correct relief, and extra texture in a prediction could reflect either geometry or appearance leakage. The triptych should therefore be read alongside the cross-sections, derivative maps, and pixel scatter, which expose amplitude and spatial disagreement that a color rendering can hide. A matched no-photoclinometry run is needed to attribute an improvement to that objective.

\begin{figure}[htbp]
\centering
\includegraphics[width=\linewidth,height=0.80\textheight,keepaspectratio]{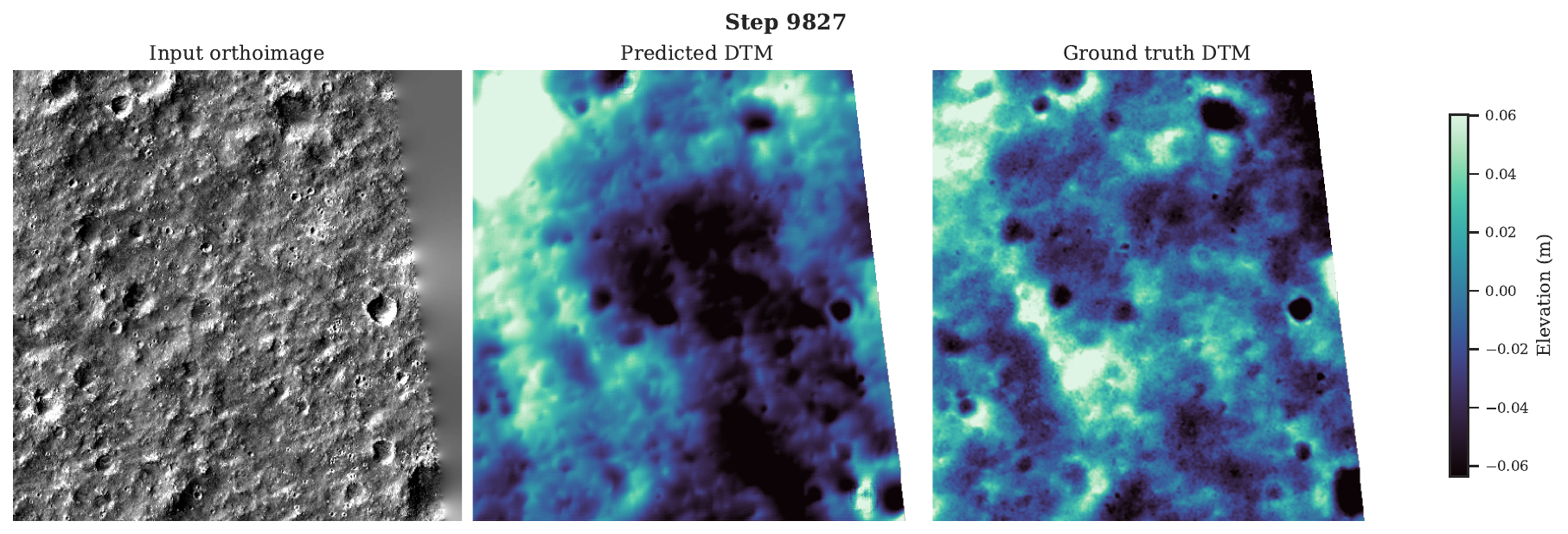}
\caption{\textbf{Prediction triptych, validation step 9827.} Left: HiRISE RED input; center: predicted relief; right: stereo-derived reference. The common color scale reveals broad terrain correspondence, amplitude differences, and smoothing. The historical meter annotation represents normalized relief in this export, not physical height. Exporter: \texttt{plot\_prediction\_triptych}.}
\label{fig:triptych_gallery}
\end{figure}

\paragraph{Cross-sectional profiles.} Figure~\ref{fig:cross_sections} compares horizontal, vertical, NW--SE, and NE--SW slices through the same archived example. Prediction and reference agree in parts of the broad envelope, but their separation extends across substantial segments and sharp transitions. The shaded residual therefore exposes amplitude and offset discrepancies, rather than verified sub-meter edge recovery. Four slices from one patch do not test directional independence or rule out illumination bias. Exporter: \texttt{plot\_cross\_sections}.

\begin{figure}[htbp]
\centering
\includegraphics[width=\linewidth,height=0.80\textheight,keepaspectratio]{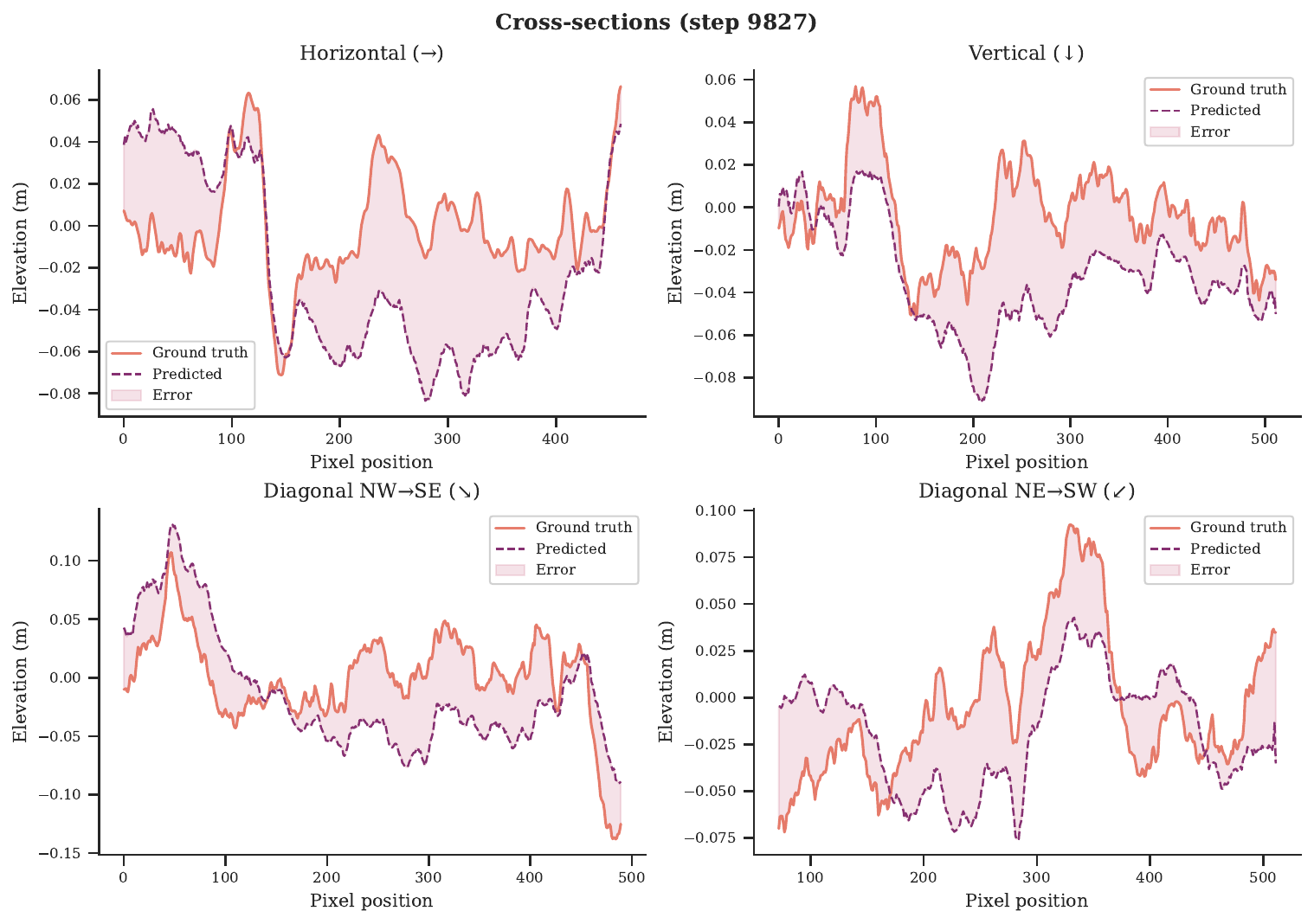}
\caption{\textbf{Directional cross-sections, step 9827.} Solid reference, dashed prediction, and shaded difference along four central slices. Broad residual regions remain in addition to errors at sharp transitions. The retained elevation-unit labels require export verification. Exporter: \texttt{plot\_cross\_sections}.}
\label{fig:cross_sections}
\end{figure}

\FloatBarrier
\subsection{Geometric Fidelity Diagnostics}
\label{sec:vis_geometry}

The supporting differential diagnostics in Table~\ref{tab:metrics_legacy} report normal angular error $0.38^{\circ}$, slope RMSE $0.49^{\circ}$, and represented-field Laplacian RMSE $2.93\times10^{-3}$. The example panels below show their spatial counterparts, but use the checkpoint identified in each title. Because derivatives depend on height representation and grid spacing, these diagnostics require the metric-coordinate conversion in Section~\ref{sec:normal_loss} before interpretation as physical slope accuracy.

\paragraph{Surface normal maps.} Figure~\ref{fig:normal_maps} maps $(n_x,n_y,n_z)$ to RGB. The prediction is visibly smoother and less varied than the reference over much of the patch, consistent with attenuated small-scale relief. Differences extend beyond obvious shadow boundaries. The spatial factor used by \texttt{compute\_surface\_normals} must be combined with height inversion and metric spacing before these colors represent physical surface orientations. Exporter: \texttt{plot\_normal\_maps}.

\begin{figure}[htbp]
\centering
\includegraphics[width=\linewidth,height=0.80\textheight,keepaspectratio]{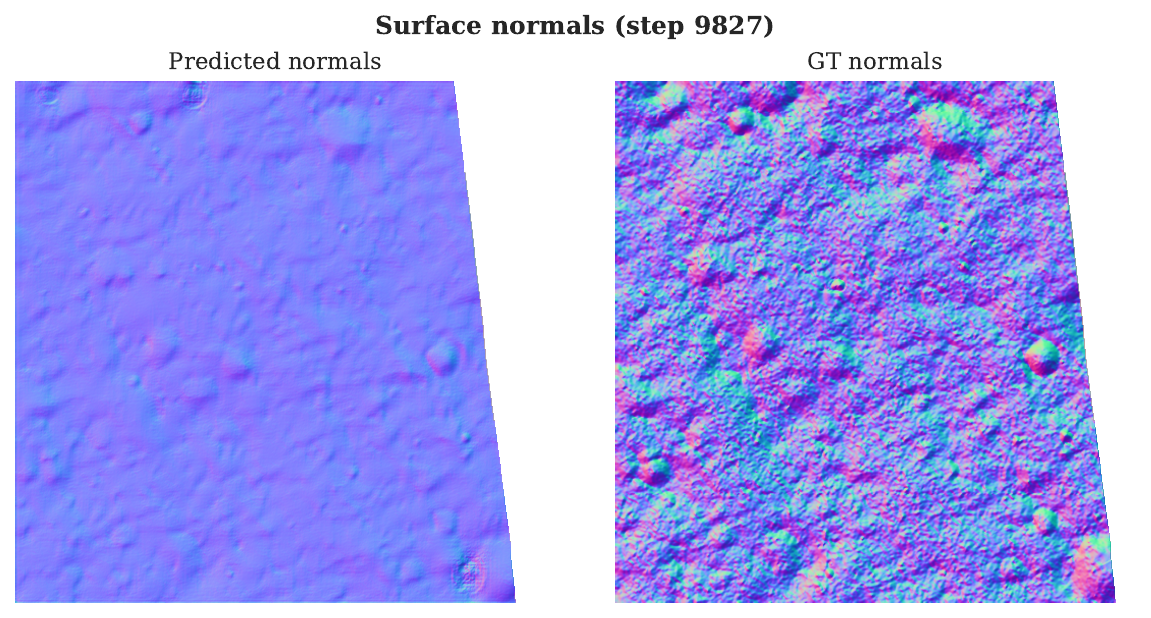}
\caption{\textbf{RGB normal fields, step 9827.} Prediction (left) and reference (right), with channels encoding $(n_x,n_y,n_z)$. The prediction contains less fine-scale variation. Angles refer to the operator's coordinate convention until metric conversion is verified. Exporter: \texttt{plot\_normal\_maps}.}
\label{fig:normal_maps}
\end{figure}

\paragraph{Slope error maps.} Figure~\ref{fig:slope_error} shows the orthoimage, reference slope, and absolute prediction--reference slope difference. Errors occur around crater structures and across textured terrain; the panel does not demonstrate that all errors are confined to unresolved boundaries. The displayed angular values depend on the gradient convention. A percentile statement requires direct extraction from the scored mask and cannot be inferred from the dataset RMSE. Exporter: \texttt{plot\_slope\_error\_map}.

\begin{figure}[htbp]
\centering
\includegraphics[width=\linewidth,height=0.80\textheight,keepaspectratio]{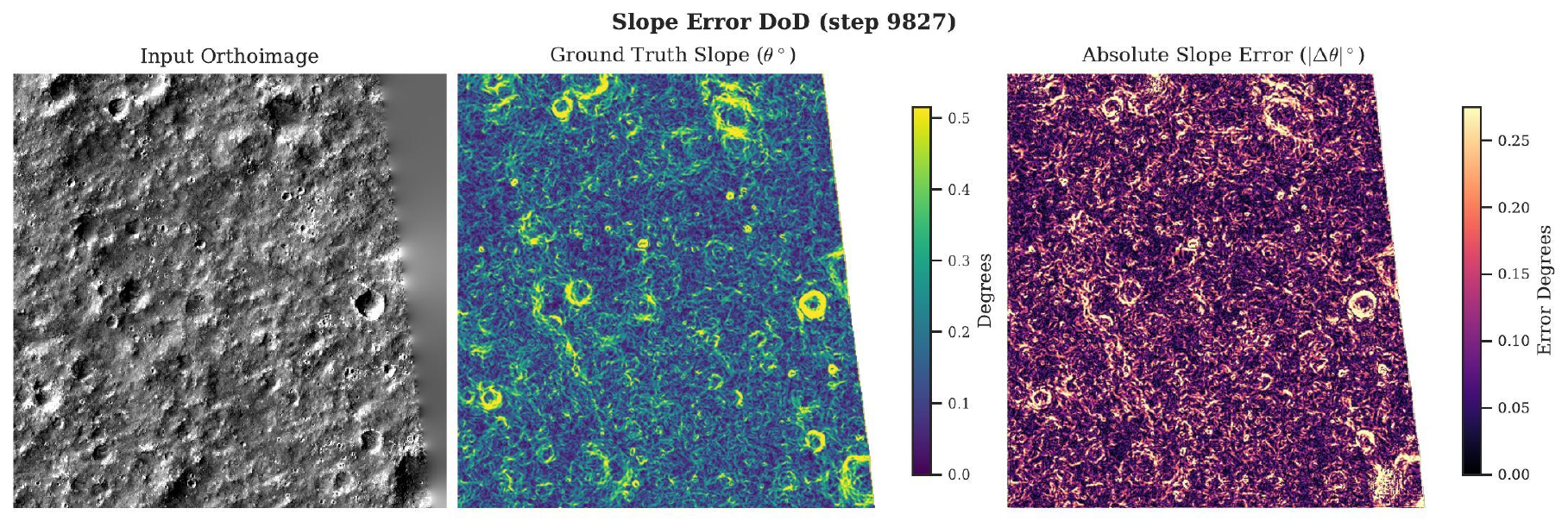}
\caption{\textbf{Slope diagnostic, step 9827.} Input image, reference slope, and absolute slope difference. The image shows spatially distributed disagreement; the retained degree labels describe the exported operator, not independently verified physical slopes. Exporter: \texttt{plot\_slope\_error\_map}.}
\label{fig:slope_error}
\end{figure}

\FloatBarrier
\subsection{Photoclinometric Loss Diagnostics}
\label{sec:vis_photo}

The photoclinometric loss compares a predicted render with image intensity on the processed grid. Its value depends on geometry, fitted illumination, reflectance assumptions, and valid support. Visualizing its inputs is therefore useful even when reference-relief metrics improve: a low-variance render or an incorrect spatial illumination pattern can undermine the intended supervision without being obvious from an aggregate depth score.

Figure~\ref{fig:lunar_lambert} retains four fields: the input orthoimage, the prediction render, the reference-DTM render under the same illumination, and an absolute z-score residual. The reference render is a comparator, not an upper bound on achievable image agreement. The residual $|Z_{\mathrm{pred}}-Z_{\mathrm{ortho}}|$ is also not the local SSIM penalty: SSIM uses window statistics. The figure reports a blend near $L=0.49$. Exporter: \texttt{plot\_lunar\_lambert\_comparison}.

\begin{figure}[htbp]
\centering
\includegraphics[width=\linewidth,height=0.80\textheight,keepaspectratio]{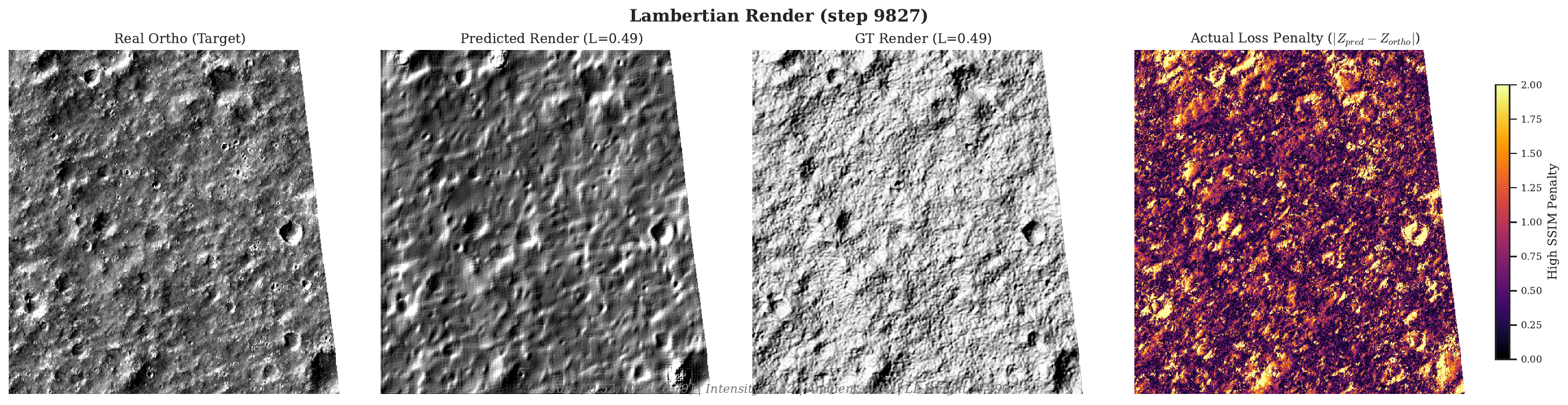}
\caption{\textbf{Archived shading comparison, step 9827.} Input orthoimage, predicted-relief render, reference-relief render, and absolute z-normalized residual. The last panel is labeled as a loss view in the original asset, but it is not itself an SSIM map. Reference rendering provides a comparison under the same surrogate; it is not an accuracy bound. Exporter: \texttt{plot\_lunar\_lambert\_comparison}.}
\label{fig:lunar_lambert}
\end{figure}

The predicted and reference renders show different texture and contrast patterns, and the residual remains spatially structured. These discrepancies are scientifically useful: they identify where the shading model and the image disagree. They cannot be assigned uniquely to incorrect topography, because albedo, shadows, registration, and illumination errors also contribute. A visually closer render can coexist with worse geometric accuracy; independent terrain measurements are needed to distinguish those outcomes.

\FloatBarrier
\subsection{Frequency-Domain and Geomorphometric Analysis}
\label{sec:vis_frequency}

\paragraph{Radial PSD.} Figure~\ref{fig:radial_psd} compares radial power spectra for one archived prediction and reference. The dotted $f^{-2.5}$ curve is an illustrative slope comparator, not a universal law of planetary terrain. The aggregate slope ratio in Table~\ref{tab:metrics_main} summarizes decay-rate similarity; it does not compare spectral phase or feature location. The horizontal axis extends above one despite its cycles/pixel label, indicating that the exporter must be checked for radial-bin or index coordinates before converting it to a physical wavelength.

The prediction lies below the reference over much of the intermediate and high radial range, while exceeding it at the very lowest frequencies and in the extreme tail. This pattern is consistent with smoothing plus redistribution of spectral power. The supporting 16-step export has a slope ratio of 1.074; this is not a bandwidth ratio, and excess tail power alone cannot distinguish recovered detail from noise or artifacts. The full spectrum is retained because it reveals behavior hidden by a single fitted slope. Exporter: \texttt{plot\_radial\_psd\_curves}.

\begin{figure}[htbp]
\centering
\includegraphics[width=\linewidth,height=0.80\textheight,keepaspectratio]{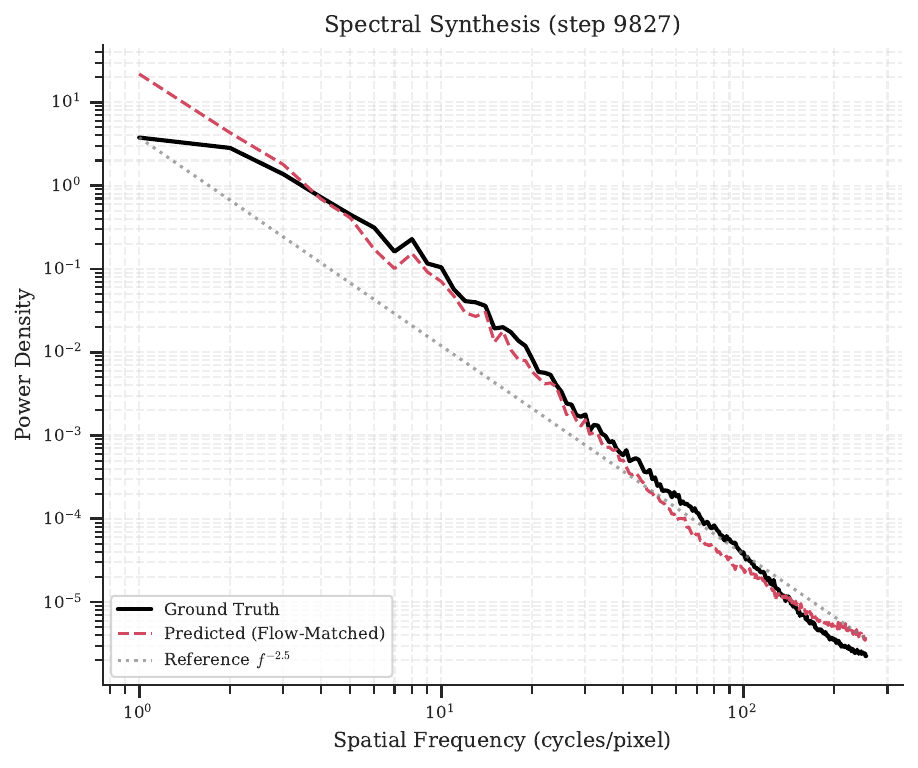}
\caption{\textbf{Radial power spectra, step 9827.} Solid reference, dashed prediction, and dotted $f^{-2.5}$ comparator. The prediction has less power over much of the plotted range and a small excess at the extreme tail. The original frequency-axis units require correction from the FFT metadata before making physical-scale claims. Exporter: \texttt{plot\_radial\_psd\_curves}.}
\label{fig:radial_psd}
\end{figure}

\paragraph{Edges and curvature.} Figure~\ref{fig:geomorph} compares thresholded edge sets, predicted Laplacian, and absolute Laplacian disagreement: blue marks reference-only edges, red prediction-only edges, and purple overlap. Much of the example is blue, exposing missed or weakened reference structure. A red-only edge could be an additional terrain feature or an error; the overlay does not distinguish them. The Laplacian sign depends on the stencil convention, and a normalized-field Laplacian is not physical surface curvature. Exporter: \texttt{plot\_geomorphometric\_analysis}.

\begin{figure}[htbp]
\centering
\includegraphics[width=\linewidth,height=0.80\textheight,keepaspectratio]{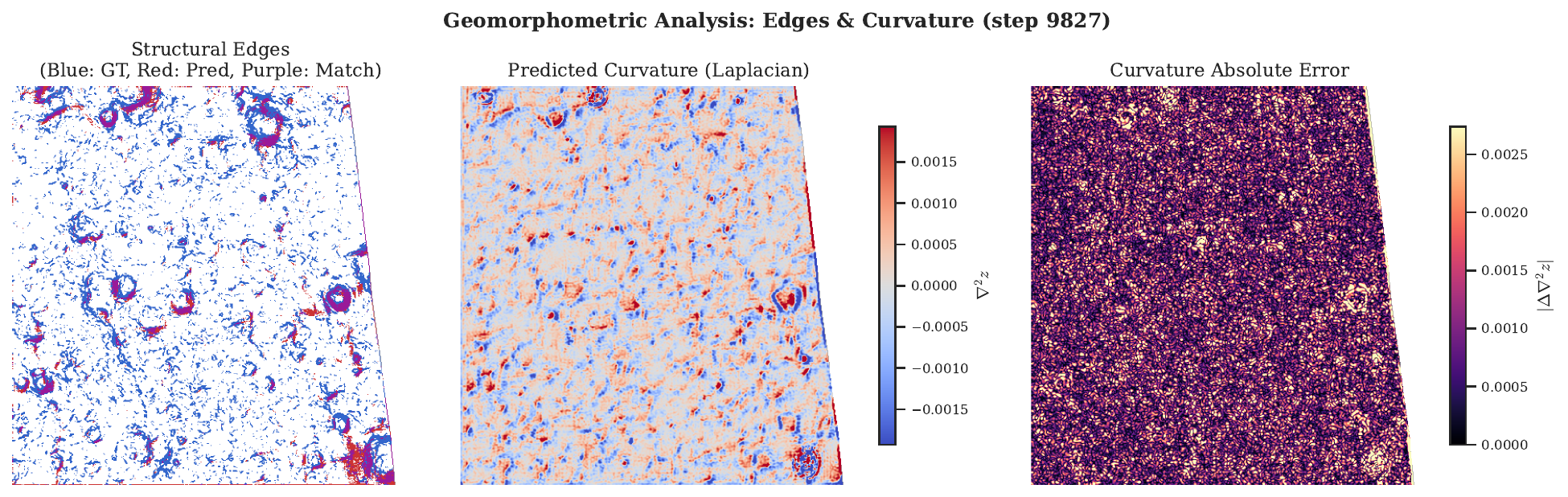}
\caption{\textbf{Edge and Laplacian diagnostics, step 9827.} Blue: reference-only edges; red: prediction-only; purple: overlap. The other panels show the predicted Laplacian and its absolute disagreement with the reference. Unmatched edges require investigation and are not automatically evidence of super-resolution. Exporter: \texttt{plot\_geomorphometric\_analysis}.}
\label{fig:geomorph}
\end{figure}

\FloatBarrier
\subsection{Per-Pixel Error Structure}
\label{sec:vis_error}

\paragraph{Error heatmap with marginal statistics.} Figure~\ref{fig:error_heatmap} retains the absolute pixel residual and its row/column means. The displayed example reports a mean of 0.021, a median of 0.017, a standard deviation of 0.017, and a 95th percentile of 0.053 in the exported scale. The matched historical exporter operates on normalized relief, so its meter labels are not physical units. Marginals expose broad spatial trends but do not identify their cause: terrain error, registration, mask boundaries, and reference artifacts can all contribute. Exporter: \texttt{plot\_error\_heatmap}.

\begin{figure}[htbp]
\centering
\includegraphics[width=\linewidth,height=0.80\textheight,keepaspectratio]{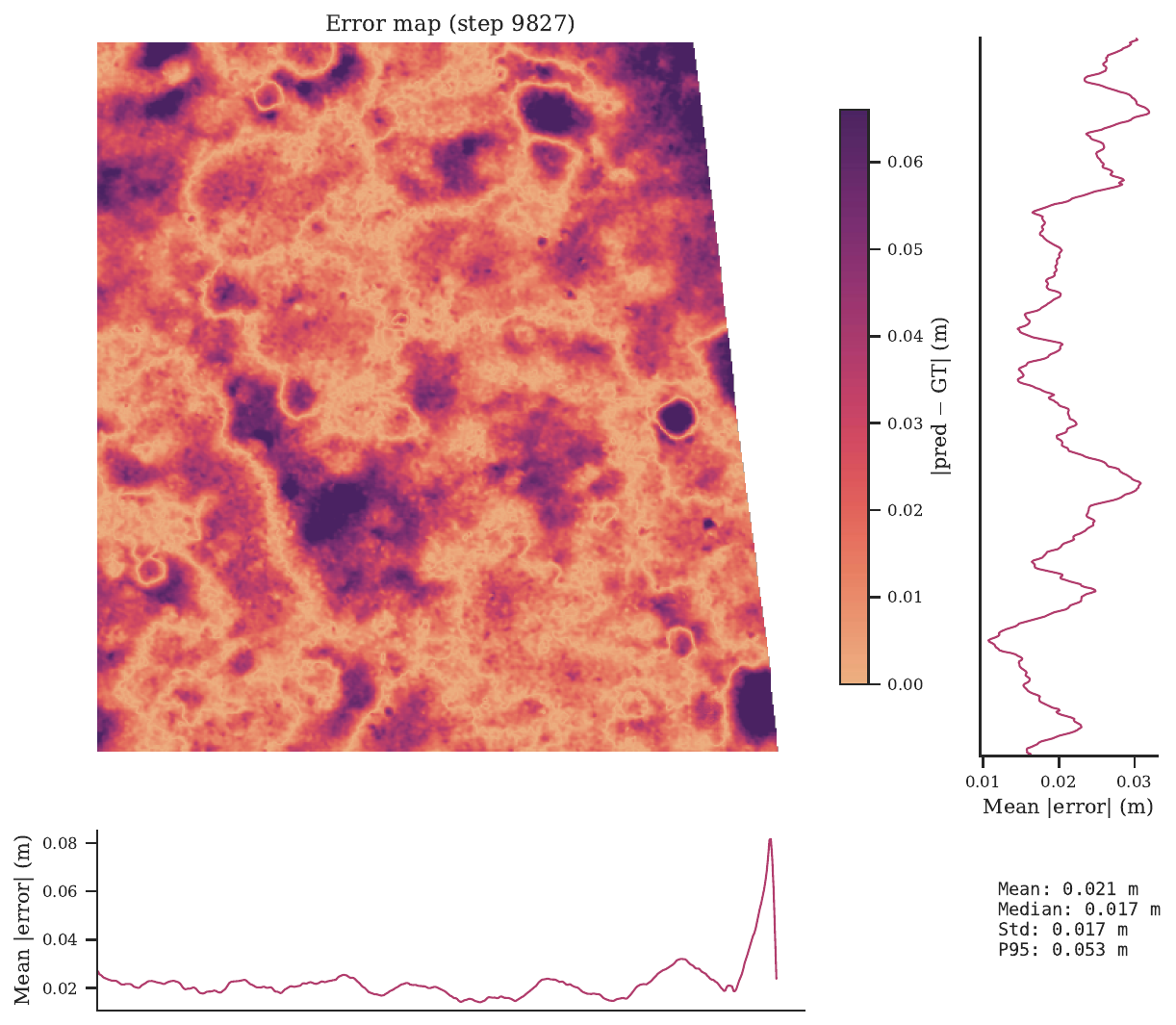}
\caption{\textbf{Absolute residual and marginal means, step 9827.} Main panel: pixelwise prediction--reference error. Right and bottom: row and column means. The summary values are retained, but the meter labels conflict with the recorded normalized-relief export. Exporter: \texttt{plot\_error\_heatmap}.}
\label{fig:error_heatmap}
\end{figure}

\paragraph{Pixel-level scatter.} Figure~\ref{fig:elevation_scatter} reports $R^2=0.2832$ for its displayed sample. The predicted range is compressed relative to the reference, and the cloud departs substantially from the identity line. This directly shows an amplitude mismatch in the example, not the previously stated $R^2\approx0.96$--$0.99$. Scatter around identity measures prediction error, not calibrated uncertainty. The stated plotting workflow subsamples approximately $5\times10^3$ valid pixels; exact sampling and alignment should be recovered from the export configuration. Exporter: \texttt{plot\_elevation\_scatter}.

\begin{figure}[htbp]
\centering
\includegraphics[width=0.7\linewidth,height=0.80\textheight,keepaspectratio]{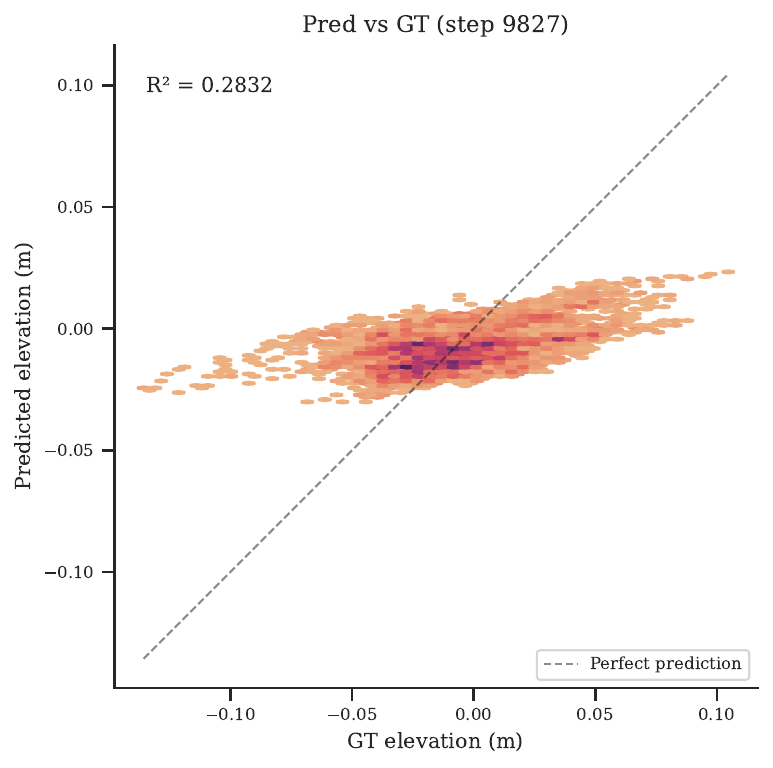}
\caption{\textbf{Prediction--reference hexbin scatter, step 9827.} The figure reports $R^2=0.2832$ and shows a compressed predicted range relative to the dashed identity line. Verify axis units and alignment upon export. Exporter: \texttt{plot\_elevation\_scatter}.}
\label{fig:elevation_scatter}
\end{figure}

\FloatBarrier
\subsection{Distributional Analysis Across the Test Set}
\label{sec:vis_distribution}

\paragraph{Per-metric violins.} Figure~\ref{fig:violins} contains four distributions: RMSE, absolute relative error, $\delta_1$, and normal angular error. It does not contain a boundary-F-score panel. The broad RMSE distribution supports reporting variation across patches rather than relying on the mean alone. Absolute-relative and ratio-based scores require a documented valid positive domain; near-zero signed relief can make them unstable or uninterpretable. We retain these legacy distributions for audit, without assigning the F-score's low mean to an unobserved bimodal mechanism. Exporter: \texttt{plot\_metric\_distributions}.

\begin{figure}[htbp]
\centering
\includegraphics[width=\linewidth,height=0.80\textheight,keepaspectratio]{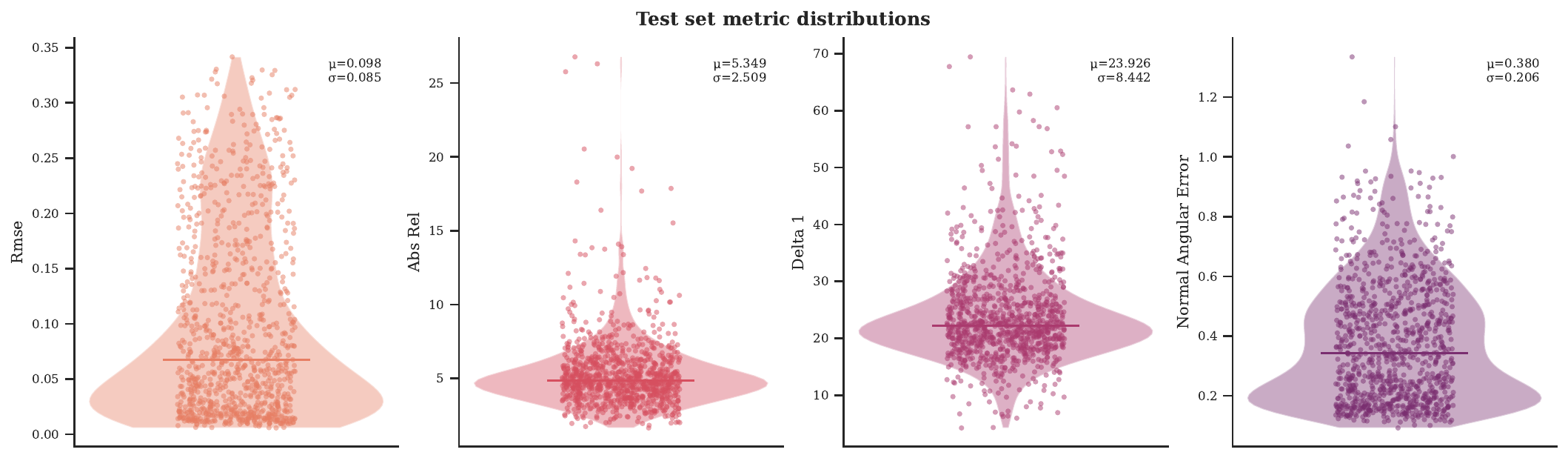}
\caption{\textbf{Four archived metric distributions.} Panels: RMSE, AbsRel, $\delta_1$, and normal angular error, with per-patch points and reported summary values. Ratio metrics require clarification for signed relief; the panel is not evidence of boundary-F-score bimodality. Exporter: \texttt{plot\_metric\_distributions}.}
\label{fig:violins}
\end{figure}

\FloatBarrier
\subsection{Flow-Matching Dynamics}
\label{sec:vis_flow}

\paragraph{Flow evolution.} Figure~\ref{fig:flow_evolution} decodes the state at $t\in\{0,0.25,0.5,0.75,1\}$ and places the reference alongside it. Early panels retain substantial image-like texture; later panels show a smoother relief representation. These decoded intermediate fields are not all calibrated terrain estimates. Because the decoder is nonlinear, the montage cannot establish straightness of the latent path. Measuring trajectory curvature would require the latent states themselves. Exporter: \texttt{plot\_flow\_evolution}.

\begin{figure}[htbp]
\centering
\includegraphics[width=\linewidth,height=0.80\textheight,keepaspectratio]{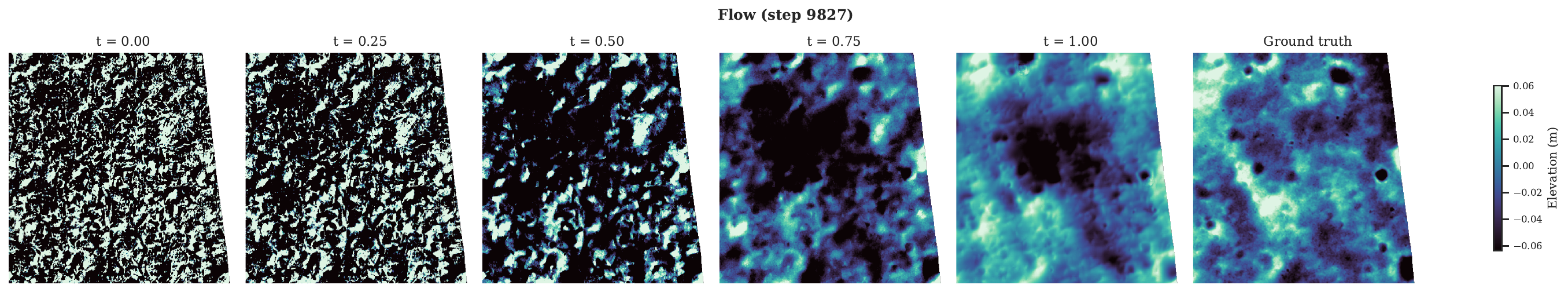}
\caption{\textbf{Decoded flow evolution, step 9827.} Five states at $t=0,0.25,0.5,0.75,1$ followed by the reference DTM. The changing appearance illustrates the image-to-relief transition, not a direct measurement of latent linearity. Exporter: \texttt{plot\_flow\_evolution}.}
\label{fig:flow_evolution}
\end{figure}

\paragraph{Compute--accuracy frontier.} Figure~\ref{fig:pareto} plots the step sweep on a linear horizontal axis and shows RMSE only. One step gives the lowest reported value, and the range across all counts is approximately 2.35\% of that value. The figure therefore supports low-step inference under the sweep conditions. It does not show a secondary $\delta_1$ axis or a need for distillation. NFE equals the step count for one Euler realization; an ensemble multiplies this cost. The primary sweep uses one realization and requires one velocity evaluation per Euler step. The supporting 16-step export therefore requires 16 evaluations per image. The shaded band's definition must be recovered before interpreting it as uncertainty. Exporters: \texttt{plot\_timestep\_ablation} / \texttt{plot\_pareto\_frontier}.

\begin{figure}[htbp]
\centering
\includegraphics[width=\linewidth,height=0.80\textheight,keepaspectratio]{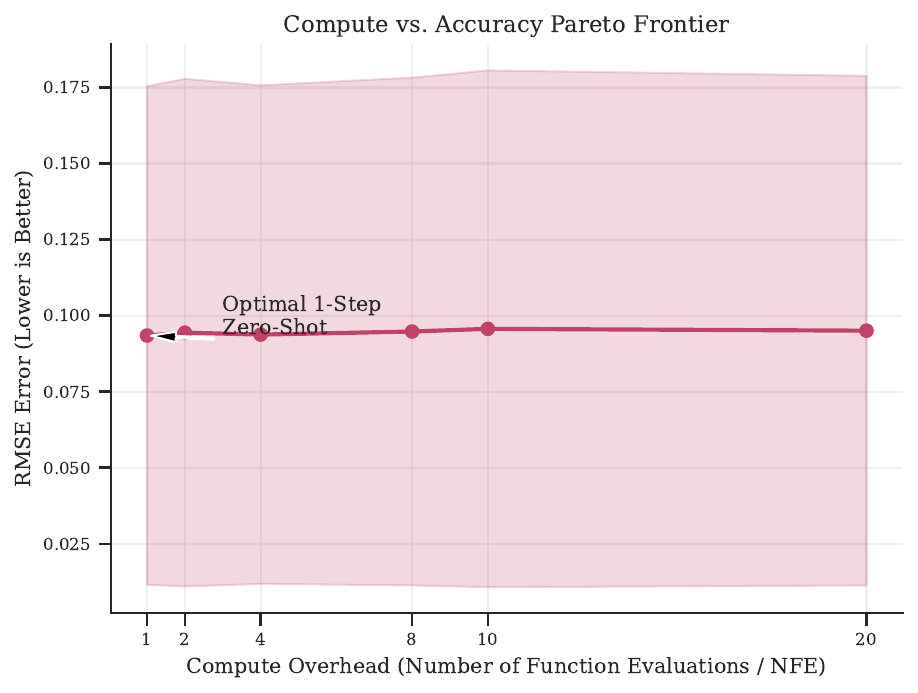}
\caption{\textbf{Archived step-count versus RMSE study.} The horizontal scale is linear, and one step has the lowest RMSE. The original ``zero-shot'' annotation is inconsistent with the reported fine-tuned model and must not be read as a zero-shot experiment. The figure is preserved pending regeneration with corrected terminology, NFE convention, and band definition. Exporter: \texttt{plot\_timestep\_ablation}.}
\label{fig:pareto}
\end{figure}

\FloatBarrier
\subsection{Training Dynamics}
\label{sec:vis_training}

\paragraph{Validation trajectories.} Figure~\ref{fig:convergence} shows three validation metrics over training: RMSE, photo-consistency, and $\delta_1$. It does not show the individual training losses. At the marked step 9476, the displayed values are approximately 0.0737, 0.191, and 21.6\%, respectively. These validation values differ from the final table and should retain their own split/protocol identity. The single-run trajectories show improvement in the monitored quantities, but cannot establish which auxiliary term caused it or provide across-run confidence. The intended logging interface includes \texttt{self.log}, \texttt{val/rmse\_mean}, and \texttt{val/photo\_consistency\_mean}; individual \texttt{train/loss\_*} traces would be needed for a component-level analysis. Exporter: \texttt{plot\_convergence\_curves}.

\begin{figure}[htbp]
\centering
\includegraphics[width=\linewidth,height=0.80\textheight,keepaspectratio]{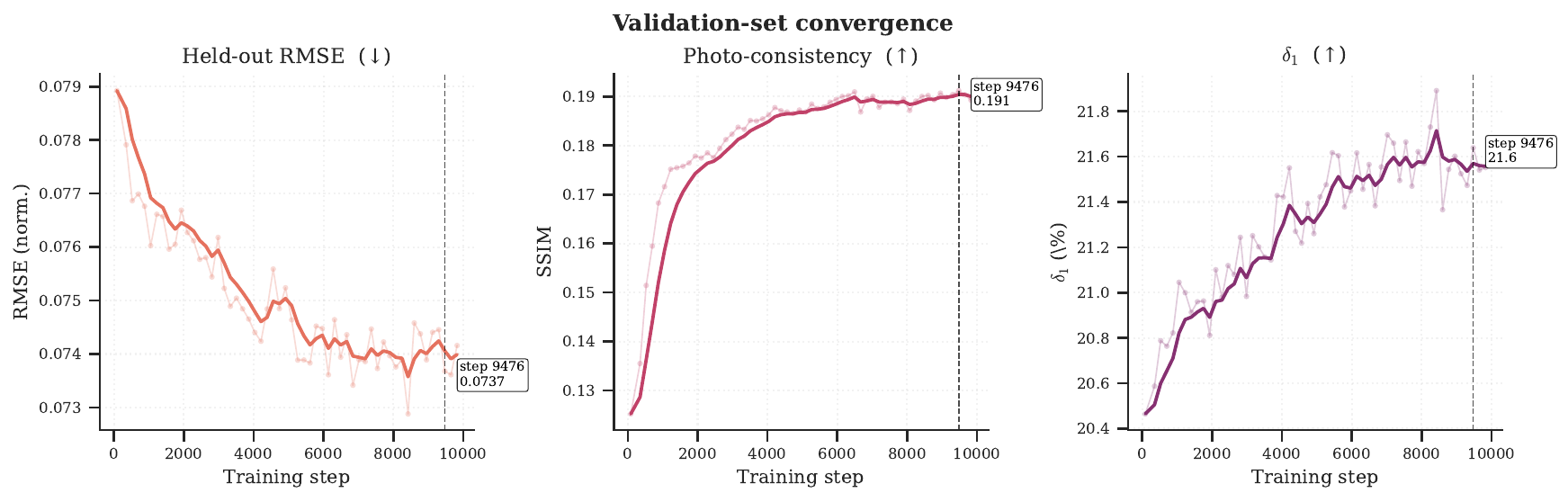}
\caption{\textbf{Validation metrics during one training run.} RMSE, photo-consistency, and $\delta_1$ trajectories, with checkpoint 9476 marked. These are validation metrics, not eight component-loss curves. Their improvement does not isolate a curriculum or individual objective contribution. Exporter: \texttt{plot\_convergence\_curves}.}
\label{fig:convergence}
\end{figure}

\paragraph{Learnable Lunar--Lambert weight.} The blend $L=\sigma(\ell_{LL})$ starts at 0.5 and remains close to that value, reaching approximately 0.490 at the marked checkpoint in Figure~\ref{fig:ll_weight}. The plot shows a small learned adjustment, not a drift to 0.3--0.4 or a measured regolith property. Although empirical mixture models are motivated by planetary photometry~\cite{McEwen1991PhotometricApplications,mcewen1996reflectance,Hapke2012TheorySpectroscopy}, a shared learned coefficient also reflects geometry errors, normalization, and the chosen objective. Its trajectory alone cannot establish loss identifiability or good conditioning. Logged as \texttt{train/lunar\_lambert\_weight}.

\begin{figure}[htbp]
\centering
\includegraphics[width=0.85\linewidth,height=0.80\textheight,keepaspectratio]{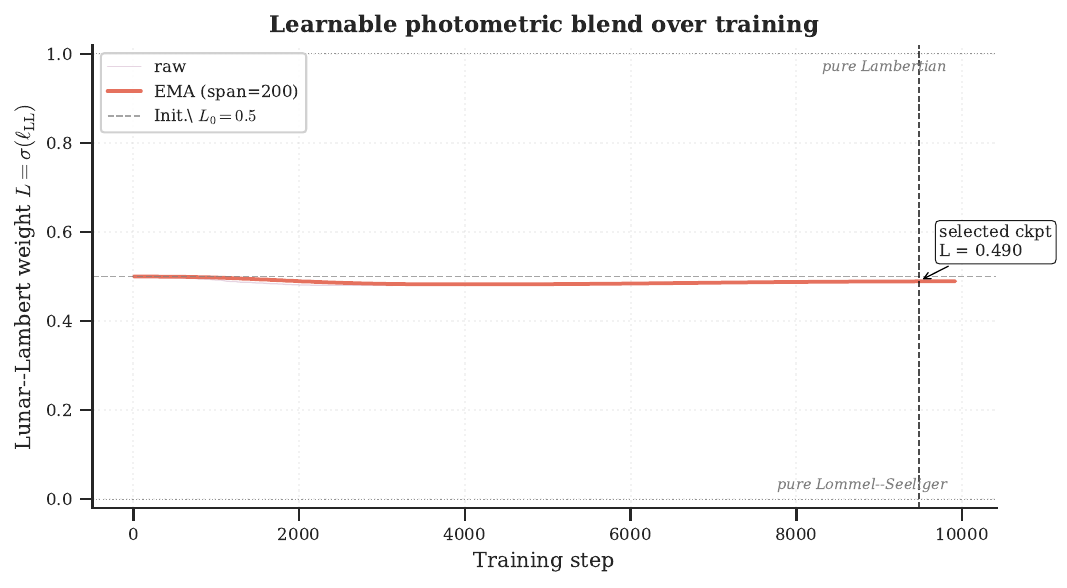}
\caption{\textbf{Learned blend trajectory.} $L$ remains near its 0.5 initialization and is approximately 0.490 at the selected checkpoint. The trace shows modest optimization change; it does not establish a physical reflectance measurement or a global conditioning guarantee.}
\label{fig:ll_weight}
\end{figure}

\FloatBarrier
\subsection{Sampling Variability and Uncertainty}
\label{sec:vis_uncertainty}

Noise-augmented inference produces multiple predictions from fixed network weights. Figure~\ref{fig:uncertainty} is an archived eight-realization example at step 9827, distinct from the single-realization main evaluation. Its variance is elevated in broad border regions and lower in much of the center; it is not confined to shadow boundaries. This measures sampling variability under the chosen source perturbations and averaging convention. It is not automatically epistemic uncertainty or calibrated error probability. Calibration requires direct error--variance analysis or coverage tests on held-out reference data. Exporter: \texttt{plot\_uncertainty\_map}.

\begin{figure}[htbp]
\centering
\includegraphics[width=\linewidth,height=0.80\textheight,keepaspectratio]{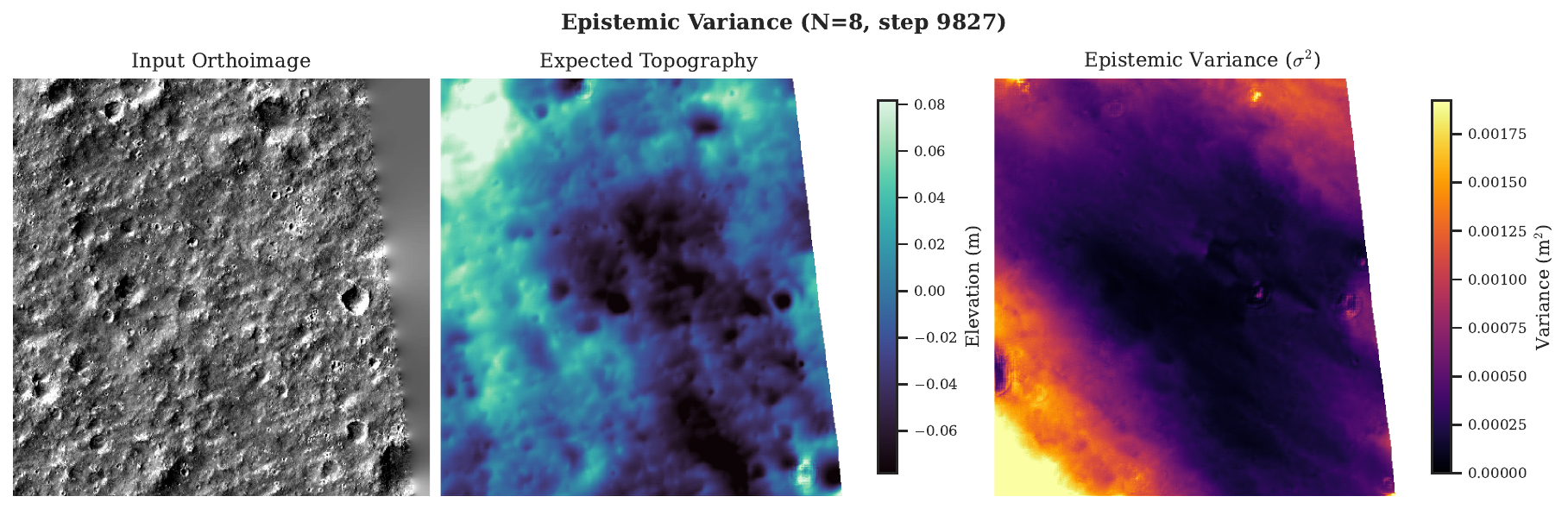}
\caption{\textbf{Sampling variability, $N=8$, step 9827.} Input orthoimage, ensemble-mean relief, and pixelwise variance. The original title uses ``epistemic'' terminology, but fixed-weight sample variance alone does not establish that interpretation or calibration. The recorded normalized-relief pipeline does not justify the displayed physical variance units. Exporter: \texttt{plot\_uncertainty\_map}.}
\label{fig:uncertainty}
\end{figure}

\FloatBarrier
\subsection{Summary}
\label{sec:vis_summary}

The preserved figure suite supports four complementary observations:

\begin{itemize}
\item \textbf{Relief correspondence is partial} (Sections~\ref{sec:vis_qualitative} and~\ref{sec:vis_geometry}): broad structure is present, while profiles, normals, and scatter expose smoothing and amplitude mismatch.
\item \textbf{Shading and spectra reveal additional behavior} (Sections~\ref{sec:vis_frequency} and~\ref{sec:vis_photo}): neither a similar spectral slope nor an image-like render independently proves recovered fine-scale terrain.
\item \textbf{The sweep supports low-step inference} (Section~\ref{sec:vis_flow}): endpoint RMSE changes little from one to twenty steps, with one step lowest in the archived comparison.
\item \textbf{The diagnostics identify validation priorities} (Sections~\ref{sec:vis_distribution} and~\ref{sec:vis_uncertainty}): patch variability, unmatched edges, and ensemble spread motivate grouped evaluation, failure analysis, and uncertainty calibration.

\end{itemize}

Together with Table~\ref{tab:metrics_main}, these panels define the evidence available for MarsFM's current behavior and retain examples that challenge a uniformly positive interpretation. Their value is reproducibility and diagnosis: they show what must improve, what requires corrected units or provenance, and which future experiments could test the proposed benefit of shading regularization.

\end{document}